\documentclass[10pt]{article}
\usepackage[margin=1in,lmargin=1.5in,rmargin=1.5in]{geometry}
\usepackage{amssymb,amsthm}
\usepackage{amsmath}
\usepackage{pgfplots}
\usepackage{color}
\usepackage{enumitem}
\usepackage{caption}
\usepackage{subcaption}
\usepackage{caption,tabularx,booktabs}
\usepackage{microtype}
\usepackage{array}
\usepackage{placeins}
\usepackage{changepage}

\usepackage{mathtools}
\usepackage[colorlinks,citecolor=black,linkcolor=black]{hyperref}
\usepackage{wrapfig}
\usepackage{mathabx}

\usepackage{amsmath}
\usepackage{amssymb}
\usepackage{mathtools}
\usepackage{amsthm}
\usepackage{thmtools}
\usepackage{thm-restate}
\usepackage{etoc}

\usepackage[
backref=true,
natbib=true,
style=numeric-comp,
giveninits=false, 
maxbibnames = 99, 
maxalphanames=8,
sorting=none
]{biblatex}
\usepackage{listings}
\usepackage{xcolor}

\definecolor{codebg}{HTML}{F7F7F7}
\definecolor{codekw}{HTML}{1F4E79}
\definecolor{codestr}{HTML}{8B0000}
\definecolor{codecmt}{HTML}{6A737D}

\lstdefinestyle{pythonbox}{
  language=Python,
  basicstyle=\ttfamily\small,
  keywordstyle=\color{codekw}\bfseries,
  stringstyle=\color{codestr},
  commentstyle=\color{codecmt}\itshape,
  backgroundcolor=\color{codebg},
  frame=single,
  rulecolor=\color{black!30},
  framesep=4pt,
  showstringspaces=false,
  breaklines=true,
  aboveskip=8pt,
  belowskip=8pt,
}

\usepackage{algorithm}
\usepackage{algorithmic}

\usepackage[capitalize,noabbrev]{cleveref}

\theoremstyle{plain}
\newtheorem{theorem}{Theorem}[section]
\newtheorem{proposition}[theorem]{Proposition}
\newtheorem{lemma}[theorem]{Lemma}

\theoremstyle{definition}
\newtheorem{definition}[theorem]{Definition}

\newtheorem{assumption}[theorem]{Assumption}
\theoremstyle{remark}
\newtheorem{remark}[theorem]{Remark}

\crefname{assumption}{Assumption}{Assumptions}
\Crefname{assumption}{Assumption}{Assumptions}

\newcommand{\RR}{\mathbb{R}}

\def\eqref#1{equation~\ref{#1}}

\def\1{\bm{1}}

\DeclareMathAlphabet{\mathsfit}{\encodingdefault}{\sfdefault}{m}{sl}
\SetMathAlphabet{\mathsfit}{bold}{\encodingdefault}{\sfdefault}{bx}{n}

\def\gF{{\mathcal{F}}}

\def\gN{{\mathcal{N}}}
\def\gO{{\mathcal{O}}}

\newcommand{\E}{\mathbb{E}}

\newcommand{\R}{\mathbb{R}}

\renewcommand{\epsilon}{\varepsilon}

\DeclareMathOperator{\msign}{\text{msign}}

\DeclareMathOperator{\op}{op}

\DeclareMathOperator{\clip}{\text{clip}}

\DeclareMathOperator*{\textmin}{\textnormal{min}}
\renewcommand{\min}{\textmin}

\global\long\def\norm#1{\|#1\|}%
\global\long\def\inner#1#2{\langle#1,#2\rangle}%

\begin{document}
\title{Can Entry-Wise Clipping Give Spectral Control of Stochastic Gradients?}
\author{Zitao Song\thanks{Department of Computer Science, Purdue University, West Lafayette, IN, USA. Correspondence to: \href{mailto:song903@purdue.edu}{song903@purdue.edu}, \href{mailto:dgleich@purdue.edu}{dgleich@purdue.edu} } \and Cedar Site Bai\footnotemark[1] \and Zhe Zhang\thanks{School of Industrial Engineering, Purdue University, West Lafayette, IN, USA} \and Brian Bullins\footnotemark[1]  \and David F. Gleich\footnotemark[1]}


\date{}

\maketitle

\begin{abstract}

Training instabilities such as loss spikes are frequently the result of stochastic gradient noise. Because of rare expressions in language training data, and multiple layer composition, the noise impact is heavy-tailed and survives mini-batch averaging. Existing remedies trade off structure against cost: vector-norm clipping ignores the matrix structure of weight updates, while spectral normalization (e.g., Muon \citep{jordan2024muon}) respects it at additional cost. We show that this trade-off can be balanced. Real gradient noise appears to be similar to entry-wise heavy-tailed contamination, and a first-order perturbation analysis reveals a \emph{localization} property of such noise, under which a simple entry-wise method achieves spectral control.  Exploiting this, we derive a tractable surrogate for the Bayes-optimal entry-wise estimator under a Gaussian signal prior. We establish $\gO(\epsilon^{-4})$ convergence guarantee under Cauchy-contaminated noise. Empirically, we find that smooth shrinkage improves Adam on NanoGPT pretraining, saving ${\sim}7\%$ of training tokens. We further find that applying the entry-wise clipping before spectral normalization yields a ${\sim}2\%$ token saving on top of Muon.
\end{abstract}

\section{Introduction}
\label{sec:intro}

%
%

A growing body of evidence shows that stochastic gradient noise is markedly heavy-tailed, with extreme entries occurring far more often than any Gaussian model would predict \citep{simsekli2019tail,gurbuzbalaban2021heavy}. In Large Language Model (LLM) pretraining \citep{grattafiori2024llama,team2025kimi}, these heavy tails are further amplified by model depth and scale, producing loss spikes and training instabilities \citep{wortsman2023small,huang2025spam}. To address this, global \citep{pascanu2013difficulty} and coordinate-wise \citep{zhang2020adaptive} gradient clipping truncate updates under vector-norm constraints, but neither controls the spectrum of matrix-valued weight updates.

Spectral methods \citep{carlson2015stochastic,jordan2024muon} offer a powerful alternative for constraining matrix-valued weight norms \citep{miyato2018spectral} and have shown success in LLM pretraining \citep{wen2025fantastic}. The Muon optimizer \citep{jordan2024muon,bernstein2024old} can be viewed as uniformly setting all singular values to one, and SPECTRA \citep{jiang2026enhancing} generalizes this to spectral clipping under a random low-rank perturbation model, matching the performance of spectral normalization. However, spectral operations are computationally expensive, typically requiring SVD or Newton-Schulz iterations. This raises a natural question:
\begin{quote}
\itshape
Can entry-wise clipping function as a cheap surrogate for spectral clipping in controlling the spectrum of matrix updates?
\end{quote}

We show that the answer is yes under a heavy-tailed contamination model, but \emph{not} under Gaussian or random low-rank noise with bounded second moment. The distinguishing property is \emph{localization}: in real matrix-valued gradient noise, a small number of entries dominate the spectrum, whereas in Gaussian or random low-rank perturbations, no single entry can significantly perturb the spectrum. We formalize localization through the \emph{normalized localization ratio} $\widehat{R}(E)$ of a noise matrix $E$, a metric derived from a first-order perturbation analysis of the top singular value. This metric identifies two regimes in which entry-wise magnitudes directly affect the perturbed spectrum, making entry-wise clipping a viable instrument for spectral control. 

In particular, we observe that real stochastic noise from a transformer layer during large language model pretraining  (Panels~(a) and~(e) in \cref{fig:noise_models}) displays this localization phenomenon and resembles entry-wise heavy-tailed contamination (Panels~(d) and~(h)). Exploiting the entry-wise heavy-tailed structure, we derive \emph{smooth shrinkage}, $\varphi(x) = xe^{-|x|/\tau}$, as a tractable surrogate for the Bayes-optimal entry-wise estimator that minimizes the first-order spectral perturbation under a Gaussian prior on the signal.

\textbf{Contributions.} Our contributions are threefold:

\emph{Localization metric and noise model.} Through a first-order perturbation analysis of the top singular value (\Cref{thm:taylor}), we introduce the normalized localization ratio $\widehat{R}(E)$ (\Cref{def:localization}) that quantifies how the largest entries of a noise matrix $E$ align with the perturbation direction of the top singular value. We identify two regimes --- $\widehat{R}(E) \gg 1$ and $\widehat{R}(E) \ll 1$ --- in which entry-wise quantities can control the spectral perturbation. We verify that a heavy-tail contamination model (\Cref{def:contamination}), which reproduces the empirical heavy-tailed structure of gradient noise, falls into the $\widehat{R}(E) \gg 1$ localized regime.

\emph{Smooth shrinkage clipping.} Under a Gaussian signal prior, we derive smooth shrinkage as a tractable surrogate for the Bayes-optimal entry-wise estimator that minimizes spectral perturbation in the localized regime (\Cref{thm:main}). Empirically, smooth shrinkage serves as an efficient substitute for spectral clipping: applied directly to the optimizer update (\emph{post-clipping}, \Cref{sec:prelim_clipping_stages}), it improves Adam by saving ${\sim}7\%$ of training tokens; placed before spectral normalization (\emph{pre-clipping}, \Cref{sec:prelim_clipping_stages}), it also recovers subspace corrupted by localized spikes and improves Muon by saving ${\sim}2\%$ of tokens.

\emph{Convergence theory.} Under a known Cauchy-contaminated noise model, we establish convergence guarantees for stochastic gradient methods that apply either smooth shrinkage or hard coordinate-wise clipping at each of the two clipping stages: \emph{post-clipping} and \emph{pre-clipping}. In both cases, we prove an $\gO(\epsilon^{-4})$ complexity bound for finding an $\epsilon$-stationary point, where the heavy-tail parameter enters only the complexity constants and not the exponent (\Cref{thm:post-hard-main,thm:post-smooth-main,thm:pre-hard-main,thm:pre-smooth-main}).

\textbf{Organization.}
\Cref{sec:prelim} reviews preliminaries and related work on clipping methods.
\Cref{sec:noise_model} introduces the entry-wise contamination model and analyzes its localization phenomenon.
\Cref{sec:operator} exploits this localization to derive a new clipping operator, whose theoretical and empirical properties are established in \Cref{sec:convergence,sec:experiments}.
Additional related work appears in \Cref{app:relate_work}.

\begin{figure}
    \centering
    \includegraphics[width=0.98\linewidth]{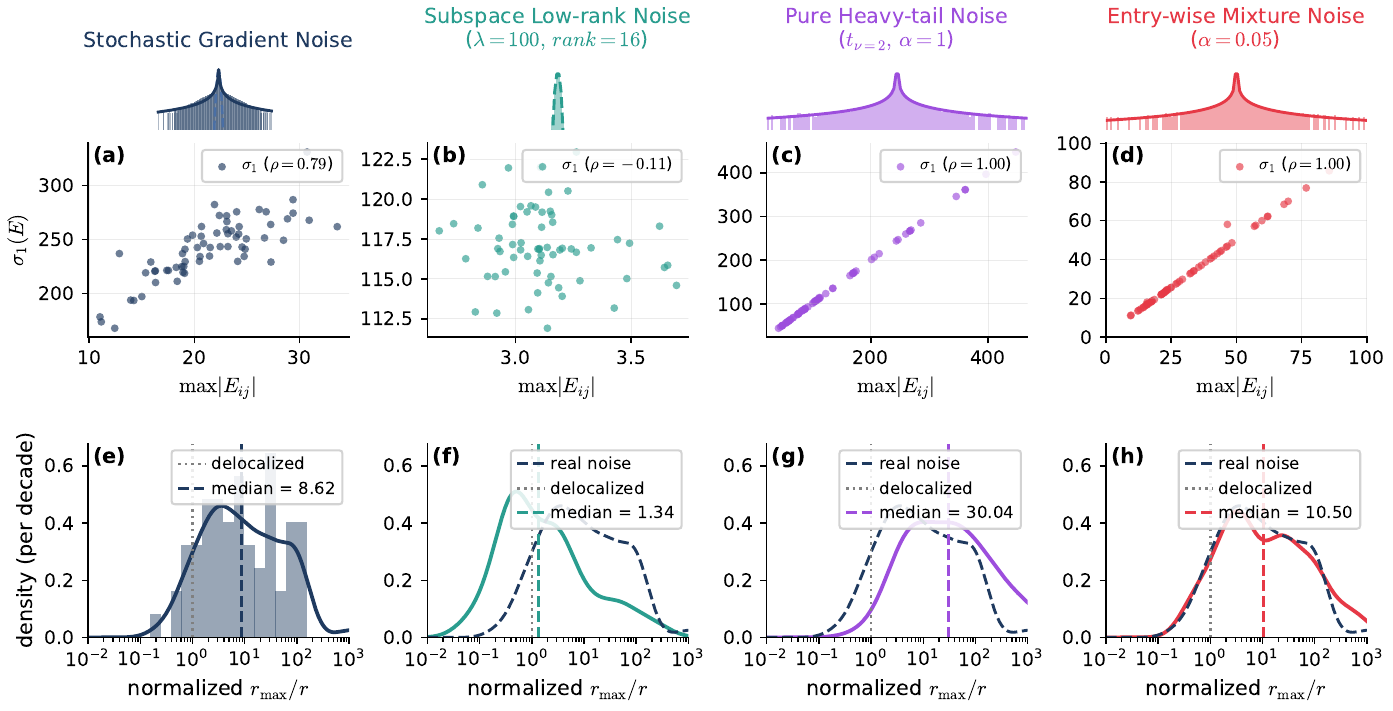}
\caption{%
\textbf{Entry-wise sparse heavy-tailed noise reproduces the spectral spikes observed in real stochastic gradients.}
The columns correspond to four noise models $E$, and the rows correspond to two diagnostics.
\textit{Real (a, e):} minibatch gradient noise from a GPT-2 layer (\texttt{blocks.10.attn.qkv\_w.v}).
\textit{Subspace low-rank (b, f):} $E = \lambda \sum_{r=1}^{K} u_r v_r^T$ with $\lambda{=}100$, $K{=}16$, and $u_r, v_r$ drawn uniformly from the unit sphere.
\textit{Pure heavy-tailed (c, g):} entry-wise noise from~\Cref{def:contamination} with $\alpha{=}1$ and $H \sim t_2$.
\textit{Entry-wise mixture (d, h):} entry-wise noise from~\Cref{def:contamination} with $\alpha{=}0.05$ and the same heavy-tailed component $H$ as the previous column.
All statistics are computed over $64$ independent realizations.
\textit{Top strip:} marginal distribution of entries, normalized by each column's empirical standard deviation.
\textit{Spectral--max scatter (a--d):} $\max_{i,j}|E_{ij}|$ versus $\sigma_1(E)$, with Spearman $\rho$ reported in each legend. A high $\rho$ means the top singular value is driven by the largest entry.
\textit{Localization ratio (e--h):} distribution of $\widehat{R} = R / \mathrm{median}[R_{\text{Gauss}}]$ from \cref{def:localization}; $\widehat{R} \approx 1$ indicates $E$ is indistinguishable from i.i.d.\ Gaussian noise, while $\widehat{R} \gg 1$ indicates that a single entry of $E$ carries a significant share of the leading-order spectral perturbation.
}
    \label{fig:noise_models}
\end{figure}

\section{Preliminaries}
\label{sec:prelim}

\textbf{Notation.} For $A \in \RR^{m \times n}$, $\sigma_i(A)$ denotes the $i$-th largest singular value with corresponding left/right singular vectors $u_i(A), v_i(A)$. We write $\|A\|_{\op}, \|A\|_F, \|A\|_*$ for the spectral, Frobenius, and nuclear norms, $\|A\|_{\infty} := \max_{ij}|A_{ij}|$ for the entry-wise max, and $\inner{A}{B} := \sum_{ij} A_{ij} B_{ij}$ for the Frobenius inner product. We write SVD as $A = U \Sigma V^T$ and $\msign(A) := U V^T$.

\subsection{Clipping Methods for Gradient Updates}
\label{sec:prelim_clipping_methods}
For a matrix $G \in \R^{m \times n}$ and a threshold $c > 0$, we call a nonlinear map $\text{clip}_c(\cdot) : \R^{m \times n} \to \R^{m \times n}$ a \emph{clipping map} if it controls the magnitude of gradient $G$ below $c$ under some norm. We briefly review three commonly used clipping maps.

\noindent\textbf{Coordinate-wise clipping} applies a scalar map independently to each entry of $G$:
\begin{equation}
\label{eq:coor_clip}
    \text{clip}_c^{\text{cw}}(G)_{ij} = \min\!\left(1,\, c/|G_{ij}|\right) G_{ij}.
\end{equation}
This guarantees $\|\text{clip}_c^{\text{cw}}(G)\|_\infty \le c$, but absent additional structural assumptions on $G$, it offers no direct control over the spectral norm $\|\text{clip}_c^{\text{cw}}(G)\|_{\text{op}}$.

\noindent\textbf{Global clipping} \citep{pascanu2013difficulty} rescales the entire matrix whenever its Frobenius norm exceeds the threshold:
\begin{equation}
    \text{clip}_c^{\text{g}}(G) = \min\!\left(1,\, c/\|G\|_F\right) G.
\end{equation}
This ensures $\|\text{clip}_c^{\text{g}}(G)\|_F \le c$, and hence $\|\text{clip}_c^{\text{g}}(G)\|_2 \le c$, but it scales all entries uniformly, suppressing informative signal components together with noise.

\noindent\textbf{Exact spectral clipping} truncates each singular value at the threshold $c$ given the SVD $G = U \Sigma V^T$:
\begin{equation}
    \text{clip}_c^{\text{sp}}(G) = \sum_i \min(\sigma_i, c)\, u_i v_i^{T}.
\end{equation}
This yields exact spectral-norm control, $\|\text{clip}_c^{\text{sp}}(G)\|_2 \le c$, while preserving the singular-vector structure of $G$. However, computing the full SVD costs $O(mn \min(m,n))$. Recent work \citep{newhouse2025training,jiang2026enhancing} avoids an explicit SVD in spectral clipping via GPU-friendly \emph{Newton--Schulz iterations} in BFloat16, but each iteration still incurs $O(mn \min(m,n))$ cost.

\subsection{Stages for Gradient Clipping}
\label{sec:prelim_clipping_stages}
Given iterates $X_k \in \R^{m \times n}$ and a clipping map $\text{clip}_c(\cdot)$ from \cref{sec:prelim_clipping_methods}, we categorize clipping methods in stochastic optimization into two groups based on the stage at which they are applied.

\paragraph{Post-clipping.} We call $\text{clip}_c(\cdot)$ a \emph{post-clipping} step when it is applied directly to a weight update $U_k$ computed by a specific optimizer using stochastic gradient descent, yielding
\begin{equation}
\label{eq:post_clipping}
    X_{k+1} = X_{k} - \text{clip}_c(U_k).
\end{equation}
When $U_k$ is the stochastic gradient itself, this recovers Clipped SGD \citep{zhang2020adaptive}; when $U_k$ is the Adam update, it recovers Post-Spectral Clipping \citep{jiang2026enhancing}. In the post-clipping regime, Clipped SGD and its variants are known to converge both in expectation \citep{zhang2020adaptive} and with high probability \citep{cutkosky2021high,nguyen2023improved}, even when the gradient variance is unbounded. Without post-clipping, standard SGD analyses break down in the heavy-tailed gradient noise regime \citep{zhang2020adaptive}.

\paragraph{Pre-clipping.} We call $\text{clip}_c(\cdot)$ a \emph{pre-clipping} step when it is applied before the spectral normalization step, popularized by Muon \citep{jordan2024muon}, yielding
\begin{equation}
\label{eq:pre_clipping}
    X_{k+1} = X_{k} - \msign\!\left(\clip_c(M_k)\right),
\end{equation}
where $M_k$ can be a momentum buffer of the stochastic gradient $\tilde{G}$ or the stochastic gradient itself. Originally,  the pre-clipping map was introduced in \citep{si2025adamuon} and \citep{he2025root} to stabilize the gradient accumulation before $\msign$. While $\text{msign}$ bounds the final update norm, the raw mini-batch average fed into $\text{msign}$ can still be dominated by a single extreme heavy-tailed realization; as shown in \cref{sec:noise_model} and \cref{sec:experiments}, such outliers induce spectral spikes that corrupt and rotate the gradient matrix toward an outlier-driven subspace. Pre-clipping can mitigate this and help subspace identification by suppressing coordinate-induced spectral spikes prior to spectral normalization, yielding a signal-direction-preserving proxy for the true batch gradient.

\section{Heavy-tailed Entries Can Drive Spectral Inflation}
\label{sec:noise_model}
Let $G = \nabla f(X) \in \RR^{m \times n}$ denote the true matrix gradient and $\tilde{G} = G + E$ denote the stochastic counterpart. In this section we compare two candidate models for the noise term $E$ (\cref{sec:noise_models}) and ask under what conditions an entry-wise perturbation of $G$ can meaningfully alter its spectrum. Through a \emph{localization} metric (\cref{sec:localization}), we show that a heavy-tailed mixture noise model induces spectral spikes whose effect on the top singular value can be controlled by entry-wise error minimization (\cref{sec:loc_to_mse}). We empirically verify that real stochastic gradient noise exhibits the same heavy-tailed, localized structure (\cref{fig:noise_models}).

\subsection{Subspace Perturbation vs. Entry-wise Contamination}
\label{sec:noise_models}

A recent work \citep{jiang2026enhancing} models the gradient noise $E$ as a rank-$r$ subspace perturbation (\cref{def:subspace}) with random orthonormal vectors $U_r, V_r$. Such a perturbation alters the spectrum of $G$ by injecting a discrete cluster of spikes at the top of its singular value distribution.

\begin{definition}[Subspace perturbation \citep{peche2006largest}]
\label{def:subspace}
The noise is a rank-$r$ matrix $E = \lambda\, U_r V_r^T$, where $\lambda > 0$ controls the spike magnitude and $U_r \in \RR^{m \times r}$, $V_r \in \RR^{n \times r}$ have orthonormal columns.
\end{definition}
Under the bounded second-moment assumption $\E[U_rU_r^{T}] \preceq cI_m$ \citep{jiang2026enhancing}, however, the Gaussian-like low-rank model is poorly aligned with the entry-wise heavy-tailed behavior of stochastic gradient noise: real gradients do not exhibit the exact Gaussian bulk structures that this model implies. As shown in the top strip of \cref{fig:noise_models}, the entries of real gradients display heavy-tailed outliers that extend beyond the Gaussian bulk, a structure that low-rank perturbations with bounded second moment cannot reproduce. This motivates the entry-wise mixture model in \cref{def:contamination}, which combines a heavy-tailed component with the Gaussian bulk to capture this effect.

\begin{definition}[Entry-wise contamination]
\label{def:contamination}
Each entry of $E$ is drawn i.i.d.\ from Huber's contamination model \citep{huber1992robust}:
\begin{equation}
    E_{ij} \overset{\text{iid}}{\sim} (1-\alpha)\,\gN(0, \sigma^2) + \alpha\, H,
    \qquad \alpha \in [0,1],
    \label{eq:noise}
\end{equation}
where the contaminating distribution $H$ is restricted to a symmetric heavy-tailed family (e.g., $\text{Cauchy}(0, \gamma)$ or Student's $t_\nu$).
\end{definition}

Heavy-tailed stochastic gradient noise has been studied by \citet{simsekli2019tail} and \citet{zhang2020adaptive}. In the large-scale language setting, it is further reinforced by the heavy-tailed nature of natural language itself, as captured by Zipf's law \citep{piantadosi2014zipf,zipf2016human}. However, whether entry-wise noise of this form in the matrix updates can produce spectral spikes analogous to those of the subspace perturbation model remains unknown.

\subsection{From First-order Perturbation to a Localization Metric}
\label{sec:localization}

To understand when the entry-wise error inflates the spectrum, we first state a result based on first-order perturbation theory. (We were unable to find this specific result stated anywhere, but believe it to be fairly classical. It's similar to results from \citet{magnus1985differentiating,kato1966perturbation}.)

\begin{restatable}[First-order expansion of $\sigma_1$]{theorem}{thmtaylor}
\label{thm:taylor}
Let $G \in \RR^{m \times n}$ have a simple top singular value
$\sigma_1(G) > \sigma_2(G)$, with corresponding left and right singular
vectors $u_1$ and $v_1$, and let $\delta := \sigma_1(G) - \sigma_2(G)$
denote the spectral gap. There exists a universal constant $C > 0$ such
that for every perturbation $E \in \RR^{m \times n}$ with $\|E\|_{op} < \delta/4$,
\begin{equation}
  \sigma_1(G + E) \;=\; \sigma_1(G) \;+\; \sum_{ij} (u_1)_i (v_1)_j\, E_{ij}
  \;+\; \theta, \qquad |\theta| \le C\,\frac{\|E\|_{op}^2}{\delta}.
  \label{eq:taylor}
\end{equation}
\end{restatable}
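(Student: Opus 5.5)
Write $\sigma_1 := \sigma_1(G)$, $\hat\sigma := \sigma_1(G+E)$, $\varepsilon := \|E\|_{op}$ and $\ell := u_1^T E v_1 = \sum_{ij}(u_1)_i(v_1)_j E_{ij}$. The plan is a two-sided sandwich: a variational lower bound, plus an upper bound obtained by controlling how far the perturbed top singular vectors can rotate away from $u_1,v_1$. Both sides should come out as $\sigma_1+\ell$ up to $O(\varepsilon^2/\delta)$.

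\emph{Lower bound.} By the variational characterization $\sigma_1(A)=\max_{\|x\|=\|y\|=1} x^T A y$, taking $x=u_1$, $y=v_1$ gives
\[
\hat\sigma \ge u_1^T(G+E)v_1 = \sigma_1+\ell .
\]
So $\theta\ge 0$, and this side needs no work.

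\emph{Upper bound.} I would work with the Hermitian dilation
\[
\mathcal{G}=\begin{pmatrix}0&G\\ G^T&0\end{pmatrix},\qquad
\mathcal{E}=\begin{pmatrix}0&E\\ E^T&0\end{pmatrix}.
\]
Its top eigenvalue is $\sigma_1$, with unit eigenvector $w=(u_1,v_1)/\sqrt2$. The second eigenvalue is $\sigma_2$, so the eigengap is still $\delta$. We have $\|\mathcal E\|=\varepsilon$ and $w^T\mathcal E w=\ell$, which reduces the problem to a symmetric eigenvalue perturbation. Let $\hat w$ be the top eigenvector of $\mathcal G+\mathcal E$ and decompose $\hat w = c\,w + s\,z$ with $z\perp w$ a unit vector and $c^2+s^2=1$. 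Then
\[
\hat\sigma=\hat w^T(\mathcal G+\mathcal E)\hat w
= c^2(\sigma_1+\ell) + s^2 z^T(\mathcal G+\mathcal E)z + 2cs\, z^T\mathcal E w,
\]
using that $w$ is an eigenvector of $\mathcal G$, so $z^T\mathcal G w=0$. We also have $z^T\mathcal G z\le\sigma_2$ on $w^\perp$. Writing $\sigma_1-\sigma_2=\delta$, this gives
\[
\hat\sigma \le \sigma_1+\ell - s^2(\delta - 2\varepsilon) + 2|s|\varepsilon,
\]
where the $-s^2(\delta-2\varepsilon)$ term absorbs $-s^2\ell$ and $s^2 z^T\mathcal E z$, both at most $s^2\varepsilon$ in size. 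Maximizing $-a s^2 + 2\varepsilon |s|$ over $s$ with $a=\delta-2\varepsilon\ge\delta/2$ gives at most $\varepsilon^2/a\le 2\varepsilon^2/\delta$. This yields $\theta\le 2\varepsilon^2/\delta$, so $C=2$ works, and the hypothesis $\varepsilon<\delta/4$ is what guarantees $a>0$ with room to spare. A Davis--Kahan bound on $|s|$ is therefore not even needed, since the quadratic optimization in $s$ handles it directly.

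\emph{Main obstacle.} The delicate step is making sure the dilation's eigenvalue structure really has gap $\delta$ between $\sigma_1$ and the rest of the spectrum on $w^\perp$. The dilation's spectrum is $\{\pm\sigma_i\}$ together with zeros, so all of it other than $\sigma_1$ lies at or below $\sigma_2$. For the same reason $\lambda_{\max}(\mathcal G+\mathcal E)=\sigma_1(G+E)$, which is what lets us equate $\hat\sigma$ with $\hat w^T(\mathcal G+\mathcal E)\hat w$. I would verify these two facts carefully, checking the $m\ne n$ case where extra zero eigenvalues appear (they are harmless since $0\le\sigma_2$). The remaining bookkeeping of constants is routine.
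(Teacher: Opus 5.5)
Your proof is correct, and it takes a different route from the paper's. Both start from the Hermitian dilation. The paper then treats $t\mapsto\lambda_1(A+tB)$ analytically:
\begin{itemize}
\item the implicit function theorem, plus a connectedness argument along the segment with Weyl's inequality keeping the gap at least $\delta-2t\|E\|_{op}$, gives a smooth eigenpair $(f(t),w(t))$;
\item it computes $f'(t)=w(t)^TBw(t)$ and $f''(t)=2w(t)^TBw'(t)$;
\item it bounds $\|w'(t)\|\le\|E\|_{op}/(\delta-2\|E\|_{op})$ through the inverse of $A+tB-f(t)I$ restricted to $w(t)^\perp$;
\item it closes with Taylor's theorem with Lagrange remainder.
\end{itemize}

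Your sandwich replaces all of this with two Rayleigh-quotient estimates. The lower bound is just the variational characterization of $\sigma_1$, i.e.\ its convexity. The upper bound is a one-variable quadratic maximization in the rotation amplitude $s$.

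What your route gains:
\begin{itemize}
\item It is fully elementary: there is no differentiability of eigenvector branches and no IFT or connectedness step to justify.
\item It gives the sign information $\theta\ge 0$, which the paper does not record.
\item It gives the explicit bound $0\le\theta\le\|E\|_{op}^2/(\delta-2\|E\|_{op})$, hence $C=2$ under the stated hypothesis. This is the same quantitative bound the paper's $f''$ estimate yields once the factor $\tfrac12$ in the remainder is kept; the paper states the looser $C=4$.
\end{itemize}

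What the paper's route gains: the path identity $f'(t)=w(t)^TBw(t)$ and the eigenvector-derivative bound are local statements. They never use that $\sigma_1$ is a maximum, so they carry over essentially verbatim to any simple interior singular value $\sigma_k$, which the paper alludes to after the theorem, and to higher-order expansions. Your argument relies specifically on $\sigma_1$ sitting at the top of the spectrum, in two places:
\begin{itemize}
\item the lower bound from the single test pair $(u_1,v_1)$;
\item the estimate $z^T\mathcal G z\le\sigma_2$ on $w^\perp$.
\end{itemize}
Extending your method to interior singular values would require a genuine min--max argument instead.
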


\begin{table}[t]
\centering

\small
\caption{\small \textbf{Localization ratio (\cref{def:localization}) under three noise models.}
The Gaussian noise has entries drawn i.i.d.\ from $\gN(0, \sigma^2)$;
the entry-wise spike noise has a single nonzero entry at location $(i', j')$;
the $uv$-aligned low-rank noise adopts the signal $G$'s top singular
directions $u, v$ as its rank-one component. Rates for the Gaussian column are stated up to constants in expectation, and the entry-wise spike and $uv$-aligned columns are deterministic.}
\begin{tabularx}{\textwidth}{l *{3}{>{\centering\arraybackslash}X}}
\toprule
 & Gaussian Noise & Entry-wise Spike Noise & $uv$-Aligned Low-rank Noise \\
 & $(E)_{ij}  \overset{\text{iid}}{\sim}\gN(0,\sigma^2)$ & $E = e_{i'}e_{j'}^{T}$ & $E=cuv^{T}$ \\
\midrule
$r_{\max}(E)$ & $\Theta(\log(mn)/(mn)^2)$ & $\Theta(1/(mn))$  & $\Theta(1/(mn)^2)$  \\
$r(E)$ & $\Theta(1/(mn))$ &  $\Theta(1/(mn))$& $\Theta(1)$ \\
$R(E)$           & $\Theta(\log(mn)/(mn))$ & $\Theta(1)$                    & $\Theta(1/(mn)^2)$ \\
$\widehat{R}(E)$ & $\Theta(1)$     & $\Theta(mn/\log(mn))$  & $\Theta(1/mn\log(mn))$ \\
\bottomrule
\end{tabularx}

\label{tab:localization-regimes}
\end{table}
The above theorem shows that the leading-order change in $\sigma_1$ is a $(u_1)_i(v_1)_j$-weighted sum of the entries of $E$, and it can further be extended to the rest of the singular values under the unique singular value assumption. A single heavily weighted entry of $E$ can therefore dominate the perturbation. This motivates a metric, named as localization ratio, that quantifies how much of $E$'s alignment with $G$'s top singular direction is concentrated in a single entry rather than spread across the matrix.
\begin{definition}[Localization ratio]
\label{def:localization}
Let $G, E \in \R^{m \times n}$, and $u \in \R^m$, $v \in \R^n$ be the top left and right singular directions of $G$. The \emph{localization ratio} of $E$ relative to $G$ is
\begin{equation}
\label{eq:localization_metric}
    R(E) \;:=\; \frac{r_{\max}(E)}{r(E)},
    \qquad
    r_{\max}(E) \;:=\; \frac{\max_{i,j}\,\bigl|u_i\, E_{ij}\, v_j\bigr|^{2}}{\|E\|_{F}^{2}},
    \qquad
    r(E) \;:=\; \frac{\bigl|\langle uv^{T},\,E\rangle\bigr|^{2}}{\|E\|_{F}^{2}}.
\end{equation}
The \emph{normalized localization ratio} is obtained by rescaling $R(E)$ against a standard Gaussian baseline $E_{\text{Gauss}}$ with i.i.d.\ entries:
\begin{equation}
\label{eq:normalized_localization}
\widehat{R}(E) \;:=\; R(E) \,\big/\, \text{median}\!\left[R(E_{\text{Gauss}})\right].
\end{equation}
\end{definition}

Both $r$ and $r_{\max}$ measure how $E$ projects onto $u v^T$ as fractions of $\norm{E}_{F}^2$: $r$ is the full bilinear projection, while $r_{\max}$ is its largest single-entry contribution. Their ratio $R(E)$ therefore captures the fraction of the projection concentrated in a single entry, and normalizing by the Gaussian baseline in $\widehat R(E)$ rescales $R(E)$ to a Gaussian-relative scale where $\widehat{R}(E_{\text{Gauss}})=1$. Under the delocalization assumption of $|u_i v_j| = \Theta(1/\sqrt{mn})$, the behavior of these quantities under three representative noise models is summarized in \Cref{tab:localization-regimes}.

The table makes $\widehat R(E)$ usable as a directional-structure test on $E$. When $\widehat R(E) \ll 1$, $E$'s projection onto $u v^T$ is more significant than the Gaussian baseline, indicating a coherent low-rank component aligned with $G$'s top singular direction. When $\widehat R(E) \gg 1$, a single entry contributes a critical share of the projection, indicating spike-like contamination. When $\widehat R(E) \approx 1$, $E$ is indistinguishable from isotropic Gaussian noise. We call $E$ \emph{localized} in either of the first two cases ($\widehat R(E) \ll 1$ or $\widehat R(E) \gg 1$) and \emph{delocalized} in the third ($\widehat R(E) \approx 1$).


 
\subsection{From Localization to Entry-wise Minimization}
\label{sec:loc_to_mse}
We now show how the localization metric can be leveraged to control the top singular value of the noisy signal through minimizing the entry-wise error.  The proposition below characterizes such optimal entry-wise estimator under an i.i.d.\ Gaussian prior on the clean signal $G$.

\begin{proposition}[First-order spectral control via entrywise MSE]
\label{prop:spectral_to_mse}
Let $\tilde G_{ij} = G_{ij} + E_{ij}$, and $\Delta := \varphi(\tilde G) - G$ for an arbitrary entrywise estimator $\varphi$. When spectral gap $\delta > 4 \norm{\Delta}_{op}$, replacing $E \to \Delta$ in Theorem \ref{thm:taylor} and applying Cauchy--Schwarz to the leading term yields
\begin{equation}
  \E\bigl[\bigl|\sigma_1(\varphi(\tilde G)) - \sigma_1(G)\bigr|^2\bigr]
  \;\le\; \sum_{i,j} \E\bigl[(\varphi(\tilde G_{ij}) - G_{ij})^2\bigr]
        \;+\; o\bigl(\|\Delta\|_{op}^2/\delta\bigr).
\label{eq:spectral_mse}
\end{equation}
If we place an entrywise prior to signal $G$, i.e., $G_{ij} \overset{\text{iid}}{\sim} \gN(0, \sigma_x^2)$, the leading right-hand side is minimized coordinate-wise by $\varphi^\star(y) = \E[\,x \mid y\,]$, where $(x, y)$ denotes the entry pair $(G_{ij}, \tilde G_{ij})$.
\end{proposition}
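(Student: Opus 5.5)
The plan is to apply \Cref{thm:taylor} with $E$ replaced by $\Delta = \varphi(\tilde G) - G$. The hypothesis $\delta > 4\|\Delta\|_{op}$ is exactly the condition $\|\Delta\|_{op} < \delta/4$ required there. This gives
\[
\sigma_1(\varphi(\tilde G)) - \sigma_1(G) = \sum_{ij} (u_1)_i (v_1)_j \Delta_{ij} + \theta, \qquad |\theta| \le C\|\Delta\|_{op}^2/\delta .
\]
The leading term is the Frobenius inner product $\langle u_1 v_1^T, \Delta\rangle$. Because $\|u_1 v_1^T\|_F = \|u_1\|_2\|v_1\|_2 = 1$, Cauchy--Schwarz gives $|\langle u_1 v_1^T, \Delta\rangle|^2 \le \|\Delta\|_F^2 = \sum_{ij}(\varphi(\tilde G_{ij}) - G_{ij})^2$. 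Here $\varphi$ acts entrywise, so $\varphi(\tilde G)_{ij} = \varphi(\tilde G_{ij})$.

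Next I square the expansion and take expectations. Writing $L$ for the leading term, $(L+\theta)^2 = L^2 + 2L\theta + \theta^2$. The term $\theta^2$ is $O(\|\Delta\|_{op}^4/\delta^2)$, which is of smaller order. The cross term I bound by $|L|\le \|\Delta\|_F$ and $|\theta| \le C\|\Delta\|_{op}^2/\delta$; it is of higher order in the perturbation, since it is cubic in $\Delta$. The statement's $o(\|\Delta\|_{op}^2/\delta)$ should be read as a first-order remainder in this sense. I will be explicit that the bound is a leading-order inequality, interpreted in expectation on the event where the gap condition holds.

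This remainder bookkeeping is the main obstacle. Neither $\|\Delta\|_{op}$ nor the gap condition is deterministic, and under heavy-tailed $E$ the expectation of the remainder could be infinite unless $\varphi$ is bounded or shrinking. I will therefore state the inequality conditionally on the gap event, or for $\varphi$ with bounded error, and keep the remainder only as an order statement rather than a sharp constant.

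For the second claim, the leading right-hand side $\sum_{ij}\E[(\varphi(\tilde G_{ij}) - G_{ij})^2]$ separates across coordinates. Under the i.i.d.\ prior $G_{ij}\sim\gN(0,\sigma_x^2)$ and i.i.d.\ noise from \Cref{def:contamination}, independent of $G$, each pair $(G_{ij},\tilde G_{ij})$ has the same law as $(x,y)$. Hence the sum equals $mn\,\E[(\varphi(y)-x)^2]$ for a single scalar function $\varphi$. To minimize it, I use the orthogonality decomposition
\[
\E[(\varphi(y)-x)^2] = \E[(\varphi(y)-\E[x\mid y])^2] + \E[(x - \E[x\mid y])^2],
\]
where the cross term vanishes by the tower property. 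This is valid because $\E[x^2] = \sigma_x^2 < \infty$, so the conditional expectation exists in $L^2$ even though $y$ is heavy-tailed. The decomposition shows that $\varphi^\star(y) = \E[x\mid y]$ is the minimizer, unique almost surely.
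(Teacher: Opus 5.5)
Your proposal is correct and follows the paper's own one-line argument: substitute $\Delta$ into Theorem~\ref{thm:taylor}, bound the bilinear leading term by $\|\Delta\|_F^2$ via Cauchy--Schwarz, and identify the minimizer of the separable entrywise MSE as the posterior mean via the standard $L^2$ orthogonality decomposition. Your remainder bookkeeping is more explicit than the paper's, and two refinements would tighten it. First, use $|u_1^T\Delta v_1|\le\|\Delta\|_{op}$ in the cross term to make that bound dimension-free. Second, handle the random gap condition with an indicator $\mathbf{1}\{\delta>4\|\Delta\|_{op}\}$ on the left-hand side rather than with a conditional expectation, which buys two things. The right-hand side stays the unconditional entrywise MSE, so the Bayes step still applies; under literal conditioning the entries would no longer be i.i.d.\ from the prior. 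The remainder is also automatically bounded on that event, which removes your worry about its expectation being infinite.
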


\begin{remark}
\label{rem:spectra_to_entry}
The bound in Proposition \ref{prop:spectral_to_mse} is approximately saturated when $\widehat{R}(\Delta) \ll 1$ ($\Delta$ concentrates along $u_1 v_1^T$), so entry-wise MSE minimization is essentially equivalent to minimizing the top singular-value perturbation. When $\widehat{R}(\Delta) \gg 1$, the bound is loose, but the dominant spike entries of $\Delta$ are themselves sparse, and entry-wise minimization suppresses them directly. When $\widehat{R}(\Delta) \approx 1$, $\Delta$ is delocalized and entry-wise minimization cannot selectively control the spectrum.
\end{remark}

Following \Cref{rem:spectra_to_entry}, the spectral perturbation admits entry-wise control in either of the two localized regimes identified there, $\widehat{R}(\Delta) \ll 1$ and $\widehat{R}(\Delta) \gg 1$. In both cases the Bayes-optimal MMSE estimator $\E[G_{ij} \mid \tilde G_{ij}]$ is first-order optimal. The latter regime is the more attractive of the two: the underlying entry-wise spike noise can be modeled directly by a sparse heavy-tailed contamination without extra knowledge of the signal's true singular directions.

\section{Smooth Shrinkage: A Bayes-Motivated Entry-Wise Spectrum Control}
\label{sec:operator}

\begin{wrapfigure}[14]{r}{0.42\linewidth}
  \vspace{-1.5\baselineskip}   
  \centering
  \includegraphics[width=\linewidth]{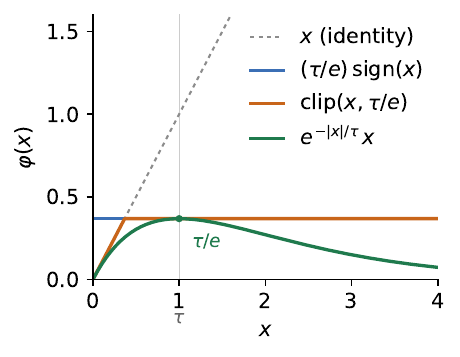}
  \vspace{-1.8\baselineskip}
  \caption{Three entry-wise operators on $x \geq 0$ with $\tau = 1$. We set $\beta=1$ for smooth shrinkage in \cref{eq:smooth_shrink}.}
  \label{fig:clipping_ops}
\end{wrapfigure}
 
Here we focus on the $\widehat{R}(\Delta) \gg 1$ regime, in which localization is induced by the entry-wise contamination model of \cref{def:contamination}. Our main result identifies a closed-form surrogate for the Bayes-optimal entry-wise estimator in \cref{prop:spectral_to_mse} that is asymptotically faithful in the large-$|\tilde G_{ij}|$ region, where heavy-tailed contamination dominates.

\begin{restatable}[Smooth shrinkage as a Bayes-motivated surrogate]{theorem}{thmmain}
\label{thm:main}
Let $y = x + e$, where $x \sim \gN(0,\sigma_x^2)$ and
$e \sim (1-\alpha)\gN(0,\sigma^2) + \alpha H$ with $\alpha \in [0,1)$,
and the heavy tailed density $h$ associated with $H$ satisfies the score and curvature conditions of Lemma \ref{lem:posterior_collapse}. Define the Wiener coefficient $\beta := \sigma_x^2/(\sigma_x^2 + \sigma^2)$ and the (Bayesian) retention probability $\pi(y) := \Pr(e \sim \gN \mid y)$.

\begin{enumerate}[label=\textnormal{(\roman*)},nosep,leftmargin=*]
\item \textbf{Asymptotic Bayes structure.}
For every $\varepsilon > 0$ there exists an explicit threshold
$y^\star = y^\star(\varepsilon, h, \sigma_x) \in (0,\infty)$
such that for all $|y| \ge y^\star$ the Bayes optimal entrywise estimator admits the factorization
\begin{equation}
  \E[x \mid y] \;=\; \pi(y)\,\beta\,y \;+\; \rho(y),
  \qquad |\rho(y)| \;\le\; \varepsilon.
\label{eq:thm_factor}
\end{equation}
Combined with the entrywise MSE upper bound of Proposition \ref{prop:spectral_to_mse}, this controls the spectral perturbation
$\E\bigl[|\sigma_1(\varphi_\tau(\tilde G)) - \sigma_1(G)|^2\bigr]$.

\item \textbf{Unique closed form surrogate.}
Among continuous functions $\widehat\pi:[0,\infty)\to(0,1]$ satisfying
$\widehat\pi(y)\to 0$ as $y\to\infty$ \emph{(redescending)},
$\widehat\pi(y_1+y_2) = \widehat\pi(y_1)\,\widehat\pi(y_2)$ \emph{(multiplicative)}, and
$\widehat\pi(0) = 1$ \emph{(unit at origin)}, the unique one parameter family is
$\widehat\pi(y) = e^{-y/\tau}$ for some $\tau > 0$. Replacing $\pi$ in \cref{eq:thm_factor} with the even extension $y \mapsto e^{-|y|/\tau}$ yields the \emph{smooth shrinkage operator}
\begin{equation}
  \varphi_\tau(y) \;:=\; \beta\,e^{-|y|/\tau}\,y.
\label{eq:smooth_shrink}
\end{equation}
\end{enumerate}
\end{restatable}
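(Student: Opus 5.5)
For part (i), I will first decompose the posterior mean by conditioning on the mixture label. Write $e$ as drawn from the Gaussian component with prior weight $1-\alpha$ and from $H$ with weight $\alpha$. The law of total expectation then gives
\[
\E[x\mid y] = \pi(y)\,\E[x\mid y, e\sim\gN] + (1-\pi(y))\,\E[x\mid y, e\sim H].
\]
Under the Gaussian label, $(x,y)$ is jointly Gaussian, so $\E[x\mid y,e\sim\gN]=\beta y$ exactly. This produces the leading term $\pi(y)\beta y$ with no error, and the remainder is $\rho(y):=(1-\pi(y))\,\E[x\mid y,e\sim H]$. It therefore suffices to show that $\E[x\mid y, e\sim H]\to 0$ as $|y|\to\infty$, since $|1-\pi|\le 1$.

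Under the $H$ label the posterior density of $x$ is proportional to $\phi_{\sigma_x}(x)\,h(y-x)$. Here I plan to invoke Lemma \ref{lem:posterior_collapse}. Its score and curvature conditions, of the type $|(\log h)'(t)|\to 0$ and bounded $(\log h)''$ for large $|t|$, are exactly what make $h(y-x)/h(y)\to 1$ uniformly for $x$ in any compact window $|x|\le M$. The posterior therefore collapses onto the Gaussian prior $\gN(0,\sigma_x^2)$, whose mean is $0$.

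Quantitatively, I will split the integral at $|x|\le M$ and $|x|>M$. On the window, a Taylor expansion of $\log h(y-x)$ around $y$ gives
\[
\E[x\mid y,H]\approx \sigma_x^2\,(\log h)'(y) + O(\text{curvature}).
\]
The tails are controlled by the Gaussian factor $e^{-x^2/2\sigma_x^2}$ together with a polynomial-growth bound on $h(y)/h(y-x)$ that the lemma supplies. I will choose $M$ so that the tail contributes at most $\varepsilon/2$, and then $y^\star$ so that the score term is at most $\varepsilon/2$; this makes $y^\star$ explicit in $(\varepsilon,h,\sigma_x)$. The spectral-control sentence then follows by plugging the resulting estimator into the MSE bound of Proposition \ref{prop:spectral_to_mse}, which I regard as a remark rather than something needing new proof.

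The main obstacle I expect is the uniformity of the collapse for heavy tails such as Cauchy. The ratio $h(y-x)/h(y)$ is not bounded for $x$ near $y$: the posterior gets a spurious bump at $x\approx y$, weighted by $\phi_{\sigma_x}(y)/h(y)$. I will control it by bounding $h(y-x)\le \|h\|_\infty$ there and using that $\phi_{\sigma_x}(y)\,|y|/h(y)\to 0$, because Gaussian decay beats polynomial decay.

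For part (ii), I will reduce to Cauchy's functional equation. Continuity and $\widehat\pi>0$ allow setting $g=\log\widehat\pi$, which is continuous and additive on $[0,\infty)$ with $g(0)=0$. Hence $g(y)=cy$: first for rationals via additivity, then for all $y$ by continuity. The redescending condition forces $c<0$, so writing $c=-1/\tau$ gives $\widehat\pi(y)=e^{-y/\tau}$ with $\tau>0$. Conversely, each member of this family satisfies all three properties, which gives uniqueness of the family. Substituting the even extension $e^{-|y|/\tau}$ for $\pi$ in \eqref{eq:thm_factor} and dropping $\rho$ yields \eqref{eq:smooth_shrink} by definition.
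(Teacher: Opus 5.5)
Your proposal follows the paper's proof essentially step for step: the latent-label decomposition, the conjugate identity $\E[x\mid y,Z=\gN]=\beta y$, the bound $|\rho(y)|\le|\E[x\mid y,Z=H]|$ closed by Lemma~\ref{lem:posterior_collapse}, and Cauchy's exponential functional equation for part (ii). The only divergence is in your own sketch of the collapse: the paper proves that lemma via the Tweedie-type identity $\E[x\mid y,Z=H]=\sigma_x^2\,\E[\psi(y-x)\mid y,Z=H]$ with $\psi=-(\log h)'$, splitting on $|y-x|>|y|/2$, which gives the explicit rate $3C_1\sigma_x^2/|y|$. If you keep your window/tail/bump route instead, two corrections are needed: the leading term is $-\sigma_x^2(\log h)'(y)$, so your sign is flipped, and the bump region $|y-x|\le|y|/2$ is weighted by $\phi_{\sigma_x}(y/2)$ rather than $\phi_{\sigma_x}(y)$, which still loses to the polynomial lower envelope of $h(y)$.
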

 
By \Cref{thm:main}, smooth shrinkage damps each entry $y$ by a factor of $e^{-|y|/\tau}$, whereas hard coordinate-wise clipping (\cref{eq:coor_clip}) applies a damping factor of $\min\{1,\, c/|y|\}$. Setting $c = \tau/e$ aligns the two damping curves: they are tangent at $|y| = \tau$, where both equal $1/e$. For $|y| > \tau$, however, smooth shrinkage's exponential decay imposes a strictly stronger penalty than hard clipping's $1/|y|$ tail. \Cref{fig:clipping_ops} and \Cref{fig:clipping-comparison} illustrate this difference.

\begin{proof}[Proof sketch of Theorem \ref{thm:main}] (i) We first introduce a latent indicator $Z \in \{\gN, H\}$ for the noise type generating $y$. By the tower property, $\E[x \mid y]$ is divided into $\E[x \mid y, Z = \gN]$ and $\E[x \mid y, Z = H]$ two branches.
The Gaussian branch is conjugate and exact: $\E[x \mid y, Z = \gN] = \beta y$. The heavy-tailed branch has no closed form, but \Cref{lem:posterior_collapse} shows that for $|y| \ge y^\star$ the heavy-tailed likelihood overwhelms the Gaussian prior, so the posterior collapses to the prior mean $\E[x] = 0$ up to error $\varepsilon$. Combining the two branches yields \cref{eq:thm_factor}.

\noindent (ii) For $y \ge 0$, Bayes' rule together with the heavy-tail dominance from (i) gives $\widehat\pi(y) \to 0$ as $y \to \infty$. Independence of the additive noise components implies the multiplicative identity $\widehat\pi(y_1 + y_2) = \widehat\pi(y_1)\,\widehat\pi(y_2)$, which is Cauchy's exponential functional equation. Under continuity and $\widehat\pi(0) = 1$, its only solutions are $\widehat\pi(y) = e^{-y/\tau}$ for some $\tau > 0$. Even-extending to $\R$ and substituting into \cref{eq:thm_factor} produces the smooth shrinkage operator $\varphi_\tau(y) = \beta\,e^{-|y|/\tau}\,y$. The completed proof is provided in \cref{app:sec4_bayes}.
\end{proof}


\section{Convergence Analysis}
\label{sec:convergence}


Consider a matrix-valued objective function $f: \mathbb{R}^{m \times n} \to \mathbb{R}$ with true gradient $G = \nabla f(X) \in \mathbb{R}^{m \times n}$. We evaluate the convergence result of the hard clipping map $C_\tau(x)$ and the newly introduced smooth shrinkage map $S_c(x)$ as two distinct entry-wise clipping methods at both \emph{post-clipping} and \emph{pre-clipping} stages. The definitions of  $C_\tau(x)$ and $S_c(x)$ follow:
\begin{equation}
\label{eq:two_clipping_op}
C_\tau(x) :=x\min\{1,\tau/|x|\}, \qquad S_c(x) := xe^{-|x|/c},    
\end{equation}
where we use the threshold $c$ for smooth shrinkage in this section to distinguish it from $\tau$ used by the hard clipper. We drop the Wiener coefficient $\beta$ in \eqref{eq:smooth_shrink} since it can be absorbed into the learning rate. Under a known entry-wise contamination model, we establish the standard $\mathcal{O}(\varepsilon^{-4})$ sample complexity result for reaching the first-order stationarity of a nonconvex function. The overall complexity using two different clipping methods is summarized in \cref{tab:convergence-comparison-main}. 

\begin{assumption}[Smoothness]
\label{ass:smooth-main}
For $X, Y \in \mathbb{R}^{m\times n}$, the objective $f: \mathbb{R}^{m\times n} \to \mathbb{R}$ is $L$-smooth in the Frobenius norm: $f(Y) \le f(X) + \langle\nabla f(X), Y-X\rangle + \frac{L}{2}\|Y-X\|_F^2.$
Moreover, we assume the objective is bounded below such that $\Delta := f(X_0) - \inf_X f(X) < \infty$. We denote $G_k := \nabla f(X_k)$ and let $d := mn$.
\end{assumption}

\begin{assumption}[Bounded True Gradients]
\label{ass:bounded-main}
There exists a constant $B < \infty$ such that the $\ell_\infty$ norm of the true gradient is bounded for all iterates:
$\|G_k\|_{\infty} = \max_{i,j} |(G_k)_{i,j}| \le B.$
\end{assumption}

\begin{assumption}[Entry-wise Noise Model]
\label{ass:noise-main}
Given $X_k$, a noisy gradient sample takes the form $G_k + \Xi_k$. $\Xi_k$ is independent of $X_k$ and across iterates $k$. Each entry $\xi_k$ in $\Xi_k$ follows a symmetric noise mixture model: $\xi_k \sim (1-\alpha)\mathcal{N}(0,\sigma_k^2) + \alpha\,\text{Cauchy}(0,\gamma_k),$
where $0 \le \alpha \le 1$, $\sigma_k \ge 0$, and $\gamma_k > 0$. For all iterates, the local scales are uniformly bounded by $\sigma, \gamma>0$ such that $\sigma_k \le \sigma$ and $\gamma_k \le \gamma$.
\end{assumption}


\begin{table}[t]
\small
\caption{\textbf{Main convergence results for \emph{post-clipping} and \emph{pre-clipping}.} $S_{c}$ and $C_{\tau}$ are two different clipping methods with threshold $c$ and $\tau$ introduced in \cref{eq:two_clipping_op}. The complexity column counts iterations $K$ (post-clipping) or $T$ (pre-clipping); pre-clipping additionally uses $N$ samples per iteration, so its total sample budget is $TN=\mathcal{O}(L\Delta D^2 r^4 v_\bullet \varepsilon^{-4})$ whereas post-clipping requires $\mathcal{O}(L\Delta d\, c_\bullet^2 \varepsilon^{-4})$ with $\bullet\in\{\tau,c\}$. Explicit constants are presented in \cref{thm:post-hard-main,thm:post-smooth-main,thm:pre-hard-main,thm:pre-smooth-main}.}
\begin{tabular}{llll}
\toprule
Method & Clipping Threshold scale & Stationarity & Complexity \\
\midrule
Post ($C_{\tau}$) & $\tau\gtrsim B+ \sigma+\alpha\gamma$ & $\sum\E\norm{G_k}_F/K\le\varepsilon$ & $\gO(L\Delta d\tau^2\varepsilon^{-4})$ \\
Post ($S_{c}$) & $c\gtrsim B+(1-\alpha)\sigma+\alpha\gamma\log(e+\alpha)$ & $\sum\E\norm{G_k}_F/K\le\varepsilon$ & $\gO(L\Delta dc^2\varepsilon^{-4})$ \\
\addlinespace
Pre ($C_{\tau}$) & $\tau\gtrsim B+\sigma\sqrt{\log r}+\alpha\gamma\sqrt r$ & $\sum\E\norm{G_k}_*/T\le\varepsilon$ & $\gO(L\Delta D^2r^4v_\tau\varepsilon^{-4})$ \\
Pre ($S_{c}$) & $c\gtrsim \sqrt r\{B+(1-\alpha)\sigma+\alpha\gamma\log(e+\alpha\sqrt r)\}$ & $\sum\E\norm{G_k}_*/T\le\varepsilon$ & $\gO(L\Delta D^2r^4v_c\varepsilon^{-4})$ \\
\bottomrule
\end{tabular}
\label{tab:convergence-comparison-main}
\end{table}


\subsection{Convergence Result for Post-clipping }
\label{subsec:post-main}
Given an entry-wise mapping $\varphi : \mathbb{R}^{m \times n} \to \mathbb{R}^{m \times n}$, we consider the following single-sample clipped stochastic gradient update: 
\begin{equation}
    X_{k+1}=X_k-\eta\,\varphi(G_k+\Xi_k),
    \label{eq:post-update-main}
\end{equation}
The subsequent theorems establish the convergence guarantees when the mapping $\varphi$ is instantiated as either the hard-clipping function $C_\tau$ or the smooth-shrinkage function $S_c$.

\begin{restatable}[Post-clipping with hard clipping]{theorem}{thmposthard}
\label{thm:post-hard-main}
Under \Cref{ass:smooth-main,ass:bounded-main,ass:noise-main}, choose $\tau$ such that
$
    \tau\ge B+
    \max\left\{
    \sigma\sqrt{2\log 8},\,
    \frac{8\alpha\gamma}{\pi}
    \right\}.
$
Executing the update rule \cref{eq:post-update-main} with $\varphi=C_\tau$ and a step size of $\eta=\sqrt{2\Delta/(L\tau^2dK)}$ for $K$ iterations yields:
\begin{equation}
    \frac1K\sum_{k=0}^{K-1}\E\norm{G_k}_F^2
    \le
    2\tau\sqrt{\frac{2L\Delta d}{K}}.
    \label{eq:post-hard-main-rate}
\end{equation}
Consequently, when $K\ge 8L\Delta d\tau^2/\varepsilon^4$, we guarantee
$
    \frac1K\sum_{k=0}^{K-1}\E\norm{G_k}_F\le\varepsilon.
$
\end{restatable}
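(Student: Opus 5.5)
The plan is the standard clipped-SGD descent argument in the Frobenius geometry, reduced to a per-coordinate lemma. By $L$-smoothness (\Cref{ass:smooth-main}) applied to the update \cref{eq:post-update-main} with $\varphi=C_\tau$, the one-step inequality is
\[
f(X_{k+1}) \le f(X_k) - \eta\langle G_k, C_\tau(G_k+\Xi_k)\rangle + \tfrac{L\eta^2}{2}\|C_\tau(G_k+\Xi_k)\|_F^2 .
\]
The last term is deterministically at most $\tfrac{L\eta^2}{2} d\tau^2$, since every entry of $C_\tau(\cdot)$ has modulus at most $\tau$. For the cross term, I will condition on $X_k$; by \Cref{ass:noise-main}, $\Xi_k$ is independent of $X_k$ with i.i.d.\ entries. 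The goal is the coordinatewise bound
\[
g\,\E[C_\tau(g+\xi)] \ge \tfrac12 g^2 \qquad \text{for every } |g|\le B,
\]
which is available because of \Cref{ass:bounded-main}. Summing over the $d$ entries gives $\E\langle G_k, C_\tau(G_k+\Xi_k)\rangle \ge \tfrac12\E\|G_k\|_F^2$.

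The key step, and the only one needing care, is this coordinatewise lemma. By oddness it suffices to take $g\ge 0$. Since $\xi$ is symmetric, I write
\[
\E[C_\tau(g+\xi)] = \tfrac12\E\bigl[C_\tau(g+|\xi|)+C_\tau(g-|\xi|)\bigr].
\]
On the event $|\xi|\le \tau-g$ neither argument is clipped, so the bracket equals $2g$. On the complement the bracket is nonnegative. Indeed, $C_\tau$ is odd and nondecreasing, and $g+|\xi|\ge |\xi|-g$, so $C_\tau(g+|\xi|)\ge C_\tau(|\xi|-g)=-C_\tau(g-|\xi|)$. Hence
\[
\E[C_\tau(g+\xi)] \ge g\,\Pr(|\xi|\le\tau-B),
\]
and it remains to show $\Pr(|\xi|>\tau-B)\le 1/2$, where $t:=\tau-B\ge\max\{\sigma\sqrt{2\log 8},\,8\alpha\gamma/\pi\}$.

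I will bound the two mixture components separately. For the Gaussian part, $\Pr(|\gN(0,\sigma_k^2)|>t)\le 2e^{-t^2/(2\sigma^2)}\le 2/8=1/4$. For the Cauchy part, $\Pr(|\text{Cauchy}(0,\gamma_k)|>t)=1-\tfrac{2}{\pi}\arctan(t/\gamma_k)\le \tfrac{2\gamma}{\pi t}$, so $\alpha$ times it is at most $\tfrac{2\alpha\gamma}{\pi}\cdot\tfrac{\pi}{8\alpha\gamma}=1/4$; if $\alpha=0$ the Cauchy term is absent. Therefore $\Pr(|\xi|>t)\le (1-\alpha)/4+1/4\le 1/2$. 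This uses the uniform bounds $\sigma_k\le\sigma$ and $\gamma_k\le\gamma$, so the lemma holds at every iterate.

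Finally, I take total expectations and telescope over $k=0,\dots,K-1$, using $f(X_0)-f(X_K)\le\Delta$:
\[
\frac1K\sum_{k=0}^{K-1}\E\|G_k\|_F^2 \le \frac{2\Delta}{\eta K} + L\eta d\tau^2 .
\]
Substituting $\eta=\sqrt{2\Delta/(L\tau^2 dK)}$ makes each term equal to $\tau\sqrt{2L\Delta d/K}$, which gives \cref{eq:post-hard-main-rate}. For the consequence, Jensen's inequality gives $\tfrac1K\sum_k\E\|G_k\|_F\le\bigl(\tfrac1K\sum_k\E\|G_k\|_F^2\bigr)^{1/2}$. Requiring $2\tau\sqrt{2L\Delta d/K}\le\varepsilon^2$ is exactly $K\ge 8L\Delta d\tau^2/\varepsilon^4$.
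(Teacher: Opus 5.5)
Your proof is correct and gives exactly the stated constants. The key coordinatewise step, however, is argued differently from the paper.

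The paper first proves a two-sided \emph{relative-bias} bound $|\E C_\tau(g+\xi)-g|\le\rho_\tau|g|$. It does this by differentiating under the expectation, $m'(s)=\Pr(|s+\xi|<\tau)$, and integrating from $0$ to $g$, which gives $\rho_\tau=\sup_{|s|\le B}\Pr(|s+\xi|\ge\tau)$. A separate threshold lemma then shows $\rho_\tau\le 1/2$. Finally, the alignment $\langle G_k,M_k\rangle\ge(1-\rho_\tau)\|G_k\|_F^2$ follows from Cauchy--Schwarz in the Frobenius norm.

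You instead get the one-sided alignment $g\,\E C_\tau(g+\xi)\ge g^2\Pr(|\xi|\le\tau-|g|)$ directly. You symmetrize and use only that $C_\tau$ is odd, nondecreasing, and the identity on $[-\tau,\tau]$. This avoids both the differentiation-under-the-integral step and the Cauchy--Schwarz step. Your tail bound $\Pr(|\xi|>\tau-B)\le 2(1-\alpha)e^{-(\tau-B)^2/(2\sigma^2)}+2\alpha\gamma/(\pi(\tau-B))$ is exactly what the paper's threshold lemma reduces $\rho_\tau$ to, after it loosens $\frac{1}{\tau-B}+\frac{1}{\tau+B}$ to $\frac{2}{\tau-B}$.

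What the paper's route buys is modularity. The same two-sided relative-bias property is reused in the pre-clipped Muon analysis, where one needs a bound on $\|M_k-G_k\|_*$ rather than an inner-product lower bound. A parallel Taylor-based relative-bias lemma also covers $S_c$, which your argument cannot handle because $S_c$ is not monotone.

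Your symmetrization does upgrade to the two-sided statement for $C_\tau$ with one more line. For $g\ge 0$, on the clipped event the bracket equals $\max\{\tau+g-|\xi|,0\}\in[0,2g)$. Hence $0\le g-\E C_\tau(g+\xi)\le g\Pr(|\xi|>\tau-g)$, which recovers the paper's relative-bias property for hard clipping as well.
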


\begin{restatable}[Post-clipping with smooth shrinkage]{theorem}{thmpostsmooth}
\label{thm:post-smooth-main}
Under \cref{ass:smooth-main,ass:bounded-main,ass:noise-main}, choose
$
    c\ge
    \max\left\{
    4\left[B+\sqrt{\frac8\pi}(1-\alpha)\sigma\right],\,
    \frac{64\alpha\gamma}{\pi}
    \log\left(e+\frac{64\alpha}{\pi}\right)
    \right\}.
$
Executing the update rule \cref{eq:post-update-main} with $\varphi=S_c$ and a step size of $\eta=e\sqrt{2\Delta/(Lc^2dK)}$ for $K$ iterations yields:
\begin{equation}
    \frac1K\sum_{k=0}^{K-1}\E\norm{G_k}_F^2
    \le
    \frac{2c}{e}\sqrt{\frac{2L\Delta d}{K}}.
    \label{eq:post-smooth-main-rate}
\end{equation}
Consequently, when $K\ge 8L\Delta d c^2/(e^2\varepsilon^4)$, we guarantee
$
    \frac1K\sum_{k=0}^{K-1}\E\norm{G_k}_F\le\varepsilon.
$
\end{restatable}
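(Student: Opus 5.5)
We follow the standard one-step descent argument for clipped SGD, specialized entrywise. By \Cref{ass:smooth-main}, with $U_k := S_c(G_k+\Xi_k)$,
\begin{equation*}
f(X_{k+1}) \le f(X_k) - \eta\inner{G_k}{U_k} + \tfrac{L\eta^2}{2}\norm{U_k}_F^2 .
\end{equation*}
The quadratic term is handled deterministically. The map $x\mapsto |x|e^{-|x|/c}$ is maximized at $|x|=c$ with value $c/e$, so $\norm{U_k}_F^2 \le d\,c^2/e^2$ regardless of the noise. This is exactly why $c/e$ replaces $\tau$ from \Cref{thm:post-hard-main}. The work is in the linear term. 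We aim for the entrywise inequality
\begin{equation*}
g\,\E_\xi[S_c(g+\xi)] \;\ge\; \kappa\, g^2/c \qquad \text{for all } |g|\le B,
\end{equation*}
with an explicit $\kappa$. The stated rate then forces the normalization $\kappa g^2/c \ge \tfrac{e}{2}\cdot g^2/c$ after the constants are tracked. Summing over the $d$ entries gives $\E\inner{G_k}{U_k} \ge \kappa\norm{G_k}_F^2/c$.

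To prove the entrywise bound, we write $\psi(g) := \E[S_c(g+\xi)]$. By symmetry of $\xi$, $\psi$ is odd, so it suffices to take $g>0$ and show $\psi(g)\ge \kappa g/c$. We split along the mixture, $\psi = (1-\alpha)\psi_{\gN} + \alpha\psi_{\text{Cau}}$. For each component we write
\begin{equation*}
\psi(g) = \E\bigl[S_c(g+\xi)-S_c(\xi)\bigr]
\end{equation*}
by oddness, and we use $S_c'(x) = (1-|x|/c)e^{-|x|/c}$. On $|x|\le c/2$ this derivative is at least $\tfrac12 e^{-1/2}$, and it is never below $-e^{-2}$. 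Hence
\begin{equation*}
S_c(g+\xi)-S_c(\xi) \ge g\bigl[\tfrac12 e^{-1/2}\mathbf 1\{|\xi|\le c/4\} - e^{-2}\mathbf 1\{|\xi|>c/4\}\bigr],
\end{equation*}
using $g\le B\le c/4$. This gives
\begin{equation*}
\psi(g)/g \;\ge\; a\Pr(|\xi|\le c/4) - b\Pr(|\xi|>c/4)
\end{equation*}
with explicit $a,b$. The Gaussian tail is controlled by Markov's inequality with $\E|\xi|=\sqrt{2/\pi}\,\sigma$, and this is where the condition $c\ge 4[B+\sqrt{8/\pi}(1-\alpha)\sigma]$ enters. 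The Cauchy tail is $\Pr(|\xi|>c/4)\le 8\gamma/(\pi c)$, which is made small by the $\gamma$-term in the condition on $c$.

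The main obstacle is the Cauchy component. A crude indicator bound loses the $1/c$ scaling, and the $\log(e+64\alpha/\pi)$ factor in the hypothesis suggests that the intended argument integrates the derivative bound against the Cauchy density more carefully. That integral is $\int S_c'(x)\,\tfrac{\gamma}{\pi(\gamma^2+x^2)}\,dx$, whose negative part over $|x|>c$ contributes of order $\gamma/c$, with a logarithm from $\int_c^\infty e^{-x/c}/x\,dx$. The first attempt is a direct mean-value bound, $\psi(g)\ge g\,\E[S_c'(\eta_\xi)]$, evaluating $\E[S_c'(\xi)]$ explicitly for Cauchy noise. If the mean-value point causes trouble, we fall back to the indicator split above, with the threshold $c/4$ replaced by $c/(4\log(e+\cdot))$ to absorb the log.

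Finally, we telescope over $k=0,\dots,K-1$ and take total expectation, using independence of $\Xi_k$ from $X_k$. This gives
\begin{equation*}
\frac{\kappa\eta}{c}\sum_k\E\norm{G_k}_F^2 \;\le\; \Delta + \frac{KL\eta^2 d c^2}{2e^2}.
\end{equation*}
Plugging in $\eta = e\sqrt{2\Delta/(Lc^2dK)}$ balances the two terms and yields \cref{eq:post-smooth-main-rate} once $\kappa$ is checked to match. Jensen's inequality, $\tfrac1K\sum\E\norm{G_k}_F \le (\tfrac1K\sum\E\norm{G_k}_F^2)^{1/2}$, then gives the $\varepsilon$-stationarity claim for $K\ge 8L\Delta dc^2/(e^2\varepsilon^4)$.
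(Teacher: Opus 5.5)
\textbf{There is a genuine gap in the entrywise alignment step.} Your outer skeleton matches the paper's: the smoothness descent, the deterministic bound $\norm{U_k}_F^2\le dc^2/e^2$, telescoping, the given $\eta$, and Jensen.

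Redo your own telescoping with $\eta=e\sqrt{2\Delta/(Lc^2dK)}$. The bound \cref{eq:post-smooth-main-rate} comes out exactly when $\psi(g)=\E_\xi[S_c(g+\xi)]\ge \tfrac12 g$ for $0<g\le B$, i.e.\ when $\E_k\inner{G_k}{U_k}\ge\tfrac12\norm{G_k}_F^2$.

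Your target $\kappa g^2/c$ with $\kappa\ge e/2$ is a bookkeeping slip. With a numerical $\kappa$ it is not scale-consistent, since $\psi(g)/g$ is dimensionless, and matching the rate in fact needs $\kappa\ge c/2$.

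The indicator split you commit to cannot certify $\tfrac12$:
\begin{itemize}
\item Its in-window coefficient is $\tfrac12e^{-1/2}\approx0.30$. So even with $\xi\equiv0$ it only gives $\psi(g)\ge0.30\,g$.
\item In the worst case allowed by the hypothesis on $c$, your Markov and Cauchy tail bounds leave out-of-window mass up to $\tfrac12+\tfrac18$. The certified coefficient then falls to roughly $0.03$.
\end{itemize}
You would still get an $\gO(\varepsilon^{-4})$ rate, but with a worse constant: by a factor of at least $1.6$, and over $10$ in the worst case. So neither \cref{eq:post-smooth-main-rate} nor the threshold $K\ge 8L\Delta dc^2/(e^2\varepsilon^4)$ follows.

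\textbf{The missing idea.} Control $S_c'$ to first order around $\xi$, rather than by a floor on a window. Write $S_c'(x)=h(|x|/c)$ with $h(t)=e^{-t}(1-t)$. Since $|h'(t)|=e^{-t}|t-2|\le2$, $S_c'$ is $(2/c)$-Lipschitz, so $|S_c(\xi+g)-S_c(\xi)-gS_c'(\xi)|\le g^2/c$. This disposes of the mean-value point you were unsure about. Together with $\E S_c(\xi)=0$ it gives
\[
\bigl|\E S_c(g+\xi)-g\bigr|\le |g|\bigl(1-\E S_c'(\xi)\bigr)+\tfrac{B}{c}\,|g| .
\]
Next, bound the deficit using the linear inequality $1-e^{-t}(1-t)\le 2t$:
\begin{itemize}
\item For the Gaussian part, $1-\E S_c'(Z)\le \tfrac2c\E|Z|=\sqrt{8/\pi}\,\sigma/c$.
\item For the Cauchy part, split at $|u|=c$, using the linear bound inside and the constant $2$ outside. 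This gives $1-\E S_c'(H)\le \tfrac{2\gamma}{\pi c}\log(1+c^2/\gamma^2)+\tfrac{4\gamma}{\pi c}\le \tfrac{8\gamma}{\pi c}\log(e+c/\gamma)$.
\end{itemize}
So the $B$ in the hypothesis comes from the Taylor remainder, not from fitting $g$ into a window. The logarithm comes from integrating $2u/c$ against the Cauchy density on $[0,c]$, not from the negative lobe on $|x|>c$.

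The two branches of the hypothesis on $c$ make the resulting terms at most $\tfrac14$ each. The second uses \Cref{lem:log-device-app} with $x=c/\gamma$, $a=8\alpha/\pi$, $s=4$. This yields $\psi(g)\ge\tfrac12 g$, which is exactly what your telescoping needs.
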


Examining the threshold choices in \cref{thm:post-hard-main,thm:post-smooth-main} reveals a key property: hard clipping $C_\tau$ leaves any coordinate within the range of the true gradient unaltered, while smooth shrinkage $S_c$ requires a larger threshold to preserve ordinary signals and control the Cauchy tail. In both cases, the threshold remains \emph{constant} once the noise scale and gradient bounds are established, complementary to existing clipping methods that require iteration-dependent thresholds \citep{zhang2020adaptive}.

\subsection{Convergence Result for Pre-clipping}
\label{subsec:pre-main}

Given an entry-wise mapping $\varphi : \mathbb{R}^{m \times n} \to \mathbb{R}^{m \times n}$, we consider the following mini-batched pre-clipping using $N$ \emph{i.i.d.}\ samples:
\begin{equation}
    \bar G_k^\varphi:=\frac1N\sum_{\ell=1}^N\varphi(G_k+\Xi_k^{(\ell)}),
    \qquad
    X_{k+1}=X_k-\eta\,\operatorname{msign}(\bar G_k^\varphi),
    \label{eq:pre-update-main}
\end{equation}
where $\operatorname{msign}(M) = U V^T$ for the singular value decomposition $M = U \Sigma V^T$. We define $r := \min\{m,n\}$, $q := \max\{m,n\}$, and $D := \sqrt{q/r}$. The subsequent theorems establish convergence guarantees when $\varphi$ is instantiated as either $C_\tau$ or $S_c$.

\begin{figure}
    \centering
    \includegraphics[width=0.95\linewidth]{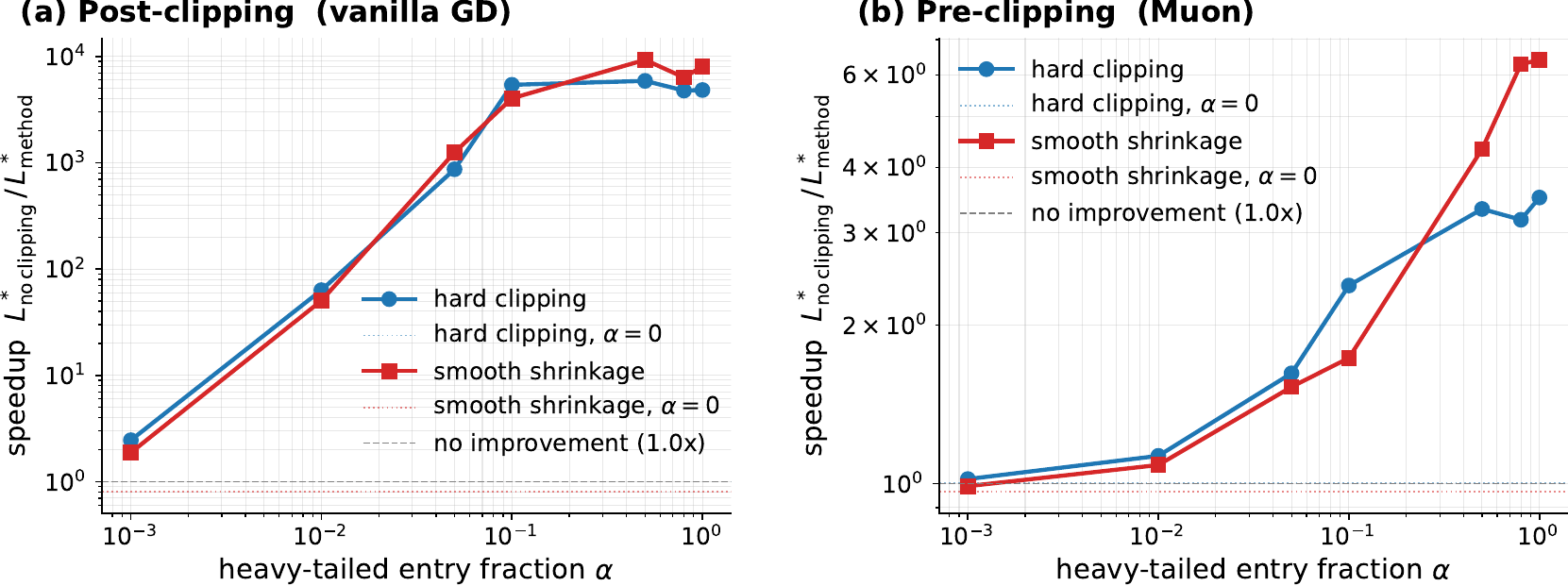}
    \caption{Random Gaussian feature regression ($d=32$, $n=128$) with a fraction $\alpha$ of feature entries corrupted by Student-$t$ noise ($\nu=1$, scale $3.0$). $y$-axis is the final-loss speedup over no clipping. Smooth shrinkage (red) in both \emph{post-clipping} (a) and \emph{pre-clipping} (b) tracks hard clipping (blue) for small $\alpha$ but pulls ahead as the noise is dominated by heavy-tailed entries.}
    \label{fig:synthetic_result}
\end{figure}

\begin{restatable}[Pre-clipped Muon with hard clipping]{theorem}{thmprehard}
\label{thm:pre-hard-main}
Under \Cref{ass:smooth-main,ass:bounded-main,ass:noise-main}, choose $\tau$ such that
$
    \tau\ge B+
    \max\left\{
    \sigma\sqrt{2\log(16\sqrt r)},\,
    \frac{16\alpha\gamma\sqrt r}{\pi}
    \right\},
    \label{eq:pre-hard-threshold-main}
$
and define
$v_\tau:=(1-\alpha)\sigma^2+\frac{4\alpha\gamma\tau}{\pi}.$
Executing the update rule \cref{eq:pre-update-main} with $\varphi=C_\tau$ and a step size of $\eta=\sqrt{2\Delta/(LrT)}$ for $T$ iterations yields:
\begin{equation}
    \frac1T\sum_{k=0}^{T-1}\E\norm{G_k}_*
    \le
    2\sqrt{\frac{2Lr\Delta}{T}}
    +4Dr^{3/2}\sqrt{\frac{v_\tau}{N}}.
    \label{eq:pre-hard-main-rate}
\end{equation}
Consequently, for $T \ge 32Lr\Delta/\varepsilon^2$ and $N \ge 64D^2r^3v_\tau/\varepsilon^2$, we guarantee
$
    \frac1T\sum_{k=0}^{T-1}\E\norm{G_k}_*\le\varepsilon.
$
\end{restatable}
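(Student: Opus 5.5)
We follow the standard template for Muon-type methods analyzed in the nuclear/spectral duality pair. Write $O_k := \msign(\bar G_k^{C})$, so that $\|O_k\|_{op}\le 1$, $\|O_k\|_F^2\le r$, and $\inner{M}{\msign(M)}=\|M\|_*$. By $L$-smoothness (\Cref{ass:smooth-main}),
\begin{equation*}
f(X_{k+1})\le f(X_k)-\eta\inner{G_k}{O_k}+\tfrac{L\eta^2 r}{2}.
\end{equation*}
Decompose $\inner{G_k}{O_k}=\inner{\bar G_k}{O_k}+\inner{G_k-\bar G_k}{O_k}$, where $\bar G_k:=\bar G_k^{C}$. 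This gives
\begin{equation*}
\inner{G_k}{O_k}\ge \|\bar G_k\|_*-\|G_k-\bar G_k\|_*\ge \|G_k\|_*-2\|G_k-\bar G_k\|_*.
\end{equation*}
Using $\|A\|_*\le\sqrt r\|A\|_F$, telescoping over $T$ steps, dividing by $\eta T$, and taking expectations yields
\begin{equation*}
\frac1T\sum_k\E\|G_k\|_*\le \frac{\Delta}{\eta T}+\frac{L\eta r}{2}+\frac{2\sqrt r}{T}\sum_k\E\|\bar G_k-G_k\|_F.
\end{equation*}
With $\eta=\sqrt{2\Delta/(LrT)}$ the first two terms combine to $\sqrt{2Lr\Delta/T}$, within the stated $2\sqrt{2Lr\Delta/T}$; the spare factor absorbs slack. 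It therefore remains to show $\E\|\bar G_k-G_k\|_F\le 2Dr\sqrt{v_\tau/N}$, or any bound of that order. Note $Dr=\sqrt{qr}\ge\sqrt{d}$, so an entrywise bound summed over $d=mn$ entries suffices.

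\textbf{Entrywise bias--variance split.} Fix an entry with true value $g$, $|g|\le B$, and noise $\xi$. Split
\begin{equation*}
\E(\bar G-G)_{ij}^2=\mathrm{Var}(C_\tau(g+\xi))/N+b(g)^2, \qquad b(g):=\E C_\tau(g+\xi)-g.
\end{equation*}
\emph{Variance.} Bound $\E(C_\tau(g+\xi)-g)^2\le\E\min(\xi^2,(\tau+B)^2)$, using $|C_\tau(g+\xi)-g|\le\min(|\xi|,\tau+B)$ because $C_\tau$ is 1-Lipschitz and fixes $g$. The Gaussian part contributes $(1-\alpha)\sigma^2$. The Cauchy part contributes $\E\min(\xi^2,s^2)\le 2\gamma s/\pi$ from the explicit integral $\int_0^s x^2\frac{2\gamma}{\pi(\gamma^2+x^2)}dx+s^2\Pr(|\xi|>s)$. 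This gives the $v_\tau$ form, with constants adjusted since $s=\tau+B\le 2\tau$.

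\emph{Bias.} Since $\xi$ is symmetric and $C_\tau$ is odd, $b(g)=\E[C_\tau(g+\xi)-(g+\xi)]$ is nonzero only on the events $|g+\xi|>\tau$, which require $|\xi|>\tau-B$. We pair $\xi$ with $-\xi$ and bound $|b(g)|$ by tail integrals: the Gaussian tail gives $\lesssim \sigma e^{-(\tau-B)^2/2\sigma^2}$, and the Cauchy tail gives $\alpha\int_{\tau-B}^\infty \Pr(|\xi|>t)\,dt$ suitably paired. The Cauchy tail integral diverges, so the symmetric pairing is essential. After pairing, the contribution becomes roughly $\alpha\cdot g\cdot\Pr(|\xi|>\tau-B)\lesssim \alpha B\gamma/(\tau-B)$, via a mean-value estimate on the clipped difference.

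The threshold conditions $\tau-B\ge\sigma\sqrt{2\log(16\sqrt r)}$ and $\tau-B\ge16\alpha\gamma\sqrt r/\pi$ are designed so that $|b(g)|\le |g|/(c\sqrt r)$ for a suitable constant. Then $\|b\|_F\le \|G_k\|_F/(c\sqrt r)\le\|G_k\|_*/(c\sqrt r)$, so after multiplication by $2\sqrt r$ the bias term is at most $\frac{2}{c}\|G_k\|_*$, i.e.\ a constant fraction such as $\tfrac12\|G_k\|_*$. This term is absorbed into the left side, which is precisely where the factor $2$ in both terms of \cref{eq:pre-hard-main-rate} comes from.

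\textbf{Main obstacle.} The delicate step is this relative bias bound $|b(g)|\lesssim |g|\cdot(\text{tail prob})$, linear in $g$ rather than a constant. It is needed because Cauchy noise makes an absolute bias bound insufficient for an $\varepsilon$-independent threshold. We plan to write
\begin{equation*}
b(g)=\E\bigl[\psi(g+\xi)\bigr]=\tfrac12\E\bigl[\psi(g+\xi)+\psi(g-\xi)\bigr],\qquad \psi(y)=C_\tau(y)-y,
\end{equation*}
and note that $\psi$ is odd and vanishes on $[-\tau,\tau]$. Hence $\psi(g+\xi)+\psi(g-\xi)=\psi(\xi+g)-\psi(\xi-g)$, which is bounded by $2|g|\,\mathbf 1\{|\xi|>\tau-B\}$ since $\psi$ is 1-Lipschitz. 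This gives
\begin{equation*}
|b(g)|\le |g|\Pr(|\xi|>\tau-B)\le |g|\Bigl[2(1-\alpha)e^{-(\tau-B)^2/2\sigma^2}+\tfrac{2\alpha\gamma}{\pi(\tau-B)}\Bigr],
\end{equation*}
and the threshold makes the bracket at most $1/(4\sqrt r)$. Collecting terms, $\E\|\bar G-G\|_F\le \|G\|_*/(4\sqrt r)+\sqrt{d\,v_\tau/N}$. Absorbing the bias gives \cref{eq:pre-hard-main-rate}, and solving each term $\le\varepsilon/2$ gives the stated $T$ and $N$.
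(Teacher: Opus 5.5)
Your architecture matches the paper's. You use smoothness plus $\inner{G_k}{O_k}\ge\|G_k\|_*-2\|\bar G_k-G_k\|_*$, split the error into sampling fluctuation and deterministic bias, and absorb the relative bias via $\rho\le 1/(4\sqrt r)$. Dividing by $1-2\sqrt r\rho\ge 1/2$ is what produces both factors of $2$; your earlier remark about a ``spare factor absorbing slack'' is not the mechanism, but you correct it later.

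For the bias you take a different but valid route. The paper differentiates $m(s)=\E C_\tau(s+\xi)$ and uses $m'(s)=\Pr(|s+\xi|<\tau)$ to get $|m(g)-g|\le|g|\sup_{|s|\le B}\Pr(|s+\xi|\ge\tau)$. Your pairing of $\xi$ with $-\xi$ through the odd, $1$-Lipschitz $\psi=C_\tau-\mathrm{id}$ gives $|b(g)|\le|g|\Pr(|\xi|>\tau-B)$ without differentiating under the integral. Both reach $1/(4\sqrt r)$ under the stated threshold. One correction: symmetrize $\E C_\tau(g+\xi)-g$ directly, since $\E\psi(g+\xi)$ on its own is undefined under Cauchy noise.

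The gap is in the variance step. For $\xi\sim\mathrm{Cauchy}(0,\gamma)$, the claim $\E\min(\xi^2,s^2)\le 2\gamma s/\pi$ is false for every $s>\gamma$. Each of the two terms you wrote is only bounded by $2\gamma s/\pi$, and an exact computation gives $\E\min(\xi^2,s^2)>4\gamma s/\pi-\gamma^2$. The value $2\gamma s/\pi$ is the \emph{one-sided} bound.

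Consequently your symmetric truncation at $s=\tau+B$ only yields $\operatorname{Var}\le(1-\alpha)\sigma^2+4\alpha\gamma(\tau+B)/\pi$, which is strictly larger than $v_\tau$ when $\alpha B>0$. The loss is structural, not just arithmetic: the Cauchy component $\E\min(\xi^2,(\tau+B)^2)$ already exceeds $4\gamma\tau/\pi$ whenever $B\ge\pi\gamma/4$. As written, you prove the rate only with an inflated variance constant (up to $(1-\alpha)\sigma^2+8\alpha\gamma\tau/\pi$) and a correspondingly larger $N$, not the stated $v_\tau$ and $N\ge 64D^2r^3v_\tau/\varepsilon^2$.

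The missing idea is the paper's asymmetric truncation. Because $|g|<\tau$, for $h\ge 0$ one has exactly $|C_\tau(g+h)-g|=\min\{h,\tau-g\}$, and for $h\le 0$, $|C_\tau(g+h)-g|=\min\{|h|,\tau+g\}$. Apply the one-sided estimate $\int_0^\infty\min\{h^2,a^2\}\frac{\gamma}{\pi(\gamma^2+h^2)}\,dh\le 2\gamma a/\pi$ to each half with $a=\tau\mp g$. This gives $\frac{2\gamma}{\pi}\bigl[(\tau-g)+(\tau+g)\bigr]=\frac{4\gamma\tau}{\pi}$, the $g$-dependence cancels, and $v_\tau$ follows exactly.
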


\begin{restatable}[Pre-clipped Muon with smooth shrinkage]{theorem}{thmpresmooth}
\label{thm:pre-smooth-main}
Under \Cref{ass:smooth-main,ass:bounded-main,ass:noise-main}, choose
$
    c\ge
    \max\left\{
    8\sqrt r\left[B+\sqrt{\frac8\pi}(1-\alpha)\sigma\right],\,
    \frac{128\alpha\gamma\sqrt r}{\pi}
    \log\left(e+\frac{128\alpha\sqrt r}{\pi}\right)
    \right\},
    \label{eq:pre-smooth-threshold-main}
$
and define
$
    v_c:=(1-\alpha)\sigma^2+\frac{8\alpha\gamma c}{\pi e}.
$
Executing the update rule \cref{eq:pre-update-main} with $\varphi=S_c$ and a step size of $\eta=\sqrt{2\Delta/(LrT)}$ for $T$ iterations yields:
\begin{equation}
    \frac1T\sum_{k=0}^{T-1}\E\norm{G_k}_*
    \le
    2\sqrt{\frac{2Lr\Delta}{T}}
    +4Dr^{3/2}\sqrt{\frac{v_c}{N}}.
    \label{eq:pre-smooth-main-rate}
\end{equation}
Consequently, for $T \ge 32Lr\Delta/\varepsilon^2$ and $N \ge 64D^2r^3v_c/\varepsilon^2$, we guarantee
$
    \frac1T\sum_{k=0}^{T-1}\E\norm{G_k}_*\le\varepsilon.
$
\end{restatable}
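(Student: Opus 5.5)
The proof follows the standard Muon/msign descent argument, with a bias--variance decomposition of the pre-clipped mini-batch average $\bar G_k := \bar G_k^{S_c}$. The steps are as follows.

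\textbf{Step 1: one-step descent.} Since $\|\msign(\bar G_k)\|_F^2 \le r$, $L$-smoothness (\Cref{ass:smooth-main}) gives
\[
f(X_{k+1}) \le f(X_k) - \eta\langle G_k,\msign(\bar G_k)\rangle + \tfrac{L\eta^2 r}{2}.
\]
Write $\langle G_k,\msign(\bar G_k)\rangle = \langle \bar G_k,\msign(\bar G_k)\rangle + \langle G_k-\bar G_k,\msign(\bar G_k)\rangle \ge \|\bar G_k\|_* - \|G_k-\bar G_k\|_*$. Then use $\|\bar G_k\|_* \ge \|G_k\|_* - \|G_k - \bar G_k\|_*$ to get
\[
\langle G_k,\msign(\bar G_k)\rangle \ge \|G_k\|_* - 2\|G_k-\bar G_k\|_*.
\]
Telescoping, dividing by $\eta T$, and plugging in $\eta=\sqrt{2\Delta/(LrT)}$ produces the first term $2\sqrt{2Lr\Delta/T}$ (up to the stated constant). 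What remains is to show
\[
2\,\E\|G_k-\bar G_k\|_* \le 4Dr^{3/2}\sqrt{v_c/N}.
\]

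\textbf{Step 2: norm conversion.} Bound $\|A\|_* \le \sqrt r\,\|A\|_F$, split $G_k - \bar G_k = (G_k - \E\bar G_k) + (\E \bar G_k - \bar G_k)$, and work entry-wise. The factor $D r^{3/2} = \sqrt{qr}\cdot r$ indicates the intended bookkeeping: $\|\cdot\|_F \le \sqrt{mn}\,\|\cdot\|_\infty$-type bounds times $\sqrt r$ and further slack. I would aim for a per-entry bound $|G_{ij} - \E S_c(G_{ij}+\xi)| + \sqrt{\mathrm{Var}(S_c(G_{ij}+\xi))/N} \lesssim \sqrt{r v_c/N}$, and then sum.

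\textbf{Step 3: variance.} The map $S_c$ is bounded, $|S_c(y)| \le c/e$, and satisfies $|S_c(y)| \le |y|$. On the Gaussian branch the second moment is therefore at most $\sigma^2$, after centering by the Gaussian-branch mean. On the Cauchy branch, combine $S_c(y)^2 \le \min(y^2, c^2/e^2)$ with the Cauchy density bound $\gamma/(\pi y^2)$ to get $\int \min(y^2,(c/e)^2)\,\frac{\gamma}{\pi y^2}\,dy \lesssim \frac{\gamma c}{\pi e}$. This yields $\mathrm{Var}\le (1-\alpha)\sigma^2 + \frac{8\alpha\gamma c}{\pi e} = v_c$, with $B$ absorbed by shifting.

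\textbf{Step 4: bias (main obstacle).} The bias $|G_{ij} - \E S_c(G_{ij}+\xi)|$ is the delicate part, because $S_c$ distorts even noise-free entries: $S_c(g) = g e^{-|g|/c}$, so the bias is of order $g^2/c$. The Gaussian contribution is of order $(B+\sigma)^2/c$. The Cauchy contribution is of order $\alpha(|g| + \gamma\log(\cdot)/\ldots)$ and comes from the symmetric tail, where $S_c$ is odd but $y \mapsto S_c(g+\xi)$ is not centered at $g$. The plan is to Taylor-expand around $g$ on $\{|\xi|\le c\}$ and to bound the tail $\{|\xi|>c\}$ by $|g| + c/e$ times $\Pr(|\xi|>c) \le 2\gamma/(\pi c)$, with a logarithmic term arising from $\int_{|\xi|\le c}|\xi|\frac{\gamma}{\pi\xi^2}\,d\xi$.

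Rather than requiring the bias to vanish, I would prove that the bias is at most a constant fraction of $|g|$, specifically $\le |g|/4$ or $|g|/(8\sqrt r)$. The threshold conditions $c \ge 8\sqrt r[B+\sqrt{8/\pi}(1-\alpha)\sigma]$ and $c \ge \frac{128\alpha\gamma\sqrt r}{\pi}\log(e+\cdots)$ are exactly what makes $(B+\sigma)/c \le 1/(8\sqrt r)$ and makes the Cauchy term similarly small. The bias is then absorbed into the descent inequality: $\langle G_k, \msign(\E\bar G_k)\rangle$ loses at most a fraction of $\|G_k\|_*$, which changes only the constant $2$. The remaining stochastic deviation is handled by Jensen's inequality with the variance from Step 3, giving the $\sqrt{v_c/N}$ term. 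Choosing $T$ and $N$ as stated then makes each term at most $\varepsilon/2$.
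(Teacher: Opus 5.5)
Your skeleton matches the paper's argument: the msign descent inequality, the split of $\bar G_k-G_k$ into a zero-mean fluctuation plus a deterministic bias, a variance bound giving $\sqrt r\sqrt{d v_c/N}=Dr^{3/2}\sqrt{v_c/N}$, and absorbing the bias into $\|G_k\|_*$. The gap is in Step 4, which carries the whole argument and does not work as described.

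Absorption needs a \emph{multiplicative} bound $|\E S_c(g+\xi)-g|\le\rho|g|$ with $\rho\le 1/(4\sqrt r)$. Then $\|M_k-G_k\|_*\le\sqrt r\rho\|G_k\|_F\le\sqrt r\rho\|G_k\|_*$. Dividing by $1-2\sqrt r\rho\ge 1/2$ is what produces the constants $2$ and $4$ in the rate; the step size alone gives $\sqrt{2Lr\Delta/T}$.

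Your estimates are additive instead:
\begin{itemize}
\item Bounding the tail $\{|\xi|>c\}$ in absolute value by $(|g|+c/e)\Pr(|\xi|>c)$ leaves the term $\frac{c}{e}\cdot\frac{2\gamma}{\pi c}=\frac{2\gamma}{\pi e}$, which does not vanish as $g\to0$.
\item Taylor-expanding $S_c(g+\xi)$ in $\xi$ around $g$ leaves the remainder $\E[\xi^2\mathbf{1}\{|\xi|\le c\}]/c$. This is $\sigma^2/c$ on the Gaussian branch (the $\sigma^2$ inside your ``$(B+\sigma)^2/c$'') and of order $\gamma$, independent of $c$, on the Cauchy branch.
\end{itemize}
No threshold turns these terms into a fraction of $|g|$. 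An additive per-entry bias $b$ leaves an error of order $\sqrt{rd}\,b$ in the final bound, and that error decays in neither $T$ nor $N$. For the same reason, the targets you set in Steps 1--2 ($2\E\|G_k-\bar G_k\|_*\le 4Dr^{3/2}\sqrt{v_c/N}$, and a per-entry bias $\lesssim\sqrt{rv_c/N}$) are unattainable, since the bias does not shrink with $N$.

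The missing idea is to use the cancellation from oddness and symmetry before estimating anything. Since $\E S_c(\xi)=0$, write $\E S_c(g+\xi)-g=\E[S_c(\xi+g)-S_c(\xi)]-g$ and expand in $g$ around $\xi$, not in $\xi$ around $g$. Because $S_c'$ is $(2/c)$-Lipschitz, $|S_c(\xi+g)-S_c(\xi)-gS_c'(\xi)|\le g^2/c$. Hence $|\E S_c(g+\xi)-g|\le|g|\,(1-\E S_c'(\xi))+g^2/c$, and with $g^2/c\le(B/c)|g|$ every term is proportional to $|g|$.

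The derivative deficit $1-\E S_c'(\xi)$ is then controlled by $1-e^{-t}(1-t)\le 2t$:
\begin{itemize}
\item On the Gaussian branch this gives $\frac{2}{c}\E|Z|=\sqrt{8/\pi}\,\sigma/c$. This is where the $\sqrt{8/\pi}$ in the threshold comes from.
\item On the Cauchy branch, split at $|u|=c$ and use the crude bound $2$ beyond it, which gives $\frac{8\gamma}{\pi c}\log(e+c/\gamma)$. This is the real source of your logarithm, computed with the true density $\gamma/(\pi(\gamma^2+u^2))$; with $\gamma/(\pi u^2)$ the integral you wrote diverges at $0$.
\end{itemize}
The logarithmic-threshold lemma then turns the stated condition on $c$ into $\rho_c\le 1/(4\sqrt r)$.

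Separately, Step 3 needs a small repair. Bound the mixture variance about one deterministic center: $\operatorname{Var}\le\E(S_c(g+\xi)-S_c(g))^2$, with $|S_c(g+\xi)-S_c(g)|\le\min\{|\xi|,2c/e\}$ from $1$-Lipschitzness and $|S_c|\le c/e$. This yields $(1-\alpha)\sigma^2+\alpha\cdot\frac{4\gamma(2c/e)}{\pi}=v_c$. Centering each branch at its own mean drops the between-branch term. Also, $|S_c(y)|\le|y|$ alone gives $g^2+\sigma^2$ rather than $\sigma^2$.
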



\textbf{The heavy tail enters the constants, not the exponent.} Under the explicit noise model in \Cref{ass:noise-main}, despite the raw oracle $\Tilde{G}$ having no first absolute moment, all four rates retain the standard $\gO(\varepsilon^{-4})$ accuracy dependence: the Cauchy heavy tail is absorbed into the threshold and variance constants, not into the complexity exponent. This is possible because our noise is an entry-wise \emph{symmetric location} perturbation around the clean gradient, enabling us to derive the \textit{multiplicative} bias bound $|\E[\varphi(g+\xi)]-g|\le\rho|g|$ after some clipping map $\varphi$. On the contrary, the asymmetric finite $\mathfrak{p}$-moment assumption for the stochastic gradient noise \citep{zhang2020adaptive,nguyen2023improved, madden2024high,liu2024nonconvex}, instead gives an \textit{additive} bias bound of the form $|\E[\varphi (g+\xi)-g]|\le C\sigma^{\mathfrak{p}}/\tau^{\mathfrak{p-1}}$ with the rate degrading as the tail becomes heavier (i.e., as $p \to 1$).
We present detailed proofs of the above theorems in \Cref{app:convergence-proofs}.



\section{Experiments}
\label{sec:experiments}

\begin{figure}
    \centering
    \includegraphics[width=0.95\linewidth]{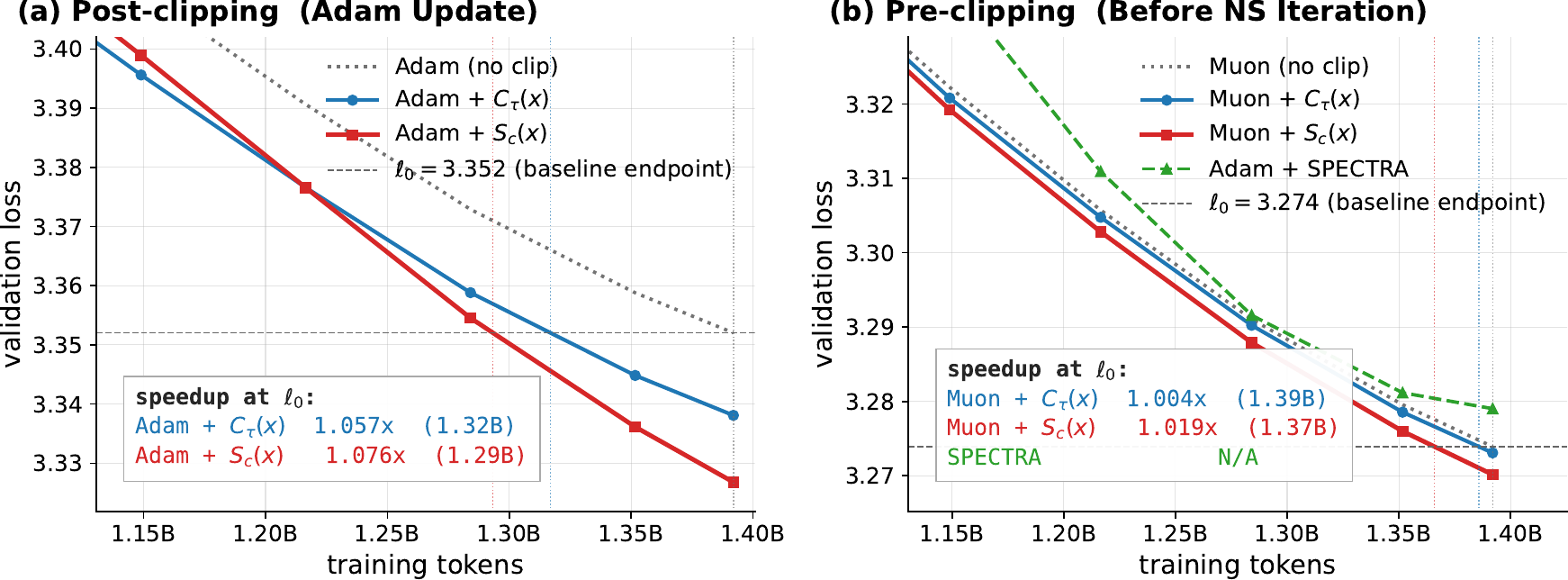}
    \caption{Validation loss vs. tokens on Modded-NanoGPT under hard clipping ($C_{\tau}(x)$) and smooth shrinkage ($S_{c}(x)$). $\ell_0$ is the no-clipping baseline's final loss;  $\text{speedup}:=\text{tokens}_{\text{baseline}}/\text{tokens}_{\text{methods}}$ with linear interpolation between eval points. Smooth shrinkage improves Adam and Muon on NanoGPT through saving $\sim{7.6}\%$ and $\sim{1.9}\%$ training tokens respectively.}
    \label{fig:nano_gpt_result}
\end{figure}

\paragraph{Random Gaussian feature regression.}
We minimize $L(W) = \tfrac{1}{2n}\|WA - Y\|_F^2$  with i.i.d.\ Gaussian features $A$, where gradients are corrupted by the contamination model in~\cref{eq:noise} with Cauchy heavy tails ($H = t_1$). We want to investigate how entry-wise clipping can speed up vanilla GD and spectral GD from post-clipping and pre-clipping under hard clipping and smooth shrinkage operators. We report the median of the minimized loss in \cref{fig:synthetic_result} after a grid search over learning rate and clipping threshold under a fixed number of iteration steps. We observe that both entry-wise methods barely have any speedup when the noise is dominated by Gaussian, consistent with the findings in \citep{marshall2025clip}. However, as the noise progresses into the heavy-tailed region, smooth shrinkage starts to outperform hard clipping, especially when it enters into a heavy-tail dominated region ($\alpha > 0.5$).  We further observe that both clipping methods help recover the spiked spectral structure and rotated subspace corrupted by heavy-tailed noise, as shown in \cref{fig:post_d32,fig:post_d64,fig:pre_d32,fig:pre_d64}(a). This provides an explanation for why clipping is beneficial at the pre-clipping stage. We provide the full setup and additional results in \cref{app:synthetic}. 

\paragraph{Language model pretraining.} In this section, we investigate the benefits of entry-wise clipping for language modeling tasks under two different clipping stages.
We use the modded-nanoGPT codebase \citep{modded_nanogpt_2024}, where it is equipped with rotary embeddings, RMSNorm, linear decay learning-rate schedule, ReLU\textsuperscript{2} activations, three value-embedding layers, and an untied output head, totalling 276M parameters. We pretrain it on FineWeb \citep{penedo2024fineweb}. We report validation loss curves under each method's best learning rate and clipping threshold in \cref{fig:nano_gpt_result}, and observe that the proposed smooth shrinkage operator improves Adam and Muon by saving $\sim 7.6\%$ and $\sim 1.9\%$ of training tokens, respectively. We conjecture the reason why smooth shrinkage outperforms hard clipping is that large-scale language data often contains rare tokens that can easily fall into the heavy-tailed region where smooth shrinkage has a better performance. We provide the detailed experiment settings and hyperparameter sweeping result in \cref{app:lm}.

\section{Conclusion}

In this work, we studied whether entry-wise clipping can provide a computationally cheap mechanism for controlling the spectrum of stochastic matrix gradients under heavy-tailed noise. Motivated by empirical observations from language model pretraining, we showed that real stochastic gradient noise exhibits an entry-wise heavy-tailed and localized structure, in which a small number of large entries can dominate spectral perturbations. To formalize this phenomenon, we introduced a localization ratio derived from first-order singular-value perturbation analysis, which identifies regimes where entry-wise operations can effectively influence spectral behavior.

Building on this observation, we proposed smooth shrinkage, a Bayes-motivated entry-wise clipping operator designed to suppress heavy-tailed contamination while preserving ordinary signal components. Unlike hard coordinate-wise clipping, smooth shrinkage applies an exponentially decaying retention factor, yielding stronger suppression of extreme entries. We established convergence guarantees for both hard clipping and smooth shrinkage under Cauchy-contaminated noise, showing that the heavy-tailed component affects only the constants rather than the standard $\mathcal{O}(\epsilon^{-4})$ complexity exponent. Finally, our experiments demonstrate that smooth shrinkage improves both post-clipping and pre-clipping regimes: it accelerates Adam in NanoGPT pretraining and yields additional gains when applied before Muon-style spectral normalization.

\newpage
\printbibliography

\newpage
\appendix

\begin{center}
    {\large \bf Appendix}
    \vspace{1em}
\end{center}

\addtocontents{toc}{\protect\setcounter{tocdepth}{2}}
\tableofcontents
\bigskip

\newpage
\begin{center}
\Large\textbf{Appendix}
\end{center}
\section{Related Work}
\label{app:relate_work}
\paragraph{Gradient Clipping Methods.}
Global gradient clipping is widely used to stabilize training in deep learning
\citep{pascanu2013difficulty}. When the stochastic gradient noise is heavy-tailed
\citep{simsekli2019tail}, under the assumption of an unbiased gradient oracle with
finite $p$-th central moment for some $p \in (1,2]$, \citet{zhang2020adaptive} first
established the $\Omega\!\bigl(T^{\frac{1-p}{3p-2}}\bigr)$ lower bound for nonconvex
optimization; \citet{zhang2020adaptive} and \citet{nguyen2023improved} respectively
provided matching in-expectation and high-probability upper bounds for clipped SGD,
up to extra logarithmic factors. These analyses, however, rely on iteration-dependent
clipping thresholds, e.g.\ $c_t = c\cdot t^{\frac{1}{3p-2}}$. 

Beyond the global clipping
variants, under coordinate-wise heavy-tailed noise, coordinate-wise clipping has been
shown to converge faster than global clipping for $\mu$-strongly convex objectives
\citep{zhang2020adaptive}. More recently, global clipping has been extended to
non-Euclidean space in the context of large-scale language model training
\citep{pethick2025generalized,jiang2026enhancing}: under a standard bounded-variance
noise model, an $\gO(T^{-1/4})$ rate was established within a Frank-Wolfe analysis
framework \citep{pethick2025generalized}, and \citet{sfyraki2025lions} obtained a high probability
$\tilde\gO\bigl(T^{\frac{1-p}{3p-2}}\bigr)$ rate under the bounded $p$-th moment
assumption. Nonetheless, little is known about how clipping behaves in the
unbounded-noise regime ($p=1$), even when the noise density distribution is known \citep{armacki2023high}.

\paragraph{Normalized Methods.}
Normalized gradient methods \citep{nesterov1984minimization,nesterov2018lectures}
are another line of research that controls gradient noise without requiring an
iteration-dependent clipping threshold. Under the Euclidean norm and a finite
$p$-th moment assumption on the noise, \citet{cutkosky2021high} and
\citet{liu2023breaking} study variants of normalized SGD that incorporate gradient
clipping as an additional ingredient. More recently, \citet{hubler2024gradient}
and \citet{liu2024nonconvex} have shown that normalized SGD attains the optimal
complexity $\gO\!\bigl(T^{\frac{1-p}{3p-2}}\bigr)$ in the nonconvex setting
\emph{without} any clipping. Beyond the Euclidean geometry, under a standard
bounded-variance gradient oracle, SignSGD \citep{bernstein2018signsgd} and Stacey
\citep{luo2025stacey} have been shown to achieve an $\gO(T^{-1/4})$ rate with
stationarity measured in the $\ell_1$ and $\ell_{p^*}$ (dual) norms,
respectively, in the nonconvex setting. Most recently, spectral-norm normalization
(i.e., Muon-type updates) has likewise been shown to attain the $\gO(T^{-1/4})$
complexity under bounded variance \citep{shen2025convergence}.

\paragraph{Spectral Methods.}
Among normalized methods, spectral-norm methods
\citep{carlson2015stochastic,jordan2024muon,bernstein2024old} respect the matrix structure of
large-scale neural networks and have emerged as an alternative to optimizers
that flatten weights into long vectors. Within this spectral geometry, one can
develop adaptive spectral methods \citep{song2026decoupling} and spectral
clipping methods \citep{newhouse2025training,jiang2026enhancing}, in direct
analogy to Adam \citep{kingma2014adam} and global gradient clipping in the
adaptive vector optimizer or vector clipping setting. Nevertheless, how coordinate-wise gradient clipping affects
the spectrum of the weight matrices remains largely unexplored.

\newpage
\section{Additional Information for Smooth Shrinkage and Limitations}
\label{app:ss_limitation}
In this section, we provide additional details and a reference implementation of smooth shrinkage. \Cref{fig:clipping-comparison} compares its damping factor $e^{-|y|/\tau}$ against the hard clipping factor $\min\{1, c/|y|\}$.
\label{app:ss}
\begin{figure}[!h]
  \centering
  \includegraphics[width=0.95\linewidth]{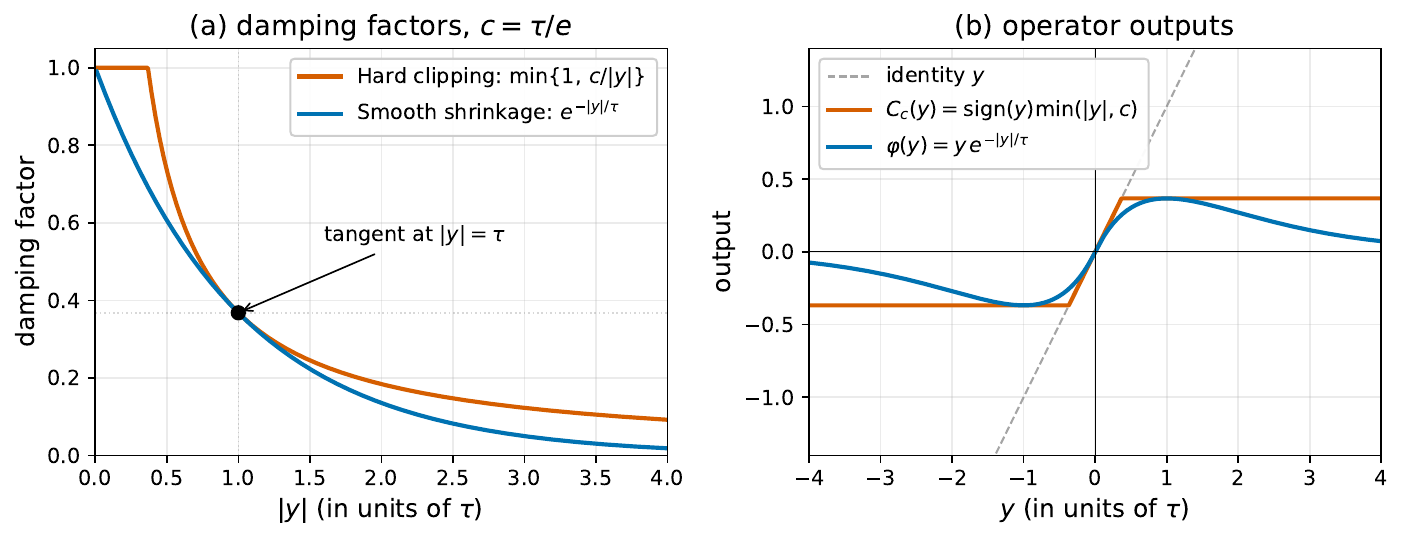}
  \caption{Comparison of smooth shrinkage and hard coordinate-wise clipping at the tangent calibration $c = \tau/e$. \textbf{(a)} Damping factors $e^{-|y|/\tau}$ and $\min\{1, c/|y|\}$ meet tangentially at $|y|=\tau$ with value $1/e$. \textbf{(b)} Operator outputs: hard clipping plateaus at magnitude $c$, while smooth shrinkage peaks near $|y|=\tau$ and decays exponentially, imposing a strictly stronger penalty on large entries.}
  \label{fig:clipping-comparison}
\end{figure}

We present the implementation below, which adds only two lines to the original optimizer code.
\begin{lstlisting}[style=pythonbox]
def smooth_shrinkage(x, q):
    c = torch.quantile(x.abs(), q)
    return x * torch.exp(-x.abs() / c.clamp(min=1e-12))

# ---- Post-clipping: apply phi to any optimizer update direction U ----
U = ...                       # e.g., AdamW: m / (v.sqrt() + eps)
U = smooth_shrinkage(U, q)
W -= lr * U

# ---- Pre-clipping: apply phi to any signal S before matrix-sign ----
S = ...                       # e.g., momentum m, raw gradient g, ...
S = smooth_shrinkage(S, q)
W -= lr * scale * msign(S)    # scale = 0.2 * sqrt(max(n, m))
\end{lstlisting}

\paragraph{Broader Impacts} This work contributes algorithmic improvements to the optimization of matrix-valued weights in large-scale neural network training. The direct positive consequence is improved compute and energy efficiency: token savings of $\sim$$7\%$ for Adam and $\sim$$2\%$ for Muon during NanoGPT pretraining translate to proportional reductions in resource cost and carbon footprint at scale, with potential to broaden access for compute-constrained research groups.

\paragraph{Limitations} The derivation of the above smooth shrinkage in \cref{thm:main} places a Gaussian prior $G_{ij} \overset{\text{iid}}{\sim} \mathcal{N}(0, \sigma_x^2)$ on the clean gradient signal. While the entrywise contamination model (\cref{def:contamination}) captures heavy-tailed fluctuations, the assumption that the underlying full-batch gradient has a Gaussian-like bulk is a modeling choice rather than an empirical fact. Structured signals aligned with task-specific low-rank subspaces may violate it and shift the form of the optimal entrywise estimator. In addition, the dichotomy between entrywise heavy-tailed contamination and bounded-second-moment subspace perturbation does not exhaust the relevant alternatives: an anisotropic low-rank perturbation without bounded second moment can itself induce heavy-tailed spectral structure, and a unified analysis of this is more involved and thus left to future work.
\newpage
\section{Proofs for \texorpdfstring{\Cref{sec:noise_model}}{Section 3}}
\label{app:sec3}

\subsection{Proof of~\cref{thm:taylor}}
\label{app:taylor}

\thmtaylor*

\begin{remark}
The bilinear first-order coefficient $u_1^T E v_1$ is classical and goes back to \citet{magnus1985differentiating}; analytic perturbation theory for simple eigenvalues of symmetric matrices is developed in \citet[Ch.~II]{kato1966perturbation}. The Taylor expansion in the form of \cref{eq:taylor}, together with an explicit quadratic remainder, is not stated in those references, so we provide a self-contained proof in the following section.
\end{remark}

\begin{proof}[Proof of \Cref{thm:taylor}] \

\noindent\textbf{Symmetric reduction}
Define the symmetric block matrices
\[
  A \;:=\; \begin{pmatrix} 0 & G \\ G^T & 0 \end{pmatrix},
  \qquad
  B \;:=\; \begin{pmatrix} 0 & E \\ E^T & 0 \end{pmatrix}
  \;\in\; \RR^{(m+n) \times (m+n)},
\]
and set $N := m + n$. The spectrum of $A$ is
$\{\pm \sigma_k(G)\}_{k=1}^{\min(m, n)} \cup \{0, \dots, 0\}$, with
$2|m - n|$ zero eigenvalues. The hypothesis $\sigma_1(G) > \sigma_2(G) \ge 0$
implies that $\lambda_* := \lambda_1(A) = \sigma_1(G)$ is a simple eigenvalue
with unit eigenvector $w_* := \tfrac{1}{\sqrt 2}\binom{u_1}{v_1}$, and the
gap between the top two eigenvalues of $A$ equals $\delta$. A direct
computation gives $\|B\|_{op} = \|E\|_{op}$ and
$\sigma_1(G + E) = \lambda_1(A + B)$, so it suffices to establish the
expansion for $\lambda_1$.

\medskip
\noindent\textbf{Local differentiability of $\lambda(\cdot)$ and $w(\cdot)$} Here we show the differentiability of $\lambda(\cdot)$ and $w(\cdot)$ along the perturbation direction via the Implicit Function Theorem.

Let $S\in \mathrm{Sym}(N)$ be a $N$ by $N$ symmetric perturbation to the matrix $A$, the eigenvalue problem of on perturbed matrix $A+S$ defines a $S$ parameterized smooth map
$F : \RR \times \RR^N \times \mathrm{Sym}(N) \to \RR \times \RR^N$:
\[
  F(\lambda, w; S) \;=\;
  \begin{pmatrix}
    (A + S - \lambda I)\, w \\[2pt]
    \tfrac{1}{2}\bigl(\|w\|^2 - 1\bigr)
  \end{pmatrix}, \text{ where } \lambda \in \R, w \in \R^{N}.
\]
Recall $\lambda_*$ and $w_*$ are the eigenvalue and eigenvectors of $A$, when $S=0$, we have $F(\lambda_*, w_*, 0) = 0$. We denote $(a,v) \in \RR \times \RR^{N}$ as the local perturbation at $(\lambda_*, w_*)$. The Jacobian of $F(\lambda,w)$ at this point with can be written as
\[
  L(a, v) \;=\;
  \begin{pmatrix}
   -w_* & A-\lambda_*I \\
   0 & w_*
  \end{pmatrix} 
  \begin{pmatrix}
      a \\ v
  \end{pmatrix}.
\]
We claim $L(a, v)=0$ only has trivial solution. If $L(a, v) = 0$, then $v \perp w_*$ and
$a w_* = (A - \lambda_* I)\, v$. Taking the inner product with $w_*$ to the second equation and
using symmetry,
\[
  a \;=\; w_*^T (A - \lambda_* I)\, v
  \;=\; \bigl((A - \lambda_* I)\, w_*\bigr)^T v
  \;=\; 0.
\]
Hence $(A - \lambda_* I)\, v = 0$, so $v \in \ker(A - \lambda_* I) = \mathrm{span}(w_*)$.
Since $\lambda_*$ is simple and there is no repetitive eigenvalue, combined with $v \perp w_*$, this forces
$v = 0$. Then the Jacobian matrix $\begin{pmatrix}
   -w_* & A-\lambda_*I \\
   0 & w_*
  \end{pmatrix}$ is invertible. 
  
Since $F$ is $C^{\infty}$, by the implicit
function theorem there exists a neighborhood $U \subset \mathrm{Sym}(N)$ around 
$0$ and a unique $C^\infty$  map $D_{(\lambda,w)}F: S \mapsto (\lambda(S), w(S))$ on $U$ with $D_{(\lambda,w)}F(0)=(\lambda_*, w_*)$ and
$F(\lambda(S), w(S); S) = 0$.

\medskip
\noindent\textbf{Local differentiability extension.} Now, we want to extend the local differentiability from the neighborhood $U \in \text{Sym}(N)$ around $0$ to all perturbations $B$ that satisfy $\norm{B}_{op} <\delta/4$. 

Parametrize the segment $t \mapsto A + t B$ for $t \in [0, 1]$. By Weyl's
inequality, $|\lambda_k(A + t B) - \lambda_k(A)| \le t\, \|B\|_{op}$ for
every $k$, so
\[
  \delta(t) \;:=\; \lambda_1(A + t B) - \lambda_2(A + t B)
  \;\ge\; \delta - 2 t\, \|B\|_{op}
  \;\ge\; \delta - 2\, \|B\|_{op}
  \;>\; 0
\]
for all $t \in [0, 1]$. The top eigenvalue of $A + t B$ stays simple along
the segment, so the IFT applies at each $t \in [0, 1]$. A connectedness argument (the set of $t \in [0, 1]$ for which a smooth branch ($(\lambda(t),w(t))$)
exists is open, closed, and nonempty) can promote our previous local statement from the IFT to `global' one along the line segment, this yields a $C^\infty$ function
\[
  f : [0, 1] \to \RR, \qquad f(t) \;:=\; \lambda_1(A + t B),
\]
together with a $C^\infty$ unit eigenvector path $w : [0, 1] \to \RR^N$
satisfying $(A + t B)\, w(t) = f(t)\, w(t)$ and $\|w(t)\| = 1$.

\medskip
\noindent\textbf{First derivative derivation.} Now with the differentiability along the line segment,
Differentiating the following perturbed eigenvalue problem with respect to $t$
$$(A + t B)\, w(t) = f(t)\, w(t),$$
this yields
\begin{equation}
  B\, w(t) + (A + t B - f(t) I)\, w'(t) \;=\; f'(t)\, w(t).
  \label{eq:diff_eigen}
\end{equation}
Taking the inner product with $w(t)$ and using both the symmetry of
$A + t B$ and the eigenvector identity
$(A + t B - f(t) I)\, w(t) = 0$,
\begin{equation}
  f'(t) \;=\; w(t)^T B\, w(t).
  \label{eq:fprime}
\end{equation}
At $t = 0$ this gives $f'(0) = w_*^T B\, w_*$. Substituting
$w_* = \tfrac{1}{\sqrt 2}\binom{u_1}{v_1}$ and the block form of $B$,
\[
  w_*^T B\, w_*
  \;=\; \tfrac{1}{2}\,(u_1^T, v_1^T)
        \begin{pmatrix} 0 & E \\ E^T & 0 \end{pmatrix}
        \begin{pmatrix} u_1 \\ v_1 \end{pmatrix}
  \;=\; \tfrac{1}{2}\bigl(u_1^T E v_1 + v_1^T E^T u_1\bigr)
  \;=\; u_1^T E v_1,
\]
which is the claimed first order coefficient
$\sum_{i, j} (u_1)_i (v_1)_j\, E_{ij}$.

\medskip
\noindent\textbf{Second derivative bound.}
Now we want to bound the second-order derivative of $f(t)$. Differentiating \cref{eq:fprime} and using the symmetry of $B$ we have
\begin{align}
  f''(t) &= 2\, w(t)^T B\, w'(t) \\
  &\le  2 \norm{B}_{op} \norm{ w'(t)}.
  \label{eq:fpp}
\end{align}
The second inequality follows by Cauchy and $\norm{w(t)}=1$.
Now we want to bound the term $w'(t)$.

Differentiating $\|w(t)\|^2 = 1$ gives $w(t)^T w'(t) = 0$, so
$w'(t) \in w(t)^\perp$. Rearranging \cref{eq:diff_eigen} yields
\[
  (A + t B - f(t) I)\, w'(t) \;=\; \bigl(f'(t) I - B\bigr)\, w(t).
\]
The right hand side lies in $w(t)^\perp$, since
$w(t)^T (f'(t) I - B)\, w(t) = f'(t) - w(t)^T B\, w(t) = 0$ by
\cref{eq:fprime}. Since $\delta(t) >0$, $A + t B - f(t) I$ restricted to the orthogonal complement $w(t)^{\perp}$
is invertible, and its inverse has operator norm bounded by $1 / \delta(t)$. Since $w'(t) \in w(t)^{\perp}$, we have
\begin{align*}
  \|w'(t)\| &\le \norm{(A+tB-f(t)I)^{-1}}_{op} \|(f'(t) I - B)\, w(t)\|\\
  &\le \frac{\|(f'(t) I - B)\, w(t)\|}{\delta(t)} \\
  &= \frac{\bigl\|B\, w(t) - \bigl(w(t)^T B\, w(t)\bigr) w(t)\bigr\|}{\delta(t)} \\
  &\le \frac{\|B\, w(t)\|}{\delta(t)} \\
  &\le \frac{\|B\|_{op}}{\delta - 2\, \|B\|_{op}},  
\end{align*}

where the third inequality follows from from the orthogonal decomposition of $Bw(t)$ along $w(t)$ and $w(t)^{\perp}$, which gives $\norm{Bw -(w^{T}Bw)w}^2 =\norm{Bw}^2 - |w^{T}Bw|^2 \le \norm{Bw}^2$. When $\norm{B}_{op} \le \delta/4$, combining with
\cref{eq:fpp},
\begin{equation}
  |f''(t)|
  \;\le\; 2\, \|B\|_{op}\, \|w'(t)\|
  \;\le\; \frac{2\, \|B\|_{op}^2}{\delta - 2\, \|B\|_{op}} 
  \;\le\; \frac{4 \norm{B}_{op}^2}{\delta}
  \qquad \text{for all } t \in [0, 1].
  \label{eq:fpp_bound}
\end{equation}

\medskip
\noindent\textbf{Taylor with remainder.} 
By Taylor's theorem in one real variable applied to $f$ on $[0, 1]$, there
exists $\xi \in (0, 1)$ such that
\[
  f(1) \;=\; f(0) + f'(0) + \tfrac{1}{2}\, f''(\xi).
\]
Set $\theta(E) := \tfrac{1}{2}\, f''(\xi)$. The bound \cref{eq:fpp_bound}
gives $|\theta(E)| \le \frac{4\norm{B}_{op}^2}{\delta}$.
Translating back via $f(1) = \lambda_1(A + B) = \sigma_1(G + E)$,
$f(0) = \sigma_1(G)$, $f'(0) = u_1^T E v_1$, and
$\|B\|_{op} = \|E\|_{op}$ delivers \cref{eq:taylor}.
\end{proof}

\newpage
\section{Proofs for \Cref{sec:operator}}
\label{app:sec4_bayes}

Throughout this section, we use $\phi_{\sigma_x}$ to denote the density of
$\gN(0,\sigma_x^2)$, $h$ as the density of $H$, and $p_Y$ as the marginal
density of $Y=X+E$ (Gaussian or heavy-tailed component as required).

 
 
 
 
 

\subsection{Proof of \cref{thm:main}}
\thmmain*

We separate the detailed proof of \cref{thm:main} into two parts:
\paragraph{Proof of Part~(i): the Bayes-optimal estimator}
\label{ssec:part_i} \
 
The proof of part~(i) combines two short propositions with a posterior collapse lemma. Here, we let $Z \in \{\gN, H\}$ denote the latent mixture indicator, with $\Pr(Z = \gN) = 1 - \alpha$. In \cref{prop:mixture}, we expand the conditional expectation using the latent indicator variable, and in \cref{prop:gaussian_posterior}, we present the result of the conditional expectation using a conjugate prior. 
 
\begin{proposition}[Mixture decomposition of the posterior mean]
\label{prop:mixture}
Conditioning on the latent indicator $Z$ and applying the tower property, we have $
  \E[x \mid y] \;=\; \pi(y)\,\E[x \mid y, Z=\gN] \;+\; (1-\pi(y))\,\E[x \mid y, Z=H].
$
\end{proposition}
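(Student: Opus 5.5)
The plan is to introduce the latent indicator $Z\in\{\gN,H\}$ explicitly as part of the joint model for $(x,e,Z)$. Here $Z$ is independent of $x$, $\Pr(Z=\gN)=1-\alpha$, and conditional on $Z$ the noise $e$ is drawn from the corresponding component, independently of $x$. The tower property then gives
\[
\E[x\mid y]=\E\bigl[\E[x\mid y,Z]\bigm| y\bigr].
\]
Since $Z$ takes only two values, this outer conditional expectation is a finite sum:
\[
\E[x\mid y]=\Pr(Z=\gN\mid y)\,\E[x\mid y,Z=\gN]+\Pr(Z=H\mid y)\,\E[x\mid y,Z=H].
\]
By definition $\pi(y)=\Pr(e\sim\gN\mid y)=\Pr(Z=\gN\mid y)$, and $\Pr(Z=H\mid y)=1-\pi(y)$. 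The two weights are therefore exactly as in the statement.

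To make this rigorous at the level of densities, and not just as a formal manipulation, I would write out the joint density of $(x,y,Z)$:
\[
p(x,y,Z=\gN)=(1-\alpha)\,\phi_{\sigma_x}(x)\,\phi_\sigma(y-x),\qquad p(x,y,Z=H)=\alpha\,\phi_{\sigma_x}(x)\,h(y-x).
\]
Integrating out $x$ gives the component marginals $p_\gN(y)=(\phi_{\sigma_x}*\phi_\sigma)(y)$ and $p_H(y)=(\phi_{\sigma_x}*h)(y)$. The overall marginal is then $p_Y(y)=(1-\alpha)p_\gN(y)+\alpha p_H(y)$, which is strictly positive because $\phi_{\sigma_x}>0$. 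This gives the explicit posterior weight
\[
\pi(y)=\frac{(1-\alpha)p_\gN(y)}{p_Y(y)}.
\]

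Next I compute $\E[x\mid y]=\int x\,p(x,y)\,dx/p_Y(y)$. The joint density $p(x,y)$ splits as the sum of the two component joints above, so the integral splits the same way. In each piece I multiply and divide by the component marginal. This produces exactly $\pi(y)\,\E[x\mid y,Z=\gN]$ plus $(1-\pi(y))\,\E[x\mid y,Z=H]$, where each conditional mean is $\int x\,p(x,y\mid Z)\,dx/p_Z(y)$.

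The only step needing care is integrability, i.e. that each conditional mean exists. Heavy tails of $h$ cause no trouble here. The Gaussian prior factor $\phi_{\sigma_x}(x)$ supplies integrability in $x$: $|x|\phi_{\sigma_x}(x)$ is integrable, and $h(y-x)$ is bounded whenever $h$ is bounded. That boundedness follows from the regularity conditions of Lemma~\ref{lem:posterior_collapse}, and holds in any case for the Cauchy and Student-$t$ families. So all integrals are finite and the splitting is legitimate.

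In the edge case $\alpha=0$, we have $\pi\equiv1$ and the $H$-term has weight zero, so the identity holds trivially. I expect no substantive obstacle: the whole proposition is the law of total expectation over a two-point latent variable.
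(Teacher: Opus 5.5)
Your proposal is correct and is essentially the paper's argument: the paper justifies this proposition only by the tower property over the two-point latent indicator $Z$, with $\Pr(Z=\gN\mid y)=\pi(y)$ given by the Bayes formula it writes later, and your density-level computation just makes that one line explicit. One small simplification is available. Integrability of the component posterior means needs no boundedness of $h$, which you assert but do not derive from the hypotheses of \cref{lem:posterior_collapse}: since $|x|\,\phi_{\sigma_x}(x)\le (2\pi e)^{-1/2}$ and $\int h=1$, it follows that $\int |x|\,\phi_{\sigma_x}(x)\,h(y-x)\,dx<\infty$ for every $y$.
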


\begin{proposition}[Conjugate Gaussian component]
\label{prop:gaussian_posterior}
For $x \sim \gN(0, \sigma_x^2)$ and $e \mid Z{=}\gN \sim \gN(0, \sigma^2)$ independent of $x$, the standard Gaussian conjugate calculation gives $\E[x \mid y, Z=\gN] = \beta\,y$ with $\beta = \sigma_x^2/(\sigma_x^2 + \sigma^2)$.
\end{proposition}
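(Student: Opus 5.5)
The statement to prove is Proposition~\ref{prop:gaussian_posterior}: on the event $Z=\gN$, the conditional mean of $x$ given $y$ equals $\beta y$.

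The plan is to reduce the claim to a standard jointly Gaussian computation. Conditional on $Z=\gN$, the pair $(x,e)$ consists of independent centered Gaussians with variances $\sigma_x^2$ and $\sigma^2$. This uses the independence of $x$ from the latent indicator and from $e$, which is built into the model of Theorem~\ref{thm:main}. Consequently $(x,y)=(x,x+e)$ is a jointly Gaussian vector conditional on $Z=\gN$. It has mean zero and covariance
\[
\begin{pmatrix} \sigma_x^2 & \sigma_x^2 \\ \sigma_x^2 & \sigma_x^2+\sigma^2 \end{pmatrix}.
\]

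For a centered jointly Gaussian pair, the conditional mean is linear: $\E[x\mid y]=\frac{\mathrm{Cov}(x,y)}{\mathrm{Var}(y)}\,y$. I would justify this in one line. The residual $x-\beta y$ is uncorrelated with $y$, since $\mathrm{Cov}(x-\beta y,y)=\sigma_x^2-\beta(\sigma_x^2+\sigma^2)=0$. Being jointly Gaussian with $y$, the residual is therefore independent of $y$, and it has mean zero. Hence $\E[x\mid y,Z=\gN]=\beta y+\E[x-\beta y]=\beta y$ with $\beta=\sigma_x^2/(\sigma_x^2+\sigma^2)$.

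As an alternative check, one can complete the square in $\phi_{\sigma_x}(x)\,\phi_\sigma(y-x)$. This shows the posterior is $\gN\bigl(\beta y,\ \sigma_x^2\sigma^2/(\sigma_x^2+\sigma^2)\bigr)$, which gives the same mean.

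No step here is a genuine obstacle. The only point needing care is that conditioning on both $y$ and $Z=\gN$ is legitimate. This requires $\Pr(Z=\gN)=1-\alpha>0$, which holds because $\alpha<1$. The conditional law then has the product density above, so the Gaussian computation applies verbatim. The degenerate case $\sigma=0$ is also covered: there $y=x$ and $\beta=1$.
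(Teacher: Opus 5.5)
Your proposal is correct and follows the same route as the paper. The paper simply invokes the standard Gaussian conjugate calculation, and your residual-decorrelation argument (with completing the square as a check) is that computation written out. You also correctly use $\alpha<1$ to make conditioning on $Z=\gN$ legitimate, which the paper leaves implicit.
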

  
\begin{restatable}[Posterior collapse of the heavy tailed component]{lemma}{lemposterior}
\label{lem:posterior_collapse}
Let $x \sim \gN(0, \sigma_x^2)$ and let $e \sim H$ be independent of $x$, where the density $h$ is strictly positive and lies in $C^2(\RR)$. Suppose there exist constants $C_1, C_2, T_0 > 0$ such that for all $|t| \ge T_0$,
\begin{equation}
  \Bigl|\tfrac{d}{dt}\log h(t)\Bigr| \;\le\; \tfrac{C_1}{|t|},
  \qquad
  \Bigl|\tfrac{d^2}{dt^2}\log h(t)\Bigr| \;\le\; \tfrac{C_2}{t^2}.
\label{eq:score_curvature}
\end{equation}
Then there exist explicit constants $\widetilde C = 3 C_1$ and $y_1 = y_1(C_1, C_2, T_0, h, \sigma_x) < \infty$ such that
$
  \bigl|\E[x \mid y, Z = H]\bigr| \;\le\; \frac{\widetilde C\,\sigma_x^2}{|y|}
  \  \text{for all } |y| \ge y_1.
$
In particular, for any $\varepsilon > 0$ and $|y| \ge \max(y_1, \widetilde C \sigma_x^2/\varepsilon)$, $|\E[x \mid y, Z = H]| \le \varepsilon$.
\end{restatable}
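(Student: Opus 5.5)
The plan is to write the posterior mean explicitly as a ratio of integrals and treat $\log h(y-x)$ as a slowly varying perturbation of $\log h(y)$ on the region where the Gaussian prior carries its mass. Conditionally on $Z=H$,
\[
  \E[x\mid y, Z=H] \;=\; \frac{\int x\,\phi_{\sigma_x}(x)\,h(y-x)\,dx}{\int \phi_{\sigma_x}(x)\,h(y-x)\,dx}
  \;=\; \frac{\int x\,\phi_{\sigma_x}(x)\,\bigl[h(y-x)/h(y)\bigr]\,dx}{\int \phi_{\sigma_x}(x)\,\bigl[h(y-x)/h(y)\bigr]\,dx}.
\]
Gaussian integration by parts gives $\int x\,\phi_{\sigma_x}(x)g(x)\,dx=\sigma_x^2\int \phi_{\sigma_x}(x)g'(x)\,dx$. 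With $g(x)=h(y-x)$ and $g'(x)=-h'(y-x)=-(\log h)'(y-x)\,h(y-x)$, this yields the exact identity
\[
  \E[x\mid y, Z=H] \;=\; -\sigma_x^2\,\E\bigl[(\log h)'(y-x)\,\big|\, y, Z=H\bigr].
\]
So it suffices to show that the posterior average of the score at $y-x$ is at most $3C_1/|y|$. This identity is a Tweedie-type formula and is the backbone of the argument. It requires only $h\in C^1$ plus integrability, which holds because $h$ is a density and the score is bounded on compacts away from where it was assumed.

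Split the $x$-integral into a central region $|x|\le |y|/2$ and its complement. Take $y>0$; the case $y<0$ follows by symmetry.

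On the central region, $|y-x|\ge |y|/2\ge T_0$ once $|y|\ge 2T_0$. The score condition then gives $|(\log h)'(y-x)|\le 2C_1/|y|$, so this region contributes at most $2C_1/|y|$ times its posterior mass, which is at most $2C_1/|y|$.

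The tail region is the main obstacle. Here the Gaussian factor is $e^{-y^2/(8\sigma_x^2)}$, but $h(y-x)$ could be large, for instance near $x\approx y$, where $h$ attains its bulk values. The weights must be compared with the normalizer.

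\emph{Lower bound on the denominator.} For $|x|\le\sigma_x$, integrating the score bound from $y$ to $y-x$ gives $h(y-x)/h(y)\ge (1-\sigma_x/y)^{C_1}\cdot\text{const}\ge 1/2$ for large $y$. Hence the denominator, measured in units of $h(y)$, is at least a constant $c_0>0$.

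\emph{Polynomial lower bound on $h$.} Integrating the score bound from $T_0$ outward gives $h(y)\ge h(T_0)(T_0/y)^{C_1}$.

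\emph{Tail numerator.} The numerator over the tail is at most $\sup|h'|\cdot\int_{|x|>y/2}\phi_{\sigma_x}(x)\,dx$, which is bounded by $\|h'\|_\infty e^{-y^2/(8\sigma_x^2)}$. Here $h'=(\log h)'h$ is bounded, since it is continuous and $|h'|\le C_1 h/|t|$ in the tails. Dividing by $c_0 h(y)\gtrsim y^{-C_1}$ leaves a quantity of order $y^{C_1}e^{-y^2/(8\sigma_x^2)}$. This is at most $C_1/|y|$ once $|y|\ge y_1$, with $y_1$ depending on $C_1$, $T_0$, $h(T_0)$, $\|h'\|_\infty$ and $\sigma_x$.

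Summing the two regions gives the constant $\widetilde C=3C_1$. The curvature condition with $C_2$ serves as a backup: if needed, it controls the second-order Taylor expansion of $\log h(y-x)$ on $|x|\le\sigma_x\sqrt{\log y}$. That sharpens the denominator lower bound and the central-score estimate, and it explains why $y_1$ may depend on $C_2$.

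The final clause follows immediately by taking $|y|\ge\max(y_1,\widetilde C\sigma_x^2/\varepsilon)$.
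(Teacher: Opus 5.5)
Your proposal is correct and follows essentially the paper's route. Both arguments use the same Tweedie-type identity $\E[x\mid y,Z=H]=-\sigma_x^2\,\E[(\log h)'(y-x)\mid y,Z=H]$, a good region where the score bound gives $2C_1/|y|$, and a bad region whose Gaussian factor $e^{-y^2/(8\sigma_x^2)}$ beats a polynomial lower bound on the marginal, with $y_1$ chosen so that piece is at most $C_1/|y|$. The differences are cosmetic: you split on $|x|\le|y|/2$ rather than $|y-x|>|y|/2$, and you bound $|h'|$ globally rather than bounding $|\psi|\le M$ times the posterior mass of the bad event.

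The one step to tighten is $\|h'\|_\infty<\infty$. The inequality $|h'(t)|\le C_1h(t)/|t|$ only helps once $h$ is known to be bounded in the tails. That follows from the score condition together with $\int h=1$: for $t\ge T_0$ you have $h\ge 2^{-C_1}h(t)$ on $[t,2t]$, which forces $h(t)\le 2^{C_1}/t$, and symmetrically for $t\le -T_0$.
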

 
We want to note that the conditions in \cref{eq:score_curvature} from the above lemma are mild. The Student $t_\nu$ family for any $\nu > 0$ (including the Cauchy case $\nu = 1$) satisfies them with $C_1 = C_2 = \nu + 1$ and $T_0 = \sqrt\nu$, and symmetric stable laws satisfy them analogously~\citep{polson2010shrink}. The intuition for the above lemma is that for observation $|y| \gg \sigma_x$, the heavy tailed likelihood $h(y - x)$ is nearly flat over the effective support of the prior, so the posterior reverts toward $\gN(0, \sigma_x^2)$ and concentrates near the origin. This is the classical outlier rejection phenomenon for heavy tailed likelihoods studied by~\citet{o1979outlier} and~\citet{pericchi1992exact}. For completeness, we provide complete proof of \Cref{lem:posterior_collapse} in \Cref{app:posterior_collapse_proof}.

\paragraph{Complete Proof of Part~(i).}
Combining \Cref{prop:mixture}, \Cref{prop:gaussian_posterior}, and \Cref{lem:posterior_collapse}, for $|y| \ge y^\star := \max\!\bigl(y_1,\, \widetilde C \sigma_x^2/\varepsilon\bigr)$ we obtain
\(
  \E[x \mid y] = \pi(y)\,\beta\,y + \rho(y),
\)
where $\rho(y) = (1 - \pi(y))\,\E[x \mid y, Z = H]$. Since $1 - \pi(y) \in [0, 1]$, the lemma gives $|\rho(y)| \le |\E[x \mid y, Z = H]| \le \varepsilon$. The bound is uniform in the contamination parameters $(\alpha, \sigma)$. \qed
 
\paragraph{Proof of Part~(ii): the Unique Closed Form Surrogate}
\label{ssec:part_ii}

By Bayes' rule, the exact retention probability $\pi(y) = \Pr[e\sim \gN | y]$ is
\begin{align}
  \pi(y)
  &= \frac{\Pr[y|e\sim \gN] \cdot\Pr[e \sim \gN ]}{\Pr[y|e\sim \gN] \cdot\Pr[e \sim \gN] + \Pr[y|e\sim H] \cdot\Pr[e \sim H]} \\ 
  &=\Bigl[\,1 + \tfrac{\alpha}{1 - \alpha} \cdot
        \frac{f_{Y \mid Z = H}(y)}{f_{Y \mid Z = \gN}(y)}\,\Bigr]^{-1},
\label{eq:retention_bayes}
\end{align}
with $f_{Y \mid Z = \gN} = \gN(0, \sigma_x^2 + \sigma^2)$ and
$f_{Y \mid Z = H} = \phi_{\sigma_x} \ast h$. The right hand side admits
no closed form, depends on $(\alpha, \sigma, h)$, and decays faster than
any exponential in $|y|$ because $f_{Y \mid Z = \gN}$ is Gaussian while
$f_{Y \mid Z = H}$ has heavy tails. We therefore replace $\pi$ by a
closed form surrogate $\widehat\pi$ obeying the three axioms stated in
\Cref{thm:main}.

\paragraph{Justification of the axioms.}
\emph{Redescending.} The surrogate $\widehat\pi$ must satisfy
$\widehat\pi(y) \to 0$ as $|y| \to \infty$, mirroring $\pi$ itself: in
\cref{eq:retention_bayes}, $f_{Y \mid Z = H}(y) / f_{Y \mid Z = \gN}(y) \to \infty$, so $\pi(y) \to 0$.

\emph{Multiplicative accumulation.} We require that evidence of
contamination accumulate at a constant rate per unit of magnitude,
independently of the magnitude already accumulated. Equivalently,
$\widehat\pi(y_1 + y_2) = \widehat\pi(y_1)\,\widehat\pi(y_2)$.

\emph{Unit at origin.} An observation of zero magnitude carries no
evidence of contamination and is treated as Gaussian, i.e.,
$\widehat\pi(0) = 1$.

\paragraph{Complete Proof of Part~(ii).}
The multiplicative axiom is the Cauchy exponential functional equation on
$[0, \infty)$. Under continuity, its only solutions $\widehat\pi : [0, \infty) \to (0, 1]$
are the exponentials $\widehat\pi(y) = e^{-c y}$ for some $c \ge 0$. The redescending axiom forces
$c > 0$. Writing $c = 1 / \tau$ for $\tau > 0$ and extending evenly to
$\RR$ via $y \mapsto |y|$ yields $\widehat\pi(y) = e^{-|y| / \tau}$.
Substituting into \cref{eq:thm_factor} gives the smooth shrinkage operator
$\varphi_\tau(y) = \beta\,e^{-|y| / \tau}\,y$, which completes the proof of
\Cref{thm:main}. \qed

\subsection{Proof of \Cref{lem:posterior_collapse}}
\label{app:posterior_collapse_proof}
 
\lemposterior*
\begin{proof}
Here we give a non-asymptotic proof for the posterior collapse, under the assumption of score and curvature condition in 
\cref{eq:score_curvature}.  We write $X\sim\gN(0,\sigma_x^2)$, $E\sim H$
independent of $X$, $Y=X+E$, marginal
$p_Y(y)=\int\phi_{\sigma_x}(x)h(y-x)\,dx$, and posterior mean
$m(y):=\mathbb E[X\mid Y=y]$.  We set $\psi(u):=-(\log h)'(u)=-h'(u)/h(u)$;
by \cref{eq:score_curvature}, $|\psi(u)|\le C_1/|u|$ for $|u|\ge T_0$.
 
\paragraph{Tweedie-type identity.} Since $m(y)=E[X|Y=y]= \int x \frac{p_{X,Y}(x,y)}{p_{Y}(y)}dx$, and $p_{X,Y}(x,y)= p_{X}(x)p_{Y|X}(y|x)$ we have
\begin{align}
  m(y)\,p_Y(y)
  &=\int x\,\phi_{\sigma_x}(x)h(y-x)\,dx \notag \\
   &\stackrel{(a)}{=}-\sigma_x^2\!\int\phi_{\sigma_x}'(x)h(y-x)\,dx \notag \\
  &\stackrel{(b)}{=}-\sigma_x^2\!\int\phi_{\sigma_x}(x)\,h'(y-x)\cdot(-1)\,dx \notag \\
   &\stackrel{(c)}{=}\sigma_x^2\!\int\phi_{\sigma_x}(x)\,\psi(y-x)\,h(y-x)\,dx, \notag \\
   &\stackrel{(d)}{=}\sigma_x^2\,\mathbb E[\psi(Y-X)\mid Y=y] p_{Y}(y)  \label{eq:tweedie}.
\end{align}
where $(a)$ uses $x\,\phi_{\sigma_x}(x)=-\sigma_x^2\,\phi_{\sigma_x}'(x)$, $(b)$ uses integration
by parts (boundary terms vanish since $\phi_{\sigma_x}\to 0$
super-exponentially and $h$ is bounded),
, $(c)$ uses $h'(u)=-\psi(u)h(u)$, and $(d)$ uses the definition of conditional expectation of $\psi(Y-X)$. 
 
\paragraph{Global bound on $\psi$.}
We claim $\|\psi\|_\infty\le M:=\max\!\bigl(\sup_{|u|\le T_0}|\psi(u)|,\,
(C_1+C_2)/T_0\bigr)<\infty$.  On $[-T_0,T_0]$, $\psi$ is continuous (since
$h\in C^1$ and $h>0$) hence bounded.  For $|u|\ge T_0$,
$|\psi'(u)|=|(\log h)''(u)|\le C_2/u^2$ by~\cref{eq:score_curvature}, so
\(
|\psi(u)|\le|\psi(T_0)|+\!\int_{T_0}^{|u|}\!|\psi'(s)|\,ds
\le C_1/T_0+\!\int_{T_0}^\infty\!C_2/s^2\,ds=(C_1+C_2)/T_0.
\)
 
\paragraph{Bound $m(y)$ via event splitting.} We create two event $A$ and $A^c$ that splits $Y-X$ into a good and bad region. In particular,
fix $|y|\ge 2T_0$ and split on $A:=\{|Y-X|>|y|/2\}$.  On $A$, $|Y-X|\ge T_0$,
and the score bound gives $|\psi(Y-X)|\le C_1/|Y-X|\le 2C_1/|y|$.  On $A^c$, $|Y-X|\le T_0$, the global bound gives $|\psi(Y-X)|\le M$.  Using the Tweedie-type identity \cref{eq:tweedie} we have
\begin{equation}
  |m(y)|
  \le\sigma_x^2\!\left[\tfrac{2C_1}{|y|}\,\mathbb P(A\mid Y=y)+M\,\mathbb P(A^c\mid Y=y)\right]
  \le\frac{2C_1\sigma_x^2}{|y|}+M\sigma_x^2\,\mathbb P(A^c\mid Y=y).
\label{eq:master}
\end{equation}
 
\paragraph{Bound $\mathbb P(A^c\mid Y=y)$.}
WLOG, we assume $y>0$ (the case $y<0$ is symmetric).  Then $A^c=\{X\in[y/2,3y/2]\}$
and
\(
\mathbb P(A^c\mid Y=y)=\!\int_{y/2}^{3y/2}\phi_{\sigma_x}(x)h(y-x)\,dx\,/\,p_Y(y).
\) We want to show $\mathbb P(A^c\mid Y=y)$ will become expoentially small when $y$ closes to $x$. In particular,
We bound the numerator and denominator using the power-law envelopes from
\Cref{lem:tail_envelope} below.
 
\emph{Numerator.}
On $A^c$, let $u:=y-x$, we know $|u|\le y/2$ and $\phi_{\sigma_x}$ is maximized at $x=y/2$:
$\phi_{\sigma_x}(x)\le\frac{1}{\sqrt{2\pi}\sigma_x}e^{-y^2/(8\sigma_x^2)}$.
Setting $H_0:=\sup_{|u|\le T_0}h(u)<\infty$ and using \Cref{lem:tail_envelope}
on $T_0\le|u|\le y/2$,
$\sup_{|u|\le 2|y|}h(u)\le\max(H_0,h(T_0)(2|y|/T_0)^{C_1})=:K_1(|y|)$, hence
\begin{equation}
  \int_{y/2}^{3y/2}\phi_{\sigma_x}(x)h(y-x)\,dx
  \;\le\;y\cdot\tfrac{e^{-y^2/(8\sigma_x^2)}}{\sqrt{2\pi}\sigma_x}\cdot K_1(y).
\label{eq:numerator_ub}
\end{equation}
 
\emph{Denominator.}
To derive a lower bound for $p_{Y}$ on $\R$, we first restrict $p_Y$ to $|x|\le 1$.  Then
$\phi_{\sigma_x}(x)\ge\frac{1}{\sqrt{2\pi}\sigma_x}e^{-1/(2\sigma_x^2)}$.
For $|y|\ge T_0+1$, \Cref{lem:tail_envelope} gives the lower envelope
$h(y-x)\ge h(T_0)\,(T_0/(|y|+1))^{C_1}$, hence
\begin{equation}
  p_Y(y)\;\ge\;2\cdot\tfrac{e^{-1/(2\sigma_x^2)}}{\sqrt{2\pi}\sigma_x}
  \cdot h(T_0)\,(T_0/(|y|+1))^{C_1}.
\label{eq:denominator_lb}
\end{equation}
 
\emph{Ratio.}
Dividing \cref{eq:numerator_ub} by \cref{eq:denominator_lb}, the
$1/(\sqrt{2\pi}\sigma_x)$ factors cancel:
\begin{equation}
  \mathbb P(A^c\mid Y=y)\;\le\;
  \underbrace{\tfrac{|y|}{2}\cdot\tfrac{K_1(|y|)}{h(T_0)}\cdot
              \bigl(\tfrac{|y|+1}{T_0}\bigr)^{C_1}\cdot
              e^{-y^2/(8\sigma_x^2)+1/(2\sigma_x^2)}}_{=:R(y)}.
\label{eq:R_def}
\end{equation}
$R(y)$ is the product of an at-most-polynomial factor (degree
$C_1+\max(1,C_1)$ in $|y|$) and the Gaussian factor $e^{-y^2/(8\sigma_x^2)}$,
hence decays super-exponentially as $|y|\to\infty$.
 
\paragraph{Put every together.}
Substituting \cref{eq:R_def} into \cref{eq:master},
$|m(y)|\le 2C_1\sigma_x^2/|y|+M\sigma_x^2\,R(y)$.  Define $y_1\ge 2T_0$ as
the smallest value such that $M\,R(y)\le C_1/|y|$ for all $|y|\ge y_1$.  For
$|y|\ge y_1$,
\(
|m(y)|\le 2C_1\sigma_x^2/|y|+\sigma_x^2\,C_1/|y|=3C_1\sigma_x^2/|y|,
\)
proving \cref{lem:posterior_collapse} with $\widetilde C=3C_1$.
The $\varepsilon$-form follows by taking
$|y|\ge\max(y_1,\widetilde C\sigma_x^2/\varepsilon)$.
\end{proof}

\medskip
 
\begin{lemma}[Power-law envelope from score control]
\label{lem:tail_envelope}
Under the score condition $|(\log h)'(t)|\le C_1/|t|$ on $|t|\ge T_0$, for
every $|t|\ge T_0$,
\(
h(T_0)\,(T_0/|t|)^{C_1}\le h(t)\le h(T_0)\,(|t|/T_0)^{C_1}.
\)
\end{lemma}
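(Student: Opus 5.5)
Since $\log h$ is differentiable with $|(\log h)'(s)|\le C_1/|s|$ for $|s|\ge T_0$, the plan is to integrate the score along the ray from $T_0$ to $t$ (or from $-T_0$ to $t$ when $t\le -T_0$). For $t\ge T_0$, the fundamental theorem of calculus applied to $\log h$ (valid since $h>0$ and $h\in C^2$) gives
\[
  \bigl|\log h(t)-\log h(T_0)\bigr| \;\le\; \int_{T_0}^{t}\frac{C_1}{s}\,ds \;=\; C_1\log\frac{t}{T_0}.
\]
Exponentiating both sides of $-C_1\log(t/T_0)\le \log h(t)-\log h(T_0)\le C_1\log(t/T_0)$ yields the two-sided envelope $h(T_0)(T_0/t)^{C_1}\le h(t)\le h(T_0)(t/T_0)^{C_1}$ directly.

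For $t\le -T_0$ the same integration over $[t,-T_0]$ gives the envelope with $h(-T_0)$ as the anchor in place of $h(T_0)$. To obtain the stated form with $h(T_0)$, I will invoke the symmetry of the heavy-tailed family $H$ assumed in \Cref{def:contamination}, so that $h(-T_0)=h(T_0)$ and $h(t)=h(|t|)$; the negative case then reduces to the positive one. If one prefers not to rely on symmetry, I would instead state the bound with anchor $h(\operatorname{sign}(t)T_0)$, which is all that the uses in the proof of \Cref{lem:posterior_collapse} require, up to a constant factor $\max/\min$ of $h(\pm T_0)$ absorbed into $K_1$ and $R(y)$.

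The only step needing care is this anchoring and symmetry issue; the integration itself is routine. I also note that the upper envelope is crude, since it allows growth, but that is harmless because it is only ever used multiplied against a Gaussian factor.
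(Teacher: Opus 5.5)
Your proposal is correct and follows the paper's proof: integrate the score bound $|(\log h)'(s)|\le C_1/s$ over $[T_0,t]$ to get $|\log h(t)-\log h(T_0)|\le C_1\log(t/T_0)$, exponentiate, and reduce $t\le -T_0$ to the positive case by the symmetry of $h$. The paper invokes that symmetry too, even though it is not listed among the hypotheses of Lemma~\ref{lem:posterior_collapse}, and your fallback of anchoring at $h(\operatorname{sign}(t)T_0)$ is a harmless refinement that removes the need for it.
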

 
\begin{proof}
Since $h(s)$ is symmetric, Without loss of generality, for $t\ge T_0$, $\log h(t)-\log h(T_0)=\!\int_{T_0}^{t}(\log h)'(s)\,ds$, so
$|\log h(t)-\log h(T_0)|\le C_1\!\int_{T_0}^{t}\!ds/s=C_1\log(t/T_0)$.
Taking the exponentiation on both sides yields the two-sided bound.
\end{proof}
\newpage

\section{Proofs for \texorpdfstring{\Cref{sec:convergence}}{Section 5}}
\label{app:convergence-proofs}

This section proves the four theorems in \Cref{sec:convergence}.  We start the convergence proof with an entry-wise property on the gradient matrix that can be proved under a known Cauchy distribution.  The same scalar property are used twice: first for post-clipping of one noisy gradient, and then for pre-clipped Muon.  Throughout the appendix, $\E_k[\cdot]$ denotes conditional expectation over minibatch $\xi_k$ given filtration$\gF_{k-1}=\sigma(\xi_1,\dots,\xi_{k-1})$. $G \in \R^{m \times n}$ is the true gradient matrix and $g := (G)_{ij}$ is the gradient entry inside the true gradient matrix $G$.

\paragraph{The relative-bias principle for matrix scalars}
\label{app:relative-principle}

For an entry-wise pre-processing map $\varphi$, define the scalar processed mean
\begin{equation}
    m_\varphi(g):=\E_\xi\varphi(g+\xi),
    \qquad
    \xi\sim (1-\alpha)N(0,\sigma^2)+\alpha\,\mathrm{Cauchy}(0,\gamma).
    \label{eq:scalar-processed-mean-app}
\end{equation}
For a matrix $G$, $m_\varphi(G)$ means that \cref{eq:scalar-processed-mean-app} is applied entry by entry. We isolate the central scalar estimate as a named property that all four convergence theorems will rely on.

\begin{definition}[Relative-Bias Property]
\label{def:relative-bias-app}
A coordinate-wise map $\varphi:\R\to\R$ satisfies the \emph{relative-bias property} with constant $\rho\in[0,1)$ on $[-B,B]$ under the noise model \cref{eq:scalar-processed-mean-app} if, for every $|g|\le B$,
\begin{equation}
    |m_\varphi(g)-g|\le \rho |g|.
    \label{eq:relative-bias-app}
\end{equation}
\end{definition}
\cref{eq:relative-bias-app} is not an unbiasedness statement: it says that the processed conditional mean is a \emph{multiplicative} perturbation of the clean signal. This is a stronger result than a uniform additive-bias bound for optimization, because the bias vanishes whenever the true gradient entry is zero. In general,
\[
    m_\varphi(g)=\E\varphi(g+\xi)
    \neq \varphi(g)
    \qquad\text{and}\qquad
    m_\varphi(g)\neq g.
\]
Lemmas~\ref{lem:hard-relative-bias-app} and~\ref{lem:smooth-relative-bias-app} establish this relative-bias property for $C_\tau$ and $S_c$ respectively, with explicit constants $\rho_\tau$ and $\rho_c$ that depend on the threshold and the noise scales. Next, we show how we use the relative bias alignment result to prove the convergence result for both post-clipping and pre-clipping.

\paragraph{Relative bias alignment in \emph{post-clipping}}
For the post-clipped update \cref{eq:post-update-main}, define
\begin{equation}
    Y_k^\varphi:=\varphi(G_k+\Xi_k),
    \qquad
    M_{k,\mathrm{post}}^\varphi:=\E_kY_k^\varphi=m_\varphi(G_k).
    \label{eq:post-mean-app}
\end{equation}
If \cref{eq:relative-bias-app} holds entrywise for all $k$, then
\begin{equation}
    \norm{M_{k,\mathrm{post}}^\varphi-G_k}_F\le \rho\norm{G_k}_F.
    \label{eq:post-frob-rel-app}
\end{equation}
Consequently,
\begin{equation}
\begin{aligned}
    \inner{G_k}{M_{k,\mathrm{post}}^\varphi}
    &=\norm{G_k}_F^2+\inner{G_k}{M_{k,\mathrm{post}}^\varphi-G_k} \\
    &\ge \norm{G_k}_F^2-\norm{G_k}_F\norm{M_{k,\mathrm{post}}^\varphi-G_k}_F \\
    &\ge (1-\rho)\norm{G_k}_F^2.
\end{aligned}
\label{eq:post-alignment-principle-app}
\end{equation}
This is the post-clipping use of the relative-bias principle: it converts a biased processed update into a direction that remains positively aligned with the true gradient, provided $\rho<1$.

\paragraph{Relative bias absorption in \emph{pre-clipping}}
For pre-clipping, the processed mini-batch mean is
\begin{equation}
    \bar G_k^\varphi=\frac1N\sum_{\ell=1}^N\varphi(G_k+\Xi_k^{(\ell)}),
    \qquad
    M_{k,\mathrm{pre}}^\varphi:=\E_k\bar G_k^\varphi=m_\varphi(G_k).
    \label{eq:pre-mean-app}
\end{equation}
The exact decomposition is
\begin{equation}
    \bar G_k^\varphi-G_k
    =\underbrace{(\bar G_k^\varphi-M_{k,\mathrm{pre}}^\varphi)}_{\text{zero-mean processed sampling fluctuation}}
    +\underbrace{(M_{k,\mathrm{pre}}^\varphi-G_k)}_{\text{deterministic conditional bias}}.
    \label{eq:correct-decomp-app}
\end{equation}
The first term is controlled by a variance estimate for $\varphi(g+\xi)$ and by independence across the sample index.  The second term is deterministic conditional on $X_k$.  If \cref{eq:relative-bias-app} holds for all $k$,
\begin{equation}
    \norm{M_{k,\mathrm{pre}}^\varphi-G_k}_*
    \le \sqrt r\norm{M_{k,\mathrm{pre}}^\varphi-G_k}_F
    \le \sqrt r\rho\norm{G_k}_F
    \le \sqrt r\rho\norm{G_k}_*.
    \label{eq:pre-bias-principle-app}
\end{equation}
When this is inserted into the matrix-sign descent lemma, it appears on the same side as the stationarity measure $\norm{G_k}_*$ and can be moved to the left as long as $2\sqrt r\rho<1$.  This is the pre-clipped Muon use of the same principle.  The difference between \cref{eq:post-alignment-principle-app} and \cref{eq:pre-bias-principle-app} is purely algorithmic: post-clipping needs alignment of the processed update with $G_k$, while Muon needs a nuclear-norm control of the error in the matrix-sign direction. 

\paragraph{Organization.} We first present preliminary Lemmas in \cref{app:scalar-inequalities}. In \cref{app:hard-scalar} and \cref{app:smooth-scalar}, we prove both hard clipping operator $C_{\tau}(x)$ and smooth shrinkage $S_{c}(x)$ satisfies the Relative-Bias property in \cref{def:relative-bias-app}. In \cref{app:post-proofs}, we apply this property to the descent lemma given by $L$-smoothness and prove the convergence result of \cref{thm:post-hard-main} and \cref{thm:post-smooth-main}. Similarly, we use the same trick to prove \cref{thm:pre-hard-main} and \cref{thm:pre-smooth-main} in \cref{app:pre-proofs}.

\subsection{Elementary Scalar Inequalities}
\label{app:scalar-inequalities}

\begin{lemma}[Gaussian and Cauchy tails]
\label{lem:scalar-tails-app}
Let $Z\sim N(0,\sigma^2)$ and $H\sim\mathrm{Cauchy}(0,\gamma)$.  For $t\ge0$,
\begin{equation}
    \mathbb P(|Z|\ge t)\le 2\exp\left(-\frac{t^2}{2\sigma^2}\right),
    \label{eq:gauss-tail-app}
\end{equation}
with the convention that this probability is zero when $\sigma=0<t$.  For $a>0$,
\begin{equation}
    \mathbb P(H>a)=\mathbb P(H<-a)
    =\frac1\pi\arctan\left(\frac\gamma a\right)
    \le \frac{\gamma}{\pi a}.
    \label{eq:cauchy-tail-app}
\end{equation}
Moreover,
\begin{equation}
    \E\min\{H^2,a^2\}\le \frac{4\gamma a}{\pi},
    \label{eq:cauchy-trunc-two-sided-app}
\end{equation}
and the one-sided version obeys
\begin{equation}
    \int_0^\infty \min\{h^2,a^2\}\frac{\gamma}{\pi(\gamma^2+h^2)}\,dh
    \le \frac{2\gamma a}{\pi}.
    \label{eq:cauchy-trunc-one-sided-app}
\end{equation}
\end{lemma}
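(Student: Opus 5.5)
The Gaussian tail bound \cref{eq:gauss-tail-app} is the standard Chernoff estimate, so I will only sketch it. For $t>0$ and $\sigma>0$, Markov's inequality applied to $e^{\lambda Z}$ gives
\[
\mathbb P(Z\ge t)\le \exp\bigl(-\lambda t+\lambda^2\sigma^2/2\bigr).
\]
Optimizing at $\lambda=t/\sigma^2$ gives $\exp(-t^2/(2\sigma^2))$. Symmetry then gives the factor $2$ for $|Z|$. When $\sigma=0$ we have $Z\equiv 0$, so the convention that the probability is zero for $t>0$ is immediate.

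For the Cauchy tail \cref{eq:cauchy-tail-app}, I will integrate the density directly:
\[
\mathbb P(H>a)=\int_a^\infty \frac{\gamma}{\pi(\gamma^2+h^2)}\,dh=\frac1\pi\Bigl(\frac\pi2-\arctan\frac a\gamma\Bigr)=\frac1\pi\arctan\frac\gamma a.
\]
The middle step uses the antiderivative $\frac1\pi\arctan(h/\gamma)$, and the last uses the identity $\arctan(x)+\arctan(1/x)=\pi/2$ for $x>0$. Equality with $\mathbb P(H<-a)$ follows from symmetry of the density. The inequality then follows from $\arctan x\le x$ for $x\ge 0$.

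The main computation is the truncated second moment. I will prove the one-sided bound \cref{eq:cauchy-trunc-one-sided-app} first and obtain \cref{eq:cauchy-trunc-two-sided-app} by doubling, since the integrand is even. I split the integral at $h=a$.

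On $[0,a]$, I use $\frac{h^2}{\gamma^2+h^2}\le 1$, which gives
\[
\int_0^a \frac{\gamma h^2}{\pi(\gamma^2+h^2)}\,dh\le \frac{\gamma a}{\pi}.
\]
On $[a,\infty)$, the integrand is $a^2$ times the density, so this piece equals $a^2\,\mathbb P(H>a)\le a^2\cdot\gamma/(\pi a)=\gamma a/\pi$ by the tail bound just proved.

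Summing the two pieces gives $2\gamma a/\pi$, and doubling gives $4\gamma a/\pi$. I do not expect a real obstacle here. The only point needing care is using the crude bound $h^2/(\gamma^2+h^2)\le1$ on the inner region rather than attempting an exact evaluation. The exact value, $\frac{\gamma}{\pi}\bigl(a-\gamma\arctan(a/\gamma)\bigr)$, is also at most $\gamma a/\pi$, so either route works.
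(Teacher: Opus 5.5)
Your proof is correct and follows the paper's argument essentially step for step: a Chernoff bound for the Gaussian, direct integration plus $\arctan x\le x$ for the Cauchy tail, and splitting the truncated second moment at $h=a$ with $h^2/(\gamma^2+h^2)\le 1$ on $[0,a]$, then doubling by symmetry. The only cosmetic difference is on $[a,\infty)$: you reuse the tail bound $a^2\,\mathbb P(H>a)\le \gamma a/\pi$, whereas the paper bounds $\gamma^2+h^2\ge h^2$ directly, and both give $\gamma a/\pi$.
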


\begin{proof}
The Gaussian bound is the standard Chernoff tail bound.  For the Cauchy tail, integrate the density:
\[
    \mathbb P(H>a)=\int_a^\infty \frac{\gamma}{\pi(\gamma^2+h^2)}\,dh
    =\frac1\pi\arctan\left(\frac\gamma a\right)
    \le \frac{\gamma}{\pi a},
\]
where the last step uses $\arctan x\le x$ for $x\ge0$.  For the one-sided truncation bound, split the integral at $a$:
\[
\int_0^a h^2\frac{\gamma}{\pi(\gamma^2+h^2)}\,dh
\le \int_0^a \frac{\gamma}{\pi}\,dh
=\frac{\gamma a}{\pi},
\]
and
\[
\int_a^\infty a^2\frac{\gamma}{\pi(\gamma^2+h^2)}\,dh
\le a^2\int_a^\infty \frac{\gamma}{\pi h^2}\,dh
=\frac{\gamma a}{\pi}.
\]
Adding these two estimates proves \cref{eq:cauchy-trunc-one-sided-app}; symmetry gives \cref{eq:cauchy-trunc-two-sided-app}.
\end{proof}

\begin{lemma}[A logarithmic threshold device]
\label{lem:log-device-app}
Let $a\ge0$ and $s\ge1$.  If
\[
    x\ge 2sa\log(e+2sa),
\]
then
\[
    \frac{a\log(e+x)}{x}\le \frac1s.
\]
\end{lemma}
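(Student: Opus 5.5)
The claim to prove is Lemma~\ref{lem:log-device-app}: if $a\ge0$, $s\ge1$ and $x\ge 2sa\log(e+2sa)$, then $a\log(e+x)/x\le 1/s$. The plan is a direct monotonicity argument.

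\textbf{Reduction.} First dispose of $a=0$: the left side is zero, so the claim holds provided $x>0$, which is the implicit standing convention in how the lemma is used. For $a>0$, set $b:=2sa>0$ and rewrite the target as $\log(e+x)\le x/(sa)=2x/b$, i.e.
\[
  F(x):=\frac{2x}{b}-\log(e+x)\ge0
  \qquad\text{for all } x\ge x_0:=b\log(e+b).
\]
The strategy is to check $F(x_0)\ge0$ and then show $F$ is nondecreasing on $[x_0,\infty)$.

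\textbf{Monotonicity.} Compute $F'(x)=2/b-1/(e+x)$. This is nonnegative as soon as $e+x\ge b/2$. Since $\log(e+b)\ge1$, we have $x\ge x_0\ge b\ge b/2$, so $F'\ge0$ on $[x_0,\infty)$. It therefore suffices to verify the inequality at $x=x_0$.

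\textbf{Base point.} At $x_0$ we need $2\log(e+b)\ge\log\!\bigl(e+b\log(e+b)\bigr)$, which is equivalent to
\[
  e+b\log(e+b)\le (e+b)^2.
\]
This is the only step needing a small estimate, so I expect it to be the main, though still minor, obstacle. It follows from $\log(e+b)\le e+b$: indeed $b\log(e+b)\le b(e+b)$, and $e+b(e+b)\le(e+b)^2=e^2+2eb+b^2$ holds because $e\le e^2$ and $eb\le 2eb$. Combining the base-point inequality with monotonicity gives $F(x)\ge0$ for all $x\ge x_0$, i.e. $a\log(e+x)\le x/(2s)\le x/s$, which proves the lemma.

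In fact the argument yields the stronger bound $a\log(e+x)/x\le 1/(2s)$, leaving a factor-two margin that the later threshold computations for $S_c$ may use.
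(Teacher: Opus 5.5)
Your argument is correct and is essentially the paper's proof. Both reduce to the base point $x_0=2sa\log(e+2sa)$ by monotonicity (the paper uses that $x/\log(e+x)$ is increasing; you use that $2x/b-\log(e+x)$ is nondecreasing on $[x_0,\infty)$), and both close with the same estimate $e+b\log(e+b)\le(e+b)^2$.

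One correction to your final line. Since $2x/b=x/(sa)$, the inequality $F(x)\ge0$ is exactly $a\log(e+x)\le x/s$, not $x/(2s)$. Your closing claim that $a\log(e+x)/x\le 1/(2s)$ is false: at $x=x_0$ it would require $\log(e+b)\le 1$, i.e.\ $b\le 0$. Delete that remark. The later threshold lemmas need only the $1/s$ bound, with $s=4$ and $s=8\sqrt r$.
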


\begin{proof}
If $a=0$, the claim is immediate.  Suppose $a>0$ and set $A:=2sa$.  The function $x/\log(e+x)$ is increasing on $(0,\infty)$ because
\[
    \frac{d}{dx}\frac{x}{\log(e+x)}
    =\frac{\log(e+x)-x/(e+x)}{\log^2(e+x)}>0.
\]
Therefore it is enough to verify the claim at $x_0=A\log(e+A)$.  Since
\[
    e+x_0=e+A\log(e+A)\le (e+A)^2,
\]
we have $\log(e+x_0)\le 2\log(e+A)$.  Hence
\[
    x_0=A\log(e+A)\ge \frac A2\log(e+x_0)=sa\log(e+x_0),
\]
which is the desired inequality at $x_0$.
\end{proof}

\subsection{Proofs of Scalar Relative bias and Variance under $C_{\tau}$ }
\label{app:hard-scalar}

Recall $C_\tau(x)=\operatorname{sign}(x)\min\{|x|,\tau\}$.

\begin{lemma}[Relative bias of hard clipping]
\label{lem:hard-relative-bias-app}
Let $|g|\le B<\tau$, and let $\xi_{0} \sim (1-\alpha) N(0,\sigma_0^2) + \alpha \mathrm{Cauchy}(0,\gamma_0)$ with $0\le \sigma_0 \le \sigma$ and $0 < \gamma_0\le \gamma $.  Then
\begin{equation} 
    |\E C_\tau(g+\xi_0)-g|
    \le \rho_\tau |g|,
    \label{eq:hard-relative-bias-app}
\end{equation}
where
\begin{equation}
    \rho_\tau:=2(1-\alpha)\bar\Phi\left(\frac{\tau-B}{\sigma}\right)
    +\frac{\alpha\gamma}{\pi}\left(\frac1{\tau-B}+\frac1{\tau+B}\right),
    \label{eq:rho-hard-app}
\end{equation}
$\bar\Phi(t):=\mathbb P(N(0,1)\ge t)$, and the Gaussian term is interpreted as zero when $\sigma=0$.
\end{lemma}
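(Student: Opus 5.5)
The plan is to exploit two facts: the noise $\xi_0$ is symmetric about $0$, and $C_\tau$ is odd. Together they let me rewrite the bias without ever taking the (nonexistent) mean of a Cauchy variable. Throughout, $C_\tau$ is bounded, so every expectation below is finite.

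\emph{Symmetrization.} By symmetry of $\xi_0$ I can write
\[
\E C_\tau(g+\xi_0)-g=\E D(\xi_0),\qquad D(\xi):=\tfrac12\bigl(C_\tau(g+\xi)+C_\tau(g-\xi)\bigr)-g .
\]
Here $D$ is even and bounded. By oddness of $C_\tau$ it suffices to treat $0\le g\le B<\tau$; the case $g=0$ gives $D\equiv0$.

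\emph{Computing $D$ for $\xi\ge0$.} There are three regimes.
\begin{itemize}[nosep]
\item For $0\le\xi\le\tau-g$, both arguments are unclipped, so $D=0$.
\item For $\tau-g\le\xi\le\tau+g$, only $g+\xi$ is clipped, so $D=(\tau-g-\xi)/2\in[-g,0]$.
\item For $\xi>\tau+g$, both arguments are clipped, so $D=-g$.
\end{itemize}
The key observation is the exact representation
\[
|D(\xi)|=\tfrac12\int_{-g}^{g}\mathbf 1\{\xi\ge\tau+s\}\,ds \qquad (\xi\ge0).
\]
Using evenness of $D$ and Fubini, this gives
\[
|\E D(\xi_0)|\le 2\E\bigl[|D(\xi_0)|\mathbf 1\{\xi_0\ge0\}\bigr]=\int_{-g}^{g}\mathbb P(\xi_0\ge\tau+s)\,ds .
\]
The tail probability splits over the mixture as $(1-\alpha)\mathbb P(Z\ge\tau+s)+\alpha\,\mathbb P(H\ge\tau+s)$, with $Z\sim N(0,\sigma_0^2)$ and $H\sim\mathrm{Cauchy}(0,\gamma_0)$.

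\emph{Gaussian part.} I bound the integrand crudely by its value at $s=-g$. This gives $2g\,\bar\Phi((\tau-g)/\sigma_0)\le 2g\,\bar\Phi((\tau-B)/\sigma)$, since $\bar\Phi$ is decreasing and $\sigma_0\le\sigma$, $g\le B$. This is zero when $\sigma_0=0$.

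\emph{Cauchy part.} By \eqref{eq:cauchy-tail-app} the integrand is $\frac1\pi\arctan(\gamma_0/(\tau+s))$. This is convex in $s$ on $s>-\tau$, because $x\mapsto\arctan(\gamma_0/x)$ is convex and decreasing for $x>0$. So the integral over $[-g,g]$ is at most $g$ times the sum of the endpoint values. Then I apply $\arctan x\le x$ together with $\gamma_0\le\gamma$ and $g\le B$, which gives
\[
g\,\frac{\gamma}{\pi}\Bigl(\frac1{\tau-B}+\frac1{\tau+B}\Bigr).
\]
The monotonicity in $g$ holds because $\frac1{\tau-g}+\frac1{\tau+g}=\frac{2\tau}{\tau^2-g^2}$ is increasing in $g$. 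Multiplying the two parts by their mixture weights reproduces exactly $\rho_\tau$ in \eqref{eq:rho-hard-app}.

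The main obstacle is getting the sharp Cauchy constant $\frac1{\tau-B}+\frac1{\tau+B}$ rather than the cruder $\frac2{\tau-B}$ that the bound $|D|\le g\,\mathbf 1\{\xi\ge\tau-g\}$ would give. The integral representation of $|D|$ combined with convexity of the Cauchy tail is what I expect to deliver it. The remaining care is justifying the symmetrization step by boundedness of $C_\tau$ rather than by any moment of $\xi_0$.
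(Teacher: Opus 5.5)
Your proof is correct. It reaches the same exact identity as the paper, but derives it differently.

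\textbf{The paper's route.} It sets $m_W(s):=\E C_\tau(s+W)$ and uses $m_W(0)=0$. It differentiates under the integral to get $m_W'(s)=\mathbb{P}(|s+W|<\tau)$ and integrates from $0$ to $g$. This gives $m_W(g)-g=-\int_0^{g}\mathbb{P}(|s+W|\ge\tau)\,ds$, which it bounds by $|g|\sup_{|s|\le B}\mathbb{P}(|s+W|\ge\tau)$. It does this separately for $W=Z$ and $W=H$ and then mixes by linearity.

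\textbf{Your route.} Symmetrization, the piecewise formula for $D$, and the layer-cake representation give, for $g\ge 0$, $\E C_\tau(g+\xi_0)-g=-\int_{-g}^{g}\mathbb{P}(\xi_0\ge\tau+s)\,ds$. This is the paper's formula after folding $s\mapsto -s$ and using the symmetry of $\xi_0$.

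\textbf{What each buys.} Your derivation needs only boundedness of $C_\tau$, symmetry, and Tonelli. It avoids differentiating under the integral, and the point-mass case $\sigma_0=0$ needs no separate treatment. The paper's derivative formulation is more modular. It is the template reused for $S_c$ through the deficit $1-\E S_c'(\xi_0)$.

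\textbf{The step you flag as the main obstacle.} It is easier than you expect. Your convexity (Hermite--Hadamard) argument is valid but not needed. Write $\int_{-g}^{g}\phi(s)\,ds=\int_0^{g}\bigl[\phi(s)+\phi(-s)\bigr]\,ds$ with $\phi(s)=\mathbb{P}(H\ge\tau+s)$. Then bound pointwise: $\phi(s)+\phi(-s)\le\frac{\gamma_0}{\pi}\bigl(\frac{1}{\tau-s}+\frac{1}{\tau+s}\bigr)\le\frac{\gamma_0}{\pi}\bigl(\frac{1}{\tau-g}+\frac{1}{\tau+g}\bigr)$. This already gives the sharp constant. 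It is exactly what the paper does by pairing the two tails of $\mathbb{P}(|s+H|\ge\tau)$ at a common $s$ before taking the supremum.
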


\begin{proof}
We first prove the claim for a generic symmetric scalar noise $W$ with a continuous density.  Define
\[
    m_W(s):=\E C_\tau(s+W).
\]
Because $C_\tau$ is odd and $W$ is symmetric, $m_W(0)=0$.  The map $C_\tau$ is Lipschitz and is differentiable except at $\pm\tau$, with derivative $1\{|x|<\tau\}$ at differentiability points.  Differentiating under the integral gives
\[
    m_W'(s)=\mathbb P(|s+W|<\tau).
\]
Thus
\[
    m_W(g)-g
    =\int_0^g (m_W'(s)-1)\,ds,
\]
and hence
\[
    |m_W(g)-g|
    \le |g|\sup_{|s|\le B}\mathbb P(|s+W|\ge\tau).
\]

For $W=Z\sim N(0,\sigma_0^2)$, if $|s|\le B$ and $|s+Z|\ge\tau$, then $|Z|\ge\tau-B$, hence $\mathbb P(|s+Z|\ge\tau)\le \mathbb P(|Z|\ge\tau-B)$. When $\sigma_0>0$ this is at most $2\bar\Phi((\tau-B)/\sigma_0)$ by the standard Gaussian tail bound; when $\sigma_0=0$, $Z\equiv 0$ and the probability is exactly zero, since $\tau>B$. Combining the two cases under the convention that $\bar\Phi((\tau-B)/0):=0$ (consistent with \cref{eq:gauss-tail-app}) and using monotonicity of $\bar\Phi$,
\[
    |m_Z(g)-g|
    \le 2\bar\Phi\left(\frac{\tau-B}{\sigma_0}\right)|g|
    \le 2\bar\Phi\left(\frac{\tau-B}{\sigma}\right)|g|.
\]

For $W=H\sim\mathrm{Cauchy}(0,\gamma_0)$, \cref{eq:cauchy-tail-app} gives, uniformly over $|s|\le B$,
\[
\begin{aligned}
    \mathbb P(|s+H|\ge\tau)
    &=\mathbb P(H\ge\tau-s)+\mathbb P(H\le-\tau-s) \\
    &\le \frac{\gamma_0}{\pi}\left(\frac1{\tau-s}+\frac1{\tau+s}\right)
    \le \frac{\gamma_0}{\pi}\left(\frac1{\tau-B}+\frac1{\tau+B}\right) \le \frac{\gamma}{\pi}\left(\frac1{\tau-B}+\frac1{\tau+B}\right),
\end{aligned}
\]
where the supremum over $|s|\le B$ is achieved at the endpoints $s=\pm B$. Taking the mixture of the Gaussian and Cauchy bounds proves \cref{eq:hard-relative-bias-app}.
\end{proof}

\begin{lemma}[Variance of hard clipping]
\label{lem:hard-variance-app}
Let $|g|\le B<\tau$, let $\xi_{0} \sim (1-\alpha) N(0,\sigma_0^2) + \alpha \mathrm{Cauchy}(0,\gamma_0)$ with $0\le \sigma_0 \le \sigma$ and $0 < \gamma_0\le \gamma $.  Then
\begin{equation}
    \operatorname{Var}(C_\tau(g+\xi_0))
    \le v_\tau,
    \qquad
    v_\tau:=(1-\alpha)\sigma^2+\frac{4\alpha\gamma\tau}{\pi}.
    \label{eq:v-hard-app}
\end{equation}
\end{lemma}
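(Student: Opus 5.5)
The plan is to bound the variance by a second moment about a well-chosen constant, and to split along the mixture components. The starting point is that for any constant $a$, $\operatorname{Var}(X)\le \E(X-a)^2$. I will take $a=g$ and write $\xi_0=(1-\alpha)$-branch $Z\sim N(0,\sigma_0^2)$ or $\alpha$-branch $H\sim\mathrm{Cauchy}(0,\gamma_0)$. Conditioning on the latent indicator then gives
\[
\operatorname{Var}(C_\tau(g+\xi_0))\le (1-\alpha)\,\E\bigl(C_\tau(g+Z)-g\bigr)^2+\alpha\,\E\bigl(C_\tau(g+H)-g\bigr)^2 .
\]

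For the Gaussian branch, I will use that $C_\tau$ is $1$-Lipschitz and that $C_\tau(g)=g$ because $|g|\le B<\tau$. This gives
\[
|C_\tau(g+Z)-g| = |C_\tau(g+Z)-C_\tau(g)|\le |Z|,
\]
so the branch contributes at most $\E Z^2=\sigma_0^2\le\sigma^2$.

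For the Cauchy branch, the same Lipschitz argument yields $|C_\tau(g+H)-g|\le |H|$. In addition, since both $C_\tau(g+H)$ and $g$ lie in $[-\tau,\tau]$, we also have $|C_\tau(g+H)-g|\le 2\tau$. Hence
\[
(C_\tau(g+H)-g)^2\le \min\{H^2,4\tau^2\}.
\]
Applying \cref{eq:cauchy-trunc-two-sided-app} with $a=2\tau$ gives $8\gamma_0\tau/\pi$, which is a factor $2$ too large.

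The main obstacle is recovering the constant $4\gamma\tau/\pi$ rather than $8\gamma\tau/\pi$. I would try two routes.

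\emph{Route one: center differently.} Since $C_\tau(g+H)$ lies in $[-\tau,\tau]$, I can instead center the Cauchy branch at $0$ (or at $g$) and use $\min\{(g+H)^2,\tau^2\}$-type bounds. The subtlety is that the variance of a mixture is not bounded by the mixture of second moments about different centers. So I keep the common center $g$ but refine the deviation bound asymmetrically. For $g\ge0$, the deviation is at most $\min\{|H|,\tau-g\}$ when $H>0$ and at most $\min\{|H|,\tau+g\}$ when $H<0$. Integrating each side with the one-sided bound \cref{eq:cauchy-trunc-one-sided-app} then gives
\[
\frac{2\gamma_0}{\pi}\bigl[(\tau-g)+(\tau+g)\bigr]=\frac{4\gamma_0\tau}{\pi}\le \frac{4\gamma\tau}{\pi}.
\]
This works cleanly because the one-sided lemma is linear in the truncation level $a$.

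\emph{Route two: fall back on the additive-mixture inequality.} If that linearity argument needs care, I would simply ensure the mixture inequality above is valid. It is valid, since conditioning on the indicator gives $\E(X-g)^2=(1-\alpha)\E[(X-g)^2\mid Z]+\alpha\E[(X-g)^2\mid H]$ exactly.

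Summing the two branches then yields $v_\tau=(1-\alpha)\sigma^2+4\alpha\gamma\tau/\pi$.
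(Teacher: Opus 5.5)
Your proof is correct and follows the paper's argument: center at $g$, bound the Gaussian branch via the $1$-Lipschitz property, and treat the Cauchy branch with the asymmetric truncation levels $\tau-g$ and $\tau+g$ and the one-sided bound \cref{eq:cauchy-trunc-one-sided-app}, which sums to $4\gamma_0\tau/\pi$. The restriction to $g\ge 0$ in your Route one is unnecessary, since the deviation formulas $\min\{|H|,\tau-g\}$ and $\min\{|H|,\tau+g\}$ hold for every $|g|<\tau$.
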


\begin{proof}
For any random variable $Y$ and deterministic number $a$, $\operatorname{Var}(Y)\le\E(Y-a)^2$.  Here we choose the center $a=g$.  This is natural for hard clipping because $C_\tau(g)=g$ whenever $|g|\le B<\tau$.

For the Gaussian branch, $C_\tau$ is $1$-Lipschitz, so
\[
    |C_\tau(g+Z)-g|=|C_\tau(g+Z)-C_\tau(g)|\le |Z|.
\]
Thus the Gaussian contribution is at most $\E Z^2=\sigma_0^2 \le \sigma^2$.

For the Cauchy branch, use the positive and negative halves separately.  If $h\ge0$, then moving from $g$ to $g+h$ can increase the clipped value by at most $\tau-g$, so
\[
    |C_\tau(g+h)-g|=\min\{h,\tau-g\}.
\]
If $h\le0$, then moving from $g$ to $g+h$ can decrease the clipped value by at most $\tau+g$, so
\[
    |C_\tau(g+h)-g|=\min\{|h|,\tau+g\}.
\]
Applying the one-sided truncation bound \cref{eq:cauchy-trunc-one-sided-app} to the two halves gives
\[
    \E|C_\tau(g+H)-g|^2
    \le \frac{2\gamma_0(\tau-g)}{\pi}+\frac{2\gamma_0(\tau+g)}{\pi}
    =\frac{4\gamma_0\tau}{\pi} \le \frac{4\gamma\tau}{\pi}.
\]
Mixing the two branches proves the lemma.
\end{proof}

\subsection{Proofs of Scalar Relative bias and Variance under $S_{c}$ }
\label{app:smooth-scalar}

Recall $S_c(x)=xe^{-|x|/c}$.

\begin{lemma}[Elementary properties of smooth shrinkage]
\label{lem:smooth-properties-app}
The map $S_c$ is odd, $1$-Lipschitz, and satisfies $|S_c(x)|\le c/e$ for all $x$.  Its derivative is
\[
    S_c'(x)=e^{-|x|/c}\left(1-\frac{|x|}{c}\right),
\]
and $S_c'$ is $2/c$-Lipschitz.
\end{lemma}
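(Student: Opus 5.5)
The plan is to verify the four claims of \Cref{lem:smooth-properties-app} directly from $S_c(x)=xe^{-|x|/c}$, treating $x\ge0$ and extending by symmetry.

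\emph{Oddness and boundedness.} Oddness is immediate, since $|{-x}|=|x|$ gives $S_c(-x)=-S_c(x)$. For the bound, note $|S_c(x)|=|x|e^{-|x|/c}$ and maximize $t\mapsto te^{-t/c}$ over $t\ge0$. Its derivative $e^{-t/c}(1-t/c)$ vanishes only at $t=c$, which gives the maximum value $c/e$.

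\emph{Derivative formula.} For $x>0$, the product rule gives $S_c'(x)=e^{-x/c}(1-x/c)$. Because $S_c$ is odd, its derivative is even, so the same formula holds for $x<0$ with $x$ replaced by $|x|$. At $x=0$, the difference quotient $S_c(h)/h=e^{-|h|/c}\to1$, which matches the formula at $0$. Hence $S_c$ is $C^1$ with the stated derivative.

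\emph{1-Lipschitz.} It suffices to show $|S_c'(x)|\le1$. Write $t=|x|/c\ge0$ and $g(t)=e^{-t}(1-t)$. On $[0,1]$, $0\le g\le1$, since $g$ is a product of two factors in $[0,1]$. For $t>1$, $|g(t)|=(t-1)e^{-t}$; maximizing gives the critical point $t=2$ and the maximum $e^{-2}<1$. By the mean value theorem, $S_c$ is therefore 1-Lipschitz.

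\emph{$S_c'$ is $2/c$-Lipschitz.} This is the only step needing care, because $S_c'$ has a kink at $0$. Write $S_c'(x)=g(|x|/c)$. Since $x\mapsto|x|/c$ is $1/c$-Lipschitz, it is enough to show that $g$ is $2$-Lipschitz on $[0,\infty)$. We have $g'(t)=e^{-t}(t-2)$, so $|g'(t)|\le2$ on $[0,2]$, with the value $2$ attained at $t=0$. For $t>2$, $(t-2)e^{-t}\le e^{-3}<2$. Thus $\sup|g'|=2$, and $g$ is 2-Lipschitz on the half-line.

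For $x,y$ of the same sign, $|S_c'(x)-S_c'(y)|\le2\,\bigl||x|-|y|\bigr|/c=2|x-y|/c$. For $x,y$ of opposite sign, $\bigl||x|-|y|\bigr|\le|x-y|$ still holds, so the composition argument applies globally. This yields the constant $2/c$.
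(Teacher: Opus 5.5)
Your proof is correct and follows essentially the same route as the paper. Both substitute $t=|x|/c$, bound $e^{-t}|1-t|\le 1$ and $te^{-t}\le 1/e$, write $S_c'(x)=g(|x|/c)$ with $|g'(t)|=e^{-t}|t-2|\le 2$ on $[0,\infty)$, and finish with $\bigl||x|-|y|\bigr|\le|x-y|$. Your extra details (the difference quotient at $0$, the piecewise maximizations, and the same-sign/opposite-sign split, which the reverse triangle inequality already covers) are valid but not needed.
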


\begin{proof}
Oddness is immediate.  For $x\ne0$, the displayed derivative follows by differentiating on the two half-lines; the same formula gives the derivative at $0$.  With $t=|x|/c$,
\[
    |S_c'(x)|=e^{-t}|1-t|\le1,
\]
so $S_c$ is $1$-Lipschitz.  Also
\[
    |S_c(x)|=|x|e^{-|x|/c}=c\,t e^{-t}\le c/e.
\]
Finally, write $S_c'(x)=h(|x|/c)$ with $h(t)=e^{-t}(1-t)$.  Since $|h'(t)|=e^{-t}|t-2|\le2$ for $t\ge0$,
\[
    |S_c'(x)-S_c'(y)|\le \frac2c\,||x|-|y||\le\frac2c|x-y|.
\]
\end{proof}

\begin{lemma}[Derivative deficits for smooth shrinkage]
\label{lem:smooth-derivative-deficits-app}
Let $Z\sim N(0,\sigma^2)$ and $H\sim\mathrm{Cauchy}(0,\gamma)$.  Define
\[
    \mu_c:=\E S_c'(Z),
    \qquad
    \kappa_c:=\E S_c'(H).
\]
Then
\begin{equation}
    1-\mu_c\le \sqrt{\frac8\pi}\frac{\sigma}{c},
    \qquad
    1-\kappa_c\le \frac{8\gamma}{\pi c}\log\left(e+\frac c\gamma\right).
    \label{eq:smooth-derivative-deficits-app}
\end{equation}
\end{lemma}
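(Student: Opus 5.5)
The plan is to reduce both deficits to a single pointwise inequality on $1-S_c'$, then integrate it against the Gaussian and the Cauchy densities separately. By \Cref{lem:smooth-properties-app}, $S_c'(x)=e^{-t}(1-t)$ with $t=|x|/c$. Define
\[
g(t):=1-e^{-t}(1-t),\qquad t\ge 0.
\]
Then $1-\mu_c=\E\,g(|Z|/c)$ and $1-\kappa_c=\E\,g(|H|/c)$.

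\textbf{Step 1 (pointwise envelope).} I will show that
\[
0\le g(t)\le 2\min\{t,1\}\qquad\text{for all } t\ge0.
\]
We have $g(0)=0$ and $g'(t)=e^{-t}(2-t)$. For $t\ge 0$ this gives $g'(t)\le 2$, hence $g(t)\le 2t$. Moreover $g$ increases on $[0,2]$ and decreases afterwards, so its maximum is $g(2)=1+e^{-2}\le 2$. Nonnegativity follows because $e^{-t}(1-t)\le e^{-t}\le 1$. Consequently
\[
1-S_c'(x)\le \frac{2}{c}\min\{|x|,c\}.
\]

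\textbf{Step 2 (Gaussian deficit).} Using only the linear part of the envelope,
\[
1-\mu_c\le \frac{2}{c}\,\E|Z|=\frac{2}{c}\,\sigma\sqrt{\frac{2}{\pi}}=\sqrt{\frac{8}{\pi}}\,\frac{\sigma}{c},
\]
which is the first claim.

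\textbf{Step 3 (Cauchy deficit).} Here $\E|H|=\infty$, so the truncation $\min\{|H|,c\}$ is essential. I split $\E\min\{|H|,c\}$ at $c$. The body contributes
\[
2\int_0^c \frac{\gamma h}{\pi(\gamma^2+h^2)}\,dh=\frac{\gamma}{\pi}\log\Bigl(1+\frac{c^2}{\gamma^2}\Bigr)\le \frac{2\gamma}{\pi}\log\Bigl(1+\frac{c}{\gamma}\Bigr).
\]
By \cref{eq:cauchy-tail-app}, the tail contributes
\[
c\,\mathbb P(|H|>c)=\frac{2c}{\pi}\arctan\frac{\gamma}{c}\le \frac{2\gamma}{\pi}.
\]
Both $\log(1+c/\gamma)$ and $1$ are at most $\log(e+c/\gamma)$. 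Hence $\E\min\{|H|,c\}\le \frac{4\gamma}{\pi}\log(e+c/\gamma)$. Multiplying by $2/c$ gives
\[
1-\kappa_c\le \frac{8\gamma}{\pi c}\log\Bigl(e+\frac{c}{\gamma}\Bigr).
\]

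\textbf{Main obstacle.} The only delicate point is the Cauchy case, because the noise has no first moment. The bounded branch $g\le 2$ of the envelope in Step 1 is exactly what makes the expectation finite, and it is the source of the logarithmic factor. Once that envelope is in place, the rest is an explicit integral.
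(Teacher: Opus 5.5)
Your proof is correct and takes essentially the same route as the paper: the pointwise bounds $1-e^{-t}(1-t)\le 2t$ and $1-e^{-t}(1-t)\le 2$, the Gaussian first moment for $\mu_c$, and a split of the Cauchy integral at $c$ with the same logarithmic and tail estimates. Combining these into the single envelope $g(t)\le 2\min\{t,1\}$ and bounding $\E\min\{|H|,c\}$ only reorganizes the paper's computation, and you also verify the two pointwise inequalities, which the paper merely asserts.
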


\begin{proof}
For $t\ge0$, the elementary inequality
\[
    1-e^{-t}(1-t)\le2t
\]
holds.  Therefore
\[
    1-\mu_c
    =\E\left[1-e^{-|Z|/c}\left(1-\frac{|Z|}{c}\right)\right]
    \le \frac2c\E|Z|
    =\sqrt{\frac8\pi}\frac{\sigma}{c}.
\]

For the Cauchy branch, symmetry gives
\[
    1-\kappa_c
    =\frac{2\gamma}{\pi}\int_0^\infty
    \frac{1-e^{-u/c}(1-u/c)}{\gamma^2+u^2}\,du.
\]
Split the integral into $[0,c]$ and $[c,\infty)$.  On $[0,c]$, use $1-e^{-u/c}(1-u/c)\le 2u/c$.  On $[c,\infty)$, use the crude bound $1-e^{-u/c}(1-u/c)\le2$.  Then
\[
\begin{aligned}
    1-\kappa_c
    &\le \frac{2\gamma}{\pi}\left[
        \frac2c\int_0^c\frac{u}{\gamma^2+u^2}\,du
        +2\int_c^\infty\frac{1}{\gamma^2+u^2}\,du
    \right] \\
    &\le \frac{2\gamma}{\pi c}\log\left(1+\frac{c^2}{\gamma^2}\right)
        +\frac{4\gamma}{\pi c}.
\end{aligned}
\]
Since $\log(1+x^2)\le2\log(e+x)$ and $1\le\log(e+x)$ for $x\ge0$, this is at most
$8\gamma(\pi c)^{-1}\log(e+c/\gamma)$.
\end{proof}

\begin{lemma}[Relative bias of smooth shrinkage]
\label{lem:smooth-relative-bias-app}
For $|g|\le B$, let $\xi_{0} \sim (1-\alpha) N(0,\sigma_0^2) + \alpha \mathrm{Cauchy}(0,\gamma_0)$ with $0\le \sigma_0 \le \sigma$ and $0 < \gamma_0\le \gamma $. Then
\begin{equation}
    |\E S_c(g+\xi_0)-g|
    \le \rho_c |g|,
    \label{eq:smooth-relative-bias-app}
\end{equation}
where
\begin{equation}
    \rho_c:=\frac{B+\sqrt{8/\pi}(1-\alpha)\sigma}{c}
    +\frac{8\alpha\gamma}{\pi c}\log\left(e+\frac c\gamma\right).
    \label{eq:rho-smooth-app}
\end{equation}
\end{lemma}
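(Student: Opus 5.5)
We follow the template of \cref{lem:hard-relative-bias-app}: first reduce to a derivative estimate for a generic symmetric noise, then bound that estimate separately on the Gaussian and Cauchy branches. For a symmetric noise $W$ with a continuous density, set $m_W(s):=\E S_c(s+W)$. By \cref{lem:smooth-properties-app}, $S_c$ is odd and $W$ is symmetric, so $m_W(0)=0$. Because $S_c$ is $1$-Lipschitz with bounded continuous derivative, we may differentiate under the integral to get $m_W'(s)=\E S_c'(s+W)$. The fundamental theorem of calculus then gives
\[
m_W(g)-g=\int_0^g\bigl(\E S_c'(s+W)-1\bigr)\,ds,
\]
and hence $|m_W(g)-g|\le |g|\sup_{|s|\le B}\bigl|1-\E S_c'(s+W)\bigr|$. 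The problem is therefore reduced to bounding a uniform derivative deficit over the window $|s|\le B$.

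Next we split this deficit at $s=0$:
\[
|1-\E S_c'(s+W)|\le |1-\E S_c'(W)|+|\E S_c'(s+W)-\E S_c'(W)|.
\]
For the second piece, \cref{lem:smooth-properties-app} says $S_c'$ is $2/c$-Lipschitz, which bounds it by $2|s|/c\le 2B/c$. That is a factor of two worse than the target $B/c$. I would sharpen it by writing $S_c'(x)=h(|x|/c)$ with $h(t)=e^{-t}(1-t)$, and noting that $|h'(t)|=e^{-t}|t-2|\le 1$ for $t\ge 1$, while on $[0,1]$ the maximum of $|h'|$ is $2$, attained only at $t=0$. I would try to show that the symmetrized difference $\tfrac12\E[S_c'(W+s)+S_c'(W-s)]-\E S_c'(W)$ is controlled by $B/c$, using the symmetry of $W$ to cancel the linear part. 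If this tightening fails, I would accept $2B/c$ and adjust the constant in \eqref{eq:rho-smooth-app}. This still fits the $4B$ in the threshold of \cref{thm:post-smooth-main} with room to spare, since it only needs $\rho_c<1$ with some margin. I expect this constant-chasing to be the main obstacle.

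For the first piece we apply \cref{lem:smooth-derivative-deficits-app} with $(\sigma_0,\gamma_0)$ on each branch. The Gaussian branch gives at most $\sqrt{8/\pi}\,\sigma_0/c\le\sqrt{8/\pi}\,\sigma/c$. The Cauchy branch gives $\tfrac{8\gamma_0}{\pi c}\log(e+c/\gamma_0)$. To replace $\gamma_0$ by $\gamma$ we need $x\mapsto x\log(e+c/x)$ to be nondecreasing in $x$. Its derivative is $\log(e+c/x)-\tfrac{c/x}{e+c/x}\ge 1-1\ge 0$, so this holds. Finally we mix the branches with weights $1-\alpha$ and $\alpha$, using linearity of $m_W$ in the mixture law. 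The $B/c$ term is common to both branches and carries weight $1$. Assembling the pieces gives \eqref{eq:smooth-relative-bias-app} with $\rho_c$ as in \eqref{eq:rho-smooth-app}.
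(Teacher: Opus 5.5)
There is a genuine gap, and it comes earlier than where you locate it. The step $|m_W(g)-g|\le |g|\sup_{|s|\le B}|1-\E S_c'(s+W)|$ already makes the target constant unreachable, so no constant-chasing afterwards can recover $B/c$. The lemma allows $\alpha=0$ and $\sigma_0=\sigma=0$, where $W\equiv 0$ and $\rho_c=B/c$. In that case the supremum equals $1-e^{-B/c}(1-B/c)$, which is strictly larger than $B/c$ for every $0<B/c<1$ (it behaves like $2B/c$ for small $B/c$); a small $\sigma>0$ changes this only slightly.

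Your proposed tightening cannot work either. Because $W$ is symmetric and $S_c'$ is even, $\E S_c'(W-s)=\E S_c'(W+s)$, so the ``symmetrized difference'' is literally the same quantity. The obstruction is the corner of $S_c'$ at the origin, where $S_c'(s)\approx 1-2|s|/c$, not an uncancelled linear term. The fallback of accepting $2B/c$ proves a weaker lemma, and it is not free downstream. Under the threshold of \cref{thm:post-smooth-main} it only gives $\rho_c\le 3/4$ instead of $1/2$, which doubles the constant $\tfrac{2c}{e}$ in the displayed rate. In the pre-clipping theorems, the guaranteed value of $1-2\sqrt r\rho_c$ likewise drops from $1/2$ to $1/4$.

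The repair uses only what you already have: do not take the supremum. Your split gives, pointwise in $s$, $|1-\E S_c'(s+W)|\le |1-\E S_c'(W)|+2|s|/c$. Integrating this between $0$ and $g$ yields $|m_W(g)-g|\le |g|\,|1-\E S_c'(W)|+g^2/c\le |g|\bigl(|1-\E S_c'(W)|+B/c\bigr)$. The missing factor $\tfrac12$ is exactly $\int_0^{|g|}2s\,ds=g^2$. This is the paper's proof, which phrases it as a first-order Taylor expansion of $S_c$ around $\xi_0$ with $2/c$-Lipschitz derivative (remainder $g^2/c$), after using $\E S_c(\xi_0)=0$. With that change the rest of your plan goes through unchanged: the branch deficits from \cref{lem:smooth-derivative-deficits-app}, mixing by linearity, and your explicit check that $x\mapsto x\log(e+c/x)$ is nondecreasing. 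That last check is a useful addition, since the paper only asserts it as ``monotonicity in $\gamma$''.
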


\begin{proof}
Because $S_c$ is odd and $\xi$ is symmetric, $\E S_c(\xi)=0$.  Since $S_c'$ is $2/c$-Lipschitz, Taylor's theorem with Lipschitz derivative gives, for every $x$ and $g$,
\[
    |S_c(x+g)-S_c(x)-gS_c'(x)|\le\frac{g^2}{c}.
\]
Set $x=\xi_0$ and take expectations.  Then
\[
\begin{aligned}
    |\E S_c(g+\xi_0)-g|
    &=|\E[S_c(g+\xi_0))-S_c(\xi_0)]-g| \\
    &\le |g|\,|\E S_c'(\xi_0)-1|+\frac{g^2}{c}.
\end{aligned}
\]
The derivative deficit of the mixture is
\[
    1-\E S_c'(\xi_0)
    =(1-\alpha)(1-\mu_c)+\alpha(1-\kappa_c).
\]
Using \Cref{lem:smooth-derivative-deficits-app}, $g^2/c\le (B/c)|g|$, and monotonicity in $\gamma$ and $\sigma$ proves the displayed bound. The term $B/c$ in $\rho_c$ arises from this Taylor remainder: unlike hard clipping, which exactly satisfies $C_\tau(g)=g$ for $|g|\le B<\tau$ and contributes no remainder, smooth shrinkage is only \emph{approximately} the identity near zero, with a quadratic deviation that the threshold $c$ must absorb.
\end{proof}

\begin{lemma}[Variance of smooth shrinkage]
\label{lem:smooth-variance-app}
For $|g|\le B$, let $\xi_{0} \sim (1-\alpha) N(0,\sigma_0^2) + \alpha \mathrm{Cauchy}(0,\gamma_0)$ with $0\le \sigma_0 \le \sigma$ and $0 < \gamma_0\le \gamma $. Then
\begin{equation}
    \operatorname{Var}(S_c(g+\xi_0))
    \le v_c,
    \qquad
    v_c:=(1-\alpha)\sigma^2+\frac{8\alpha\gamma c}{\pi e}.
    \label{eq:v-smooth-app}
\end{equation}
\end{lemma}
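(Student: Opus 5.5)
We bound the variance by a second moment about a fixed center, as in \Cref{lem:hard-variance-app}: $\operatorname{Var}(S_c(g+\xi_0))\le \E(S_c(g+\xi_0)-a)^2$ for any deterministic $a$. We then split by the mixture indicator, so that
\[
\E(S_c(g+\xi_0)-a)^2=(1-\alpha)\,\E(S_c(g+Z)-a)^2+\alpha\,\E(S_c(g+H)-a)^2 ,
\]
with $Z\sim N(0,\sigma_0^2)$ and $H\sim\mathrm{Cauchy}(0,\gamma_0)$. The natural center is $a=S_c(g)$ rather than $g$, because $S_c$ is not the identity near the origin. This choice lets us use the properties in \Cref{lem:smooth-properties-app} directly: $S_c$ is $1$-Lipschitz and $|S_c|\le c/e$.

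\textbf{Gaussian branch.} By the $1$-Lipschitz property, $|S_c(g+Z)-S_c(g)|\le |Z|$, so this branch contributes at most $\sigma_0^2\le\sigma^2$.

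\textbf{Cauchy branch.} Here we use two pointwise bounds simultaneously:
\begin{itemize}
\item $|S_c(g+h)-S_c(g)|\le |h|$, from the Lipschitz property;
\item $|S_c(g+h)-S_c(g)|\le 2c/e$, from the uniform bound on $|S_c|$.
\end{itemize}
Together they give $|S_c(g+h)-S_c(g)|^2\le \min\{h^2,(2c/e)^2\}$. Applying the two-sided truncation bound \cref{eq:cauchy-trunc-two-sided-app} with $a=2c/e$ yields $\E\min\{H^2,a^2\}\le 4\gamma_0(2c/e)/\pi = 8\gamma_0 c/(\pi e)\le 8\gamma c/(\pi e)$. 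This is exactly the constant in $v_c$.

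Mixing the two branches gives $\operatorname{Var}(S_c(g+\xi_0))\le (1-\alpha)\sigma^2+\alpha\cdot 8\gamma c/(\pi e)$, which is the claimed bound.

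\textbf{Main obstacle.} The only delicate point is the cap in the Cauchy branch. A naive bound using $|S_c(g+h)|\le c/e$ together with centering at $g$ would leave an extra $B$-dependent term. Centering at $S_c(g)$ avoids this, since the difference of two values each bounded by $c/e$ is bounded by $2c/e$. Matching the truncation level $a=2c/e$ with the $4\gamma a/\pi$ bound then reproduces the stated constant.
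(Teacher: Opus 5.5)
Your proposal is correct and follows the paper's proof essentially line for line: bound the variance by the second moment about the deterministic center $S_c(g)$, use $1$-Lipschitzness for the Gaussian branch, and combine Lipschitzness with $|S_c|\le c/e$ to get $\min\{h^2,(2c/e)^2\}$ for the Cauchy branch. The paper then also applies the two-sided truncation bound with $a=2c/e$ to obtain $8\gamma_0 c/(\pi e)\le 8\gamma c/(\pi e)$ and mixes the two branches.
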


\begin{proof}
Use $\operatorname{Var}(Y)\le\E(Y-a)^2$ with the deterministic center $a=S_c(g)$.  By \Cref{lem:smooth-properties-app}, $S_c$ is $1$-Lipschitz and $|S_c|\le c/e$, so
\[
    |S_c(g+\xi_0)-S_c(g)|\le\min\{|\xi|,2c/e\}.
\]
The Gaussian contribution is at most $\E Z^2=\sigma^2$.  The Cauchy contribution is bounded by
\[
    \E\min\{H^2,(2c/e)^2\}
    \le \frac{8\gamma_0 c}{\pi e}
    \le \frac{8\gamma c}{\pi e}
\]
from \cref{eq:cauchy-trunc-two-sided-app}.  Mixing the two branches proves the variance bound.
\end{proof}

\subsection{Proofs of Clipping Threshold $\tau$ and $c$}
\label{app:threshold-verification}
Recall in \cref{eq:rho-hard-app} and \cref{eq:rho-smooth-app}
\begin{equation*}
    \rho_\tau=2(1-\alpha)\bar\Phi\left(\frac{\tau-B}{\sigma}\right)
    +\frac{\alpha\gamma}{\pi}\left(\frac1{\tau-B}+\frac1{\tau+B}\right),
\end{equation*}
\begin{equation*}
    \rho_c=\frac{B+\sqrt{8/\pi}(1-\alpha)\sigma}{c}
    +\frac{8\alpha\gamma}{\pi c}\log\left(e+\frac c\gamma\right).
\end{equation*}
We now verify that the thresholds in the main theorems make the relative-bias constants small enough.

\begin{lemma}[Post-clipping thresholds]
\label{lem:post-thresholds-app}
The hard-clipping threshold $$
    \tau\ge B+
    \max\left\{
    \sigma\sqrt{2\log 8},\,
    \frac{8\alpha\gamma}{\pi}
    \right\},
$$ implies $\rho_\tau\le1/2$.  The smooth-shrinkage threshold $$
    c\ge
    \max\left\{
    4\left[B+\sqrt{\frac8\pi}(1-\alpha)\sigma\right],\,
    \frac{64\alpha\gamma}{\pi}
    \log\left(e+\frac{64\alpha}{\pi}\right)
    \right\},
$$ implies $\rho_c\le1/2$.
\end{lemma}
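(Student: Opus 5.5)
The plan is to bound the two summands of each relative-bias constant separately by $1/4$, using only the closed forms \cref{eq:rho-hard-app} and \cref{eq:rho-smooth-app}, the tail estimate \cref{eq:gauss-tail-app}, and \Cref{lem:log-device-app}. Throughout, the cases $\alpha=0$ or $\sigma=0$ are trivial, since the corresponding summand vanishes, so assume all parameters are positive.

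\emph{Hard clipping.} Put $t:=(\tau-B)/\sigma\ge\sqrt{2\log 8}$.
\begin{itemize}
\item For the Gaussian term I would use the crude bound $\bar\Phi(t)\le e^{-t^2/2}$, which is half of \cref{eq:gauss-tail-app}. This gives $2(1-\alpha)\bar\Phi(t)\le 2e^{-\log 8}=1/4$.
\item For the Cauchy term, use $1/(\tau+B)\le 1/(\tau-B)$, which holds since $B\ge 0$. Then
\[
\frac{\alpha\gamma}{\pi}\Bigl(\frac1{\tau-B}+\frac1{\tau+B}\Bigr)\le\frac{2\alpha\gamma}{\pi(\tau-B)}\le\frac{2\alpha\gamma}{\pi}\cdot\frac{\pi}{8\alpha\gamma}=\frac14,
\]
because $\tau-B\ge 8\alpha\gamma/\pi$.
\end{itemize}
Adding the two bounds gives $\rho_\tau\le 1/2$. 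Note also that $\tau>B$, so \Cref{lem:hard-relative-bias-app} indeed applies.

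\emph{Smooth shrinkage.} The first summand of $\rho_c$ is at most $1/4$ directly from $c\ge 4[B+\sqrt{8/\pi}(1-\alpha)\sigma]$. For the second summand, rewrite it in the variable $x:=c/\gamma$ as
\[
\frac{8\alpha\gamma}{\pi c}\log\Bigl(e+\frac c\gamma\Bigr)=\frac{a\log(e+x)}{x},\qquad a:=\frac{8\alpha}{\pi}.
\]
Now apply \Cref{lem:log-device-app} with $s=4$. Its hypothesis reads $x\ge 2sa\log(e+2sa)=\frac{64\alpha}{\pi}\log\bigl(e+\frac{64\alpha}{\pi}\bigr)$. This is exactly the second branch of the threshold, $c\ge\frac{64\alpha\gamma}{\pi}\log(e+64\alpha/\pi)$, divided by $\gamma$. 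The lemma then yields $a\log(e+x)/x\le 1/4$, and summing gives $\rho_c\le1/2$.

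The only step needing care is this last one. The logarithm depends on $c/\gamma$ rather than on a fixed quantity, so a direct inversion fails. Matching the threshold to \Cref{lem:log-device-app}, with the scaling $x=c/\gamma$ and the constants $a=8\alpha/\pi$, $s=4$, is the main point to check; everything else is one-line arithmetic.
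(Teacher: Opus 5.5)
Your proposal is correct and follows the paper's proof essentially line for line. You split each relative-bias constant into two summands bounded by $1/4$, using $\bar\Phi(t)\le e^{-t^2/2}$ and $1/(\tau+B)\le 1/(\tau-B)$ for $\rho_\tau$, and \Cref{lem:log-device-app} with $x=c/\gamma$, $a=8\alpha/\pi$, $s=4$ for the logarithmic term of $\rho_c$. Your remark that the threshold forces $\tau>B$, so that \Cref{lem:hard-relative-bias-app} applies, is a small addition the paper leaves implicit; it holds whenever $\sigma>0$ or $\alpha>0$.
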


\begin{proof}
For hard clipping, $\tau  \ge B+ \sigma\sqrt{2\log 8}$ gives
\[
    2(1-\alpha)\bar\Phi\left(\frac{\tau-B}{\sigma}\right)
    \le 2\exp\{-\log 8\}=\frac14,
\]
and, since $\tau+B>\tau-B$,
\[
    \frac{\alpha\gamma}{\pi}\left(\frac1{\tau-B}+\frac1{\tau+B}\right)
    \le \frac{2\alpha\gamma}{\pi(\tau-B)}
    \le \frac14.
\]
Thus $\rho_\tau\le1/2$.

For smooth shrinkage, the first part of the maximum in $c$ gives
\[
    \frac{B+\sqrt{8/\pi}(1-\alpha)\sigma}{c}\le\frac14.
\]
For the logarithmic term, apply \Cref{lem:log-device-app} to the second part of the maximum in $c$ with
\[
    x=\frac c\gamma,
    \qquad
    a=\frac{8\alpha}{\pi},
    \qquad
    s=4.
\]
The second lower bound in $c$ is $x\ge2sa\log(e+2sa)$, so
\[
    \frac{8\alpha\gamma}{\pi c}\log\left(e+\frac c\gamma\right)
    =\frac{a\log(e+x)}{x}
    \le\frac14.
\]
Hence $\rho_c\le1/2$.
\end{proof}

\begin{lemma}[Pre-clipped Muon thresholds]
\label{lem:pre-thresholds-app}
The hard-clipping threshold 
$$
    \tau\ge B+
    \max\left\{
    \sigma\sqrt{2\log(16\sqrt r)},\,
    \frac{16\alpha\gamma\sqrt r}{\pi}
    \right\},
$$
implies $\rho_\tau\le1/(4\sqrt r)$.  The smooth-shrinkage threshold 
$$
    c\ge
    \max\left\{
    8\sqrt r\left[B+\sqrt{\frac8\pi}(1-\alpha)\sigma\right],\,
    \frac{128\alpha\gamma\sqrt r}{\pi}
    \log\left(e+\frac{128\alpha\sqrt r}{\pi}\right)
    \right\},
$$

implies $\rho_c\le1/(4\sqrt r)$.
\end{lemma}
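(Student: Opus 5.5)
The plan mirrors the proof of \Cref{lem:post-thresholds-app}: we bound the two summands of $\rho_\tau$ (resp.\ $\rho_c$) separately, each by $1/(8\sqrt r)$, so that $\rho_\bullet\le 1/(4\sqrt r)$.

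\textbf{Hard clipping.} We treat the Gaussian term first. The threshold gives $(\tau-B)/\sigma\ge\sqrt{2\log(16\sqrt r)}$. Using $2\bar\Phi(t)\le 2e^{-t^2/2}$ from \cref{eq:gauss-tail-app}, together with $1-\alpha\le1$, we get
\[
2(1-\alpha)\bar\Phi\left(\frac{\tau-B}{\sigma}\right)\le 2\exp\{-\log(16\sqrt r)\}=\frac1{8\sqrt r}.
\]
When $\sigma=0$ this term vanishes by the stated convention. For the Cauchy term, $\tau+B\ge\tau-B$ and $\tau-B\ge 16\alpha\gamma\sqrt r/\pi$ give
\[
\frac{\alpha\gamma}{\pi}\left(\frac1{\tau-B}+\frac1{\tau+B}\right)\le\frac{2\alpha\gamma}{\pi(\tau-B)}\le\frac{1}{8\sqrt r}.
\]
The degenerate case $\alpha=0$ is trivial. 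Summing the two bounds gives $\rho_\tau\le 1/(4\sqrt r)$.

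\textbf{Smooth shrinkage.} The first branch of the maximum, $c\ge 8\sqrt r\,[B+\sqrt{8/\pi}(1-\alpha)\sigma]$, directly makes the first summand of $\rho_c$ at most $1/(8\sqrt r)$. For the logarithmic summand we invoke \Cref{lem:log-device-app} with
\[
x=\frac{c}{\gamma},\qquad a=\frac{8\alpha}{\pi},\qquad s=8\sqrt r\ge1 .
\]
Then $2sa=128\alpha\sqrt r/\pi$, so the second branch of the maximum reads $\gamma x\ge \gamma\, 2sa\log(e+2sa)$, i.e.\ exactly the hypothesis $x\ge 2sa\log(e+2sa)$ after dividing by $\gamma>0$. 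The lemma then yields
\[
\frac{8\alpha\gamma}{\pi c}\log\left(e+\frac c\gamma\right)=\frac{a\log(e+x)}{x}\le\frac1s=\frac1{8\sqrt r}.
\]
Adding the two summands gives $\rho_c\le 1/(4\sqrt r)$.

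There is no real obstacle in this argument. The only point needing care is matching the constants in the log device: checking that $s=8\sqrt r$ and $a=8\alpha/\pi$ reproduce the factor $128\alpha\sqrt r/\pi$ both inside and outside the logarithm, and that $s\ge1$, which holds since $r\ge1$.
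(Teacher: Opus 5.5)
Your proposal is correct and matches the paper's proof: each summand of $\rho_\tau$ and $\rho_c$ is bounded by $1/(8\sqrt r)$. For hard clipping this uses the Gaussian Chernoff tail and $\tau+B\ge\tau-B$; for the logarithmic term it uses \Cref{lem:log-device-app} with $x=c/\gamma$, $a=8\alpha/\pi$, $s=8\sqrt r$. Your additional checks of the degenerate cases $\sigma=0$ and $\alpha=0$, and of $s\ge1$, are fine but not needed beyond what the paper does.
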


\begin{proof}
For hard clipping, the Gaussian term satisfies
\[
    2(1-\alpha)\bar\Phi\left(\frac{\tau-B}{\sigma}\right)
    \le 2\exp\{-\log(16\sqrt r)\}
    =\frac1{8\sqrt r}.
\]
The Cauchy term satisfies
\[
    \frac{\alpha\gamma}{\pi}\left(\frac1{\tau-B}+\frac1{\tau+B}\right)
    \le \frac{2\alpha\gamma}{\pi(\tau-B)}
    \le \frac1{8\sqrt r}.
\]
Therefore $\rho_\tau\le1/(4\sqrt r)$.

For smooth shrinkage, the first lower bound in gives
\[
    \frac{B+\sqrt{8/\pi}(1-\alpha)\sigma}{c}\le\frac1{8\sqrt r}.
\]
For the logarithmic term, apply \Cref{lem:log-device-app} with
\[
    x=\frac c\gamma,
    \qquad
    a=\frac{8\alpha}{\pi},
    \qquad
    s=8\sqrt r.
\]
The second lower bound in $c$ becomes $x\ge2sa\log(e+2sa)$, hence
\[
    \frac{8\alpha\gamma}{\pi c}\log\left(e+\frac c\gamma\right)
    \le \frac1{8\sqrt r}.
\]
Thus $\rho_c\le1/(4\sqrt r)$.
\end{proof}

\subsection{Proofs of \cref{thm:post-hard-main,thm:post-smooth-main}}
\label{app:post-proofs}

\begin{lemma}[Post-clipping descent with relative bias]
\label{lem:post-relative-descent-app}
Assume \Cref{ass:smooth-main,ass:bounded-main,ass:noise-main}.  Let \(\varphi\) be applied coordinatewise and suppose that, for every \(|g|\le B\), the relative-bias property \cref{eq:relative-bias-app} holds with constant $\rho<1$, and the post-processed update is uniformly entry-wise bounded:
\[
    |m_\varphi(g)-g|\le \rho |g|,
    \qquad
    |\varphi(x)|\le R_\varphi\quad\text{for all }x.
\]
The constant $R_\varphi$ is the entry-wise sup-norm of the processed update; for the two maps used in this paper, $R_\varphi=\tau$ for hard clipping $C_\tau$ and $R_\varphi=c/e$ for smooth shrinkage $S_c$ (\Cref{lem:smooth-properties-app}).
Then the post-clipped update \cref{eq:post-update-main} with
\(\eta=\sqrt{2\Delta/(LR_\varphi^2dK)}\) satisfies
\begin{equation}
    \frac1K\sum_{k=0}^{K-1}\E\|G_k\|_F^2
    \le
    \frac{R_\varphi}{1-\rho}\sqrt{\frac{2L\Delta d}{K}}.
    \label{eq:post-relative-rate-app}
\end{equation}
\end{lemma}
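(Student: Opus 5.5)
The argument is the standard one-step descent from $L$-smoothness, combined with the alignment inequality \cref{eq:post-alignment-principle-app}, then telescoped over $K$ iterations.

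\textbf{Step 1: one-step expansion.} Apply \Cref{ass:smooth-main} with $X=X_k$ and $Y=X_{k+1}=X_k-\eta Y_k^\varphi$, where $Y_k^\varphi=\varphi(G_k+\Xi_k)$. This gives
\[
f(X_{k+1})\le f(X_k)-\eta\inner{G_k}{Y_k^\varphi}+\tfrac{L\eta^2}{2}\norm{Y_k^\varphi}_F^2 .
\]

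\textbf{Step 2: take conditional expectation $\E_k$.} $G_k$ is $\gF_{k-1}$-measurable, and by \Cref{ass:noise-main} $\Xi_k$ is independent of the past. Hence $\E_k Y_k^\varphi=m_\varphi(G_k)=M^\varphi_{k,\mathrm{post}}$, applied entrywise. Each entry of $G_k$ lies in $[-B,B]$ by \Cref{ass:bounded-main}, so the scalar relative-bias hypothesis applies to every entry. It yields \cref{eq:post-frob-rel-app} and therefore \cref{eq:post-alignment-principle-app}:
\[
\inner{G_k}{M^\varphi_{k,\mathrm{post}}}\ge(1-\rho)\norm{G_k}_F^2 .
\]
For the quadratic term I use only the deterministic bound $|\varphi(x)|\le R_\varphi$, which gives $\norm{Y_k^\varphi}_F^2\le dR_\varphi^2$ with $d=mn$. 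No variance estimate is needed, so the heavy tail never enters this step. Combining,
\[
\E_k f(X_{k+1})\le f(X_k)-\eta(1-\rho)\norm{G_k}_F^2+\tfrac{L\eta^2}{2}dR_\varphi^2 .
\]

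\textbf{Step 3: telescope and choose $\eta$.} Take total expectation, sum over $k=0,\dots,K-1$, and use $f(X_0)-\E f(X_K)\le\Delta$. Dividing by $\eta(1-\rho)K$ gives
\[
\frac1K\sum_k\E\norm{G_k}_F^2\le\frac{1}{1-\rho}\Bigl(\frac{\Delta}{\eta K}+\frac{L\eta dR_\varphi^2}{2}\Bigr).
\]
With $\eta=\sqrt{2\Delta/(LR_\varphi^2dK)}$ both terms in the parentheses equal $R_\varphi\sqrt{L\Delta d/(2K)}$. Their sum is $R_\varphi\sqrt{2L\Delta d/K}$, which gives \cref{eq:post-relative-rate-app}.

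\textbf{Main obstacle.} The only delicate point is the conditioning in Step 2. One must justify that $\E_k[\varphi(G_k+\Xi_k)]=m_\varphi(G_k)$ entrywise, using that $\Xi_k$ is independent of $X_k$. The noise scales $\sigma_k,\gamma_k$ may vary with $k$, so the relative-bias constant $\rho$ must hold uniformly over all $\sigma_k\le\sigma$ and $\gamma_k\le\gamma$. This uniformity is exactly what \Cref{lem:hard-relative-bias-app,lem:smooth-relative-bias-app} provide, since they are stated for arbitrary $\sigma_0\le\sigma$ and $\gamma_0\le\gamma$. Integrability is not an issue: $\varphi$ is bounded, so all expectations exist even though the Cauchy noise itself has no mean.
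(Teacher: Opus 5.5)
Your proposal is correct and follows the paper's proof essentially step for step. Both use $L$-smoothness, take the conditional expectation with the alignment bound $\inner{G_k}{M_k^\varphi}\ge(1-\rho)\norm{G_k}_F^2$, apply the deterministic bound $\norm{\varphi(G_k+\Xi_k)}_F^2\le R_\varphi^2 d$, telescope, and balance the two terms with the given $\eta$; your remark that $\rho$ must hold uniformly over $\sigma_k\le\sigma$, $\gamma_k\le\gamma$ is exactly how the paper justifies invoking this lemma in the proofs of the two post-clipping theorems.
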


\begin{proof}
Define the conditional processed mean 
\[
    M_k^\varphi:=\E_k\varphi(G_k+\Xi_k)=m_\varphi(G_k),
\]
applied entry by entry.  The relative-bias assumption  (this follows from \cref{lem:hard-relative-bias-app} and \cref{lem:smooth-relative-bias-app} for $C_{\tau}(x)$ and $S_{c}(x)$) gives
\[
    \|M_k^\varphi-G_k\|_F\le \rho\|G_k\|_F.
\]
Therefore
\[
\begin{aligned}
    \langle G_k,M_k^\varphi\rangle
    &=\|G_k\|_F^2+\langle G_k,M_k^\varphi-G_k\rangle \\
    &\ge \|G_k\|_F^2-\|G_k\|_F\|M_k^\varphi-G_k\|_F \\
    &\ge (1-\rho)\|G_k\|_F^2.
\end{aligned}
\]
This is the alignment step: the processed mean is biased, but the bias is not large enough to destroy its positive inner product with the true gradient.

By \(L\)-smoothness,
\[
    F(X_{k+1})
    \le F(X_k)-\eta\langle G_k,\varphi(G_k+\Xi_k)\rangle
    +\frac{L\eta^2}{2}\|\varphi(G_k+\Xi_k)\|_F^2.
\]
Taking conditional expectation on step $k$ and using \(\|\varphi(G_k+\Xi_k)\|_F^2\le R_\varphi^2d\) gives
\[
    \E_kF(X_{k+1})
    \le F(X_k)-\eta(1-\rho)\|G_k\|_F^2
    +\frac{L\eta^2R_\varphi^2d}{2}.
\]
Summing over \(k=0,\ldots,K-1\), using \(F(X_K)\ge\inf_XF(X)\), and dividing by \(K\eta(1-\rho)\),
\[
    \frac1K\sum_{k=0}^{K-1}\E\|G_k\|_F^2
    \le
    \frac{\Delta}{K\eta(1-\rho)}
    +\frac{L\eta R_\varphi^2d}{2(1-\rho)}.
\]
With \(\eta=\sqrt{2\Delta/(LR_\varphi^2dK)}\), the two deterministic terms balance and their sum equals the right-hand side of \cref{eq:post-relative-rate-app}.
\end{proof}

\thmposthard*
\begin{proof}
By \Cref{lem:hard-relative-bias-app}, at any iteration $k$, for gradient noise $\xi_k$ with $0\le\sigma_k\le \sigma$ and $0\le\gamma_k \le \gamma$, hard clipping has a uniform relative-bias coefficient $\rho_\tau$.  By \Cref{lem:post-thresholds-app}, the threshold assumed in the theorem gives \(\rho_\tau\le1/2\).  Also \(|C_\tau(x)|\le\tau\), so \Cref{lem:post-relative-descent-app} applies with \(R_\varphi=\tau\) and \(1/(1-\rho_\tau)\le2\).  This proves the displayed average squared-gradient bound.  Finally, Jensen's inequality gives
\[
    \frac1K\sum_{k=0}^{K-1}\E\|G_k\|_F
    \le
    \left(\frac1K\sum_{k=0}^{K-1}\E\|G_k\|_F^2\right)^{1/2},
\]
and the stated lower bound on \(K\) makes the right-hand side at most \(\varepsilon\).
\end{proof}

\thmpostsmooth*
\begin{proof}
By \Cref{lem:smooth-relative-bias-app}, at any iteration $k$, for gradient noise $\xi_k$ with $0\le\sigma_k\le \sigma$ and $0\le\gamma_k \le \gamma$, smooth shrinkage has a uniform relative-bias coefficient \(\rho_c\).  By \Cref{lem:post-thresholds-app}, the threshold choice $c$ in the theorem gives \(\rho_c\le1/2\).  Also \(|S_c(x)|\le c/e\) by \Cref{lem:smooth-properties-app}.  Applying \Cref{lem:post-relative-descent-app} with \(R_\varphi=c/e\) gives the displayed average squared-gradient bound.  Jensen's inequality gives the stated \(K\) condition for the average gradient norm.
\end{proof}

\subsection{Proofs of \cref{thm:pre-hard-main,thm:pre-smooth-main}}
\label{app:pre-proofs}

\begin{lemma}[Matrix-sign descent]
\label{lem:matrix-sign-descent-app}
Let $\bar G_k$ be any matrix and set $E_k:=\bar G_k-G_k$.  For
\[
    X_{k+1}=X_k-\eta\operatorname{msign}(\bar G_k),
\]
one has
\begin{equation}
    F(X_{k+1})
    \le F(X_k)-\eta\|G_k\|_*+2\eta\|E_k\|_*+\frac{L\eta^2r}{2}.
    \label{eq:matrix-sign-descent-app}
\end{equation}
\end{lemma}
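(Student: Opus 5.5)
The plan is to combine the $L$-smoothness descent inequality of \Cref{ass:smooth-main} with three elementary properties of the matrix sign. Write $O_k := \operatorname{msign}(\bar G_k)$ and take the SVD $\bar G_k = U\Sigma V^T$ with $r=\min\{m,n\}$. Then $O_k = UV^T$ has all singular values in $\{0,1\}$, at most $r$ of them nonzero. This gives the three facts we need:
\begin{itemize}
\item $\|O_k\|_{op}\le 1$;
\item $\|O_k\|_F^2\le r$;
\item $\langle \bar G_k, O_k\rangle = \operatorname{tr}(\Sigma) = \|\bar G_k\|_*$.
\end{itemize}
The last identity is the duality between nuclear and operator norms, attained by the polar factor. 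If $\bar G_k$ is rank deficient we use the compact SVD, and all three facts still hold.

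\textbf{Step 1 (smoothness).} Apply $L$-smoothness with $Y-X=-\eta O_k$:
\[
F(X_{k+1})\le F(X_k)-\eta\langle G_k,O_k\rangle+\frac{L\eta^2}{2}\|O_k\|_F^2 .
\]
The quadratic term is at most $L\eta^2 r/2$, which is exactly the last term of \cref{eq:matrix-sign-descent-app}.

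\textbf{Step 2 (the linear term).} Write $G_k=\bar G_k-E_k$, so that
\[
\langle G_k,O_k\rangle=\|\bar G_k\|_*-\langle E_k,O_k\rangle .
\]
Trace duality together with $\|O_k\|_{op}\le1$ gives $|\langle E_k,O_k\rangle|\le \|E_k\|_*$. The triangle inequality for the nuclear norm gives $\|\bar G_k\|_*\ge \|G_k\|_*-\|E_k\|_*$. Combining these,
\[
\langle G_k,O_k\rangle\ge \|G_k\|_*-2\|E_k\|_*,
\]
and substituting into Step 1 yields the claimed inequality.

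The only points that need care are the duality bound $\langle A,B\rangle\le\|A\|_*\|B\|_{op}$ and the rank-deficient case. Both are standard, so I do not expect a real obstacle. The lemma is deterministic, so no expectations or independence assumptions enter at this stage.
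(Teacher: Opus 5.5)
Your proposal is correct and follows the paper's proof essentially step for step: $L$-smoothness with $\|\operatorname{msign}(\bar G_k)\|_F^2\le r$, the identity $\langle \bar G_k,\operatorname{msign}(\bar G_k)\rangle=\|\bar G_k\|_*$, trace duality with $\|\operatorname{msign}(\bar G_k)\|_{\mathrm{op}}\le 1$, and the triangle inequality for the nuclear norm to reach $\langle G_k,\operatorname{msign}(\bar G_k)\rangle\ge\|G_k\|_*-2\|E_k\|_*$. Your remark on the rank-deficient case is extra care the paper leaves implicit.
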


\begin{proof}
Let $U_k:=\operatorname{msign}(\bar G_k)$.  By $L$-smoothness,
\[
    F(X_{k+1})
    \le F(X_k)-\eta\langle G_k,U_k\rangle
    +\frac{L\eta^2}{2}\|U_k\|_F^2.
\]
The matrix $U_k$ has at most $r$ nonzero singular values, each equal to one, so $\|U_k\|_F^2\le r$ and $\|U_k\|_{\mathrm{op}}\le1$.  Moreover, by the definition of the matrix sign,
\[
    \langle \bar G_k,U_k\rangle=\|\bar G_k\|_*.
\]
Using $E_k=\bar G_k-G_k$,
\[
\begin{aligned}
    \langle G_k,U_k\rangle
    &=\langle \bar G_k,U_k\rangle-\langle E_k,U_k\rangle \\
    &\ge \|\bar G_k\|_*-\|E_k\|_*\|U_k\|_{\mathrm{op}} \\
    &\ge \|\bar G_k\|_*-\|E_k\|_*.
\end{aligned}
\]
The reverse triangle inequality gives $\|\bar G_k\|_*\ge\|G_k\|_*-\|E_k\|_*$.  Hence
\[
    \langle G_k,U_k\rangle\ge\|G_k\|_*-2\|E_k\|_*.
\]
Substituting this into the smoothness inequality proves the lemma.
\end{proof}

\begin{lemma}[Pre-clipped Muon under relative bias and finite processed variance]
\label{lem:pre-relative-descent-app}
Assume \Cref{ass:smooth-main,ass:bounded-main,ass:noise-main}.  Suppose $\varphi$ satisfies, for all $|g|\le B$, we can a uniform $\rho$ such that
\[
    |m_\varphi(g)-g|\le\rho |g|,
    \qquad
    \operatorname{Var}(\varphi(g+\xi))\le v.
\]
Run \cref{eq:pre-update-main} with $\eta=\sqrt{2\Delta/(LrT)}$.  If $2\sqrt r\rho<1$, then
\begin{equation}
    \frac1T\sum_{k=0}^{T-1}\E\|G_k\|_*
    \le
    \frac{\sqrt{2Lr\Delta/T}+2Dr^{3/2}\sqrt{v/N}}{1-2\sqrt r\rho}.
    \label{eq:pre-relative-rate-app}
\end{equation}
\end{lemma}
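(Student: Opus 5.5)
The argument starts from the matrix-sign descent inequality of \Cref{lem:matrix-sign-descent-app}, applied with $\bar G_k=\bar G_k^\varphi$ and $E_k=\bar G_k^\varphi-G_k$. Taking the conditional expectation $\E_k$ gives
\[
\E_k F(X_{k+1})\le F(X_k)-\eta\|G_k\|_*+2\eta\,\E_k\|E_k\|_*+\tfrac{L\eta^2 r}{2}.
\]
Next we split $E_k$ as in \cref{eq:correct-decomp-app}: a zero-mean fluctuation $\bar G_k^\varphi-M_{k,\mathrm{pre}}^\varphi$ and a deterministic bias $M_{k,\mathrm{pre}}^\varphi-G_k$. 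The triangle inequality for the nuclear norm then bounds $\E_k\|E_k\|_*$ by the sum of the two pieces.

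The bias piece is handled exactly as in \cref{eq:pre-bias-principle-app}. Since \Cref{ass:bounded-main} gives $|(G_k)_{ij}|\le B$, the relative-bias hypothesis applies entrywise, and we get $\|M_{k,\mathrm{pre}}^\varphi-G_k\|_*\le\sqrt r\rho\|G_k\|_*$.

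For the fluctuation we use $\|A\|_*\le\sqrt r\|A\|_F$ and Jensen:
\[
\E_k\|\bar G_k^\varphi-M\|_*\le\sqrt r\,\bigl(\E_k\|\bar G_k^\varphi-M\|_F^2\bigr)^{1/2}.
\]
By \Cref{ass:noise-main} the samples $\Xi_k^{(\ell)}$ are i.i.d. given $X_k$, so each entry of the average has variance at most $v/N$. Summing over $d=mn=qr$ entries gives $\E_k\|\cdot\|_F^2\le qr\,v/N$. Hence the fluctuation is at most $\sqrt r\sqrt{qr v/N}=r\sqrt{q}\sqrt{v/N}=D r^{3/2}\sqrt{v/N}$, using $\sqrt q=D\sqrt r$. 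This matches the constant $2Dr^{3/2}$ in \cref{eq:pre-relative-rate-app} once multiplied by the factor $2$ from the descent lemma.

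Substituting both bounds, the recursion becomes
\[
\E_kF(X_{k+1})\le F(X_k)-\eta(1-2\sqrt r\rho)\|G_k\|_*+2\eta Dr^{3/2}\sqrt{v/N}+\tfrac{L\eta^2r}{2}.
\]
We take total expectations, telescope over $k=0,\dots,T-1$ using $F(X_T)\ge\inf F$, and divide by $T\eta(1-2\sqrt r\rho)$, which is positive because $2\sqrt r\rho<1$. This yields $\frac{\Delta}{T\eta}+\frac{L\eta r}{2}+2Dr^{3/2}\sqrt{v/N}$ over $1-2\sqrt r\rho$. With $\eta=\sqrt{2\Delta/(LrT)}$ the first two terms both equal $\sqrt{Lr\Delta/(2T)}$, so they sum to $\sqrt{2Lr\Delta/T}$, which gives \cref{eq:pre-relative-rate-app}.

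The main obstacle is the bookkeeping in the fluctuation bound. We must check that the per-entry variance bound $v$ holds uniformly at the random iterate $X_k$, which follows from conditioning on $\gF_{k-1}$ and the uniform scale bounds in \Cref{ass:noise-main}. We must also track the dimension factor so that it is exactly $D r^{3/2}$ and not a looser power of $q$. The remaining steps are routine.
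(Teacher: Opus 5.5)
Your proposal is correct and follows the paper's proof essentially step for step. It uses the same split of $\bar G_k^\varphi-G_k$ into zero-mean fluctuation and deterministic bias, the same $\sqrt r$-Frobenius-plus-Jensen bound giving $Dr^{3/2}\sqrt{v/N}$, the same relative-bias absorption into $\sqrt r\rho\|G_k\|_*$, and the same telescoping with the balanced step size; the only difference is that you invoke the matrix-sign descent lemma before decomposing the error rather than after.
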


\begin{proof}
Let $M_k^\varphi:=\E_k\bar G_k^\varphi=m_\varphi(G_k)$.  Use the exact decomposition
\[
    \bar G_k^\varphi-G_k=(\bar G_k^\varphi-M_k^\varphi)+(M_k^\varphi-G_k).
\]
We bound the two terms separately.

First consider the zero-mean stochastic part.  Each entry of $\bar G_k^\varphi-M_k^\varphi$ is an average of $N$ independent processed samples, so its conditional variance is at most $v/N$.  Hence
\[
\begin{aligned}
    \E_k\|\bar G_k^\varphi-M_k^\varphi\|_*
    &\le \sqrt r\,\E_k\|\bar G_k^\varphi-M_k^\varphi\|_F \\
    &\le \sqrt r\left(\E_k\|\bar G_k^\varphi-M_k^\varphi\|_F^2\right)^{1/2} \\
    &\le \sqrt r\sqrt{d\frac vN}.
\end{aligned}
\]
Since $d=mn=rq$ and $D=\sqrt{q/r}$, we have $\sqrt r\sqrt d=Dr^{3/2}$.  Thus
\[
    \E_k\|\bar G_k^\varphi-M_k^\varphi\|_*
    \le Dr^{3/2}\sqrt{\frac vN}.
\]
This step uses independence across the sample index; it does not require independence across matrix entries.

Now consider the deterministic conditional bias.  By the relative-bias assumption (this follows from \cref{lem:hard-relative-bias-app} and \cref{lem:smooth-relative-bias-app} for $C_{\tau}(x)$ and $S_{c}(x)$),
\[
    \|M_k^\varphi-G_k\|_F\le \rho\|G_k\|_F.
\]
Therefore
\[
    \|M_k^\varphi-G_k\|_*
    \le \sqrt r\|M_k^\varphi-G_k\|_F
    \le \sqrt r\rho\|G_k\|_F
    \le \sqrt r\rho\|G_k\|_*.
\]
Combining the two bounds gives
\begin{equation}
    \E_k\|\bar G_k^\varphi-G_k\|_*
    \le Dr^{3/2}\sqrt{\frac vN}+\sqrt r\rho\|G_k\|_*.
    \label{eq:pre-error-bound-app}
\end{equation}

Apply \Cref{lem:matrix-sign-descent-app} with $\bar G_k=\bar G_k^\varphi$, take conditional expectation, and use \cref{eq:pre-error-bound-app}:
\[
    \E_kF(X_{k+1})
    \le F(X_k)-\eta(1-2\sqrt r\rho)\|G_k\|_*
    +2\eta Dr^{3/2}\sqrt{\frac vN}
    +\frac{L\eta^2r}{2}.
\]
Summing over $k=0,\ldots,T-1$, using $F(X_T)\ge\inf_XF(X)$, and dividing by $\eta T$ yields
\[
    (1-2\sqrt r\rho)A_T
    \le
    \frac{\Delta}{\eta T}+\frac{L\eta r}{2}+2Dr^{3/2}\sqrt{\frac vN},
\]
where
\[
    A_T:=\frac1T\sum_{k=0}^{T-1}\E\|G_k\|_*.
\]
The choice $\eta=\sqrt{2\Delta/(LrT)}$ balances the first two deterministic terms and gives
\[
    \frac{\Delta}{\eta T}+\frac{L\eta r}{2}=\sqrt{\frac{2Lr\Delta}{T}}.
\]
Dividing by $1-2\sqrt r\rho$ proves the lemma.
\end{proof}

\thmprehard*
\begin{proof}
At any iteration $k$, for gradient noise $\xi_k$ with $0\le\sigma_k\le \sigma$ and $0\le\gamma_k \le \gamma$, hard clipping satisfies the relative-bias bound of \Cref{lem:hard-relative-bias-app} and the variance bound of \Cref{lem:hard-variance-app}.  By \Cref{lem:pre-thresholds-app}, the threshold assumed in the theorem gives \(\rho_\tau\le1/(4\sqrt r)\), so \(1-2\sqrt r\rho_\tau\ge1/2\).  Applying \Cref{lem:pre-relative-descent-app} with \(v=v_\tau\) gives
\[
    \frac1T\sum_{k=0}^{T-1}\E\|G_k\|_*
    \le
    2\sqrt{\frac{2Lr\Delta}{T}}
    +4Dr^{3/2}\sqrt{\frac{v_\tau}{N}},
\]
which is the displayed theorem bound.  The lower bounds on \(T\) and \(N\) make the two terms at most \(\varepsilon/2\) each.
\end{proof}

\thmpresmooth*
\begin{proof}
At any iteration $k$, for gradient noise $\xi_k$ with $0\le\sigma_k\le \sigma$ and $0\le\gamma_k \le \gamma$, Smooth shrinkage satisfies the relative-bias bound of \Cref{lem:smooth-relative-bias-app} and the variance bound of \Cref{lem:smooth-variance-app}.  By \Cref{lem:pre-thresholds-app}, the threshold assumed in the theorem gives \(\rho_c\le1/(4\sqrt r)\), so \(1-2\sqrt r\rho_c\ge1/2\).  Applying \Cref{lem:pre-relative-descent-app} with \(v=v_c\) gives the displayed theorem bound.  The lower bounds on \(T\) and \(N\) make the two terms at most \(\varepsilon/2\) each.
\end{proof}

\newpage
\section{Experiment Details for \texorpdfstring{\Cref{sec:experiments}}{Section 6}}
\label{app:experiments}

\subsection{Gaussian Random Feature Regression}
\label{app:synthetic}
\textbf{Hardware.} The experiment was conducted on a CPU machine with AMD EPYC 7532 32-Core Processors.

\textbf{Problem setup.}
We minimize $L(W) = \frac{1}{2n}\|WA - Y\|_F^2$ where $W \in \R^{d_{\mathrm{out}} \times d_{\mathrm{h}}}$ and $A \in \R^{d_{h} \times n}$, feature matrix $A$ follows i.i.d.\ standard Gaussian, and $Y = W^\sharp A$ with $W^\sharp \in \RR^{d_{\mathrm{out}} \times d_h}$ also i.i.d.\ standard Gaussian.
We use $d_{\mathrm{out}} = d_h =32$, $n=128$ and  $d_{\mathrm{out}} = d_h =64$, $n=256$ two settings.

\textbf{Noise model.}
At each step, the true gradient $\nabla L(W_k) = \tfrac{1}{n}(W_k A - Y)A^T$
is corrupted by noise $E_k$ drawn from the contamination model of
\cref{def:contamination} with heavy-tailed component $H = t_1$
(Cauchy noise). We write
$\tilde G_k \coloneqq \nabla L(W_k) + E_k$ for the observed gradient.
The contamination fraction $\alpha$ is selected over
$\{0,\, 10^{-3},\, 10^{-2},\, 5\!\times\!10^{-2},\, 10^{-1},\, 0.5,\, 0.8,\, 1.0\}$. Each configuration is repeated over $10$ random seeds (independent draws of
$A$, $W^\sharp$, and noise $E$). We report the median loss curve.

\textbf{Methods.}
We compare six methods: vanilla GD and spectral GD (with spectral normalization implemented by SVD), each combined with one
of three clipping maps. The clipping is applied at a different stage
depending on the method: for vanilla GD, the map acts directly on the
noisy gradient (\emph{post-clipping}). For spectral GD, it acts on the
noisy gradient before spectral normalization (\emph{pre-clipping}). The
three clipping maps are: (a) no clipping, (b) entry-wise hard clipping at
threshold $C_{\tau}(x)$, and (c) smooth shrinkage $S_c(x)$. For (b) and (c),
$\tau$ is set to the $q$-th quantile of $\{|\tilde G_{k,ij}|\}_{i,j}$ at
each step, with $q \in \{0.90,\, 0.95,\, 0.99,\, 0.995,\, 0.999,\, 0.9995,\, 0.99999\}$.
For each method we sweep the joint grid of learning rate
$\eta \in \{0.001,\, 0.005,\, 0.01,\, 0.02,\, 0.05,\, 0.1\}$ and clipping
quantile, and report the best final loss.

\textbf{Results.}
\emph{Post-clipping.} Results are shown in \cref{fig:post_d32} ($d=32$) and
\cref{fig:post_d64} ($d=64$). Vanilla GD fails to converge once
$\alpha \ge 10^{-2}$, whereas both clipping methods converge across all
contamination strengths. In both dimensions, smooth shrinkage outperforms
hard clipping when heavy-tailed noise dominates ($\alpha \ge 0.5$). Both
methods also control the spectrum and help recovery of the rotated subspace
(panel~(a) of \cref{fig:post_d32,fig:post_d64}).

\emph{Pre-clipping.} Results are shown in \cref{fig:pre_d32} ($d=32$) and
\cref{fig:pre_d64} ($d=64$). Spectral GD converges across all contamination
strengths with or without pre-clipping; however, both hard clipping and
smooth shrinkage \emph{accelerate} spectral GD relative to the unclipped
baseline beginning at $\alpha = 10^{-2}$, with the speedup growing as
$\alpha$ increases. As in the post-clipping case, smooth shrinkage
outperforms hard clipping for $\alpha \ge 0.5$. We observe similar
spectral control and subspace recovery in both $d=32$ and $d=64$.

\begin{figure}[htbp]
  \centering
  
  \begin{subfigure}{\linewidth}
    \centering
    \includegraphics[width=0.98\linewidth]{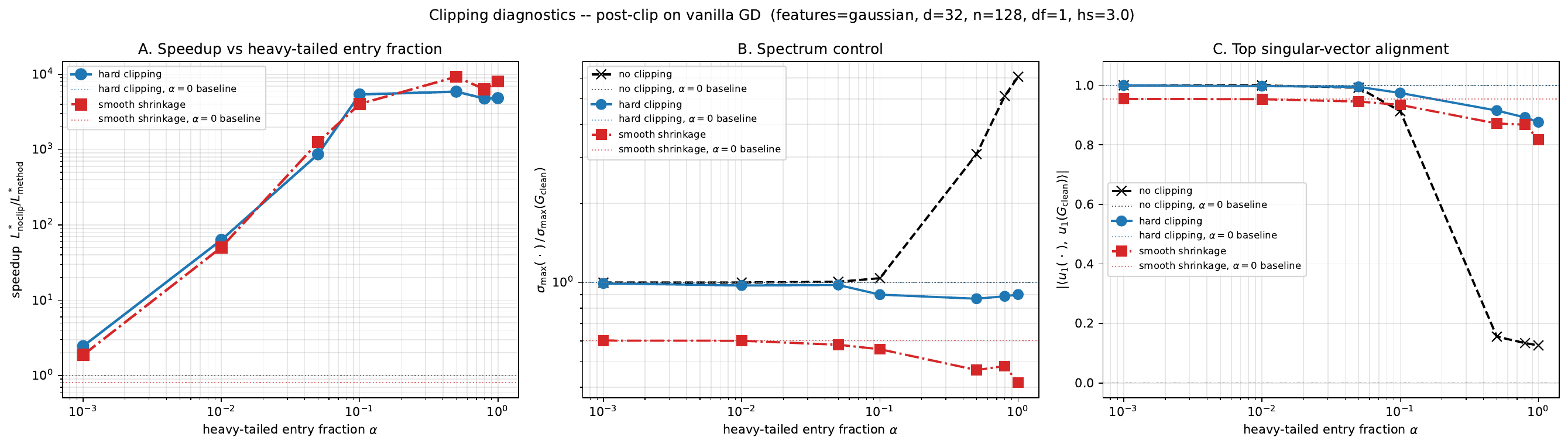}
    \caption{Summary of speedup, spectral control, and subspace recovery across clipping methods and contamination strengths.}

  \end{subfigure}

  \vspace{1cm} 

  \begin{subfigure}{\linewidth}
    \centering
    \includegraphics[width=0.98\linewidth]{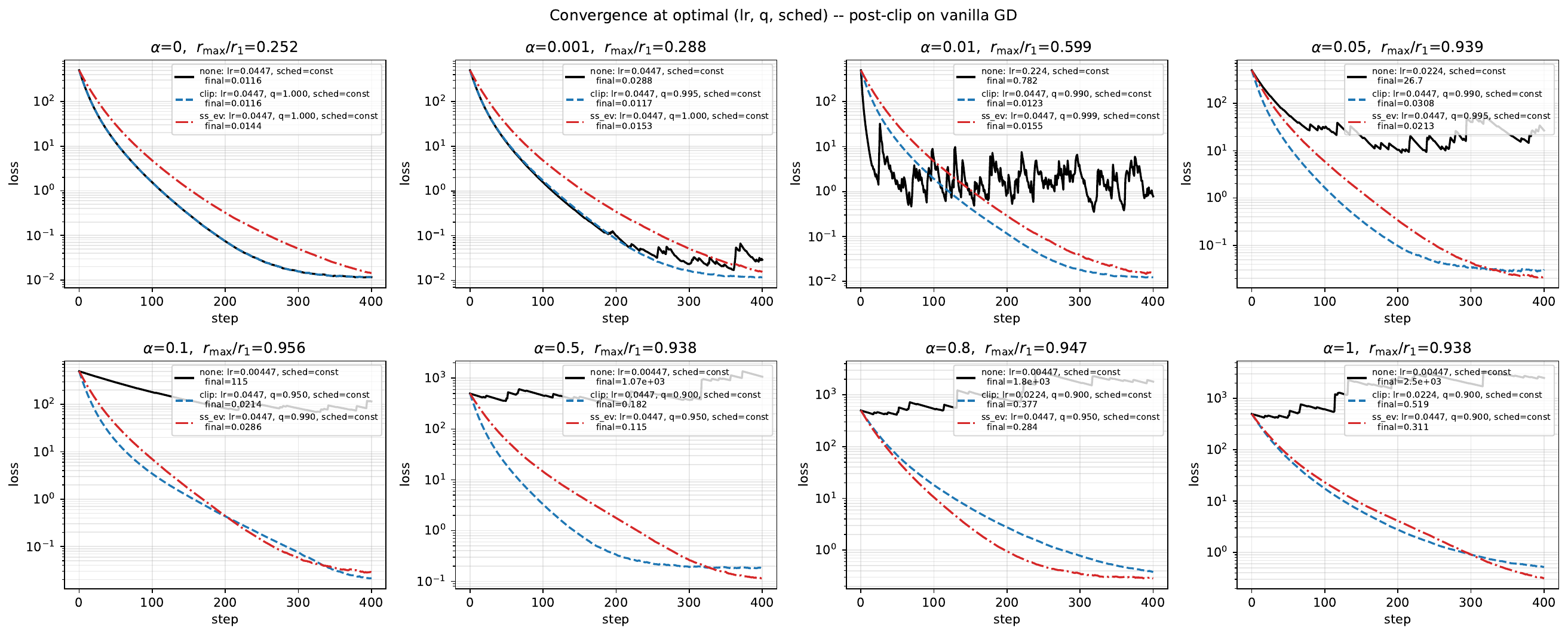}
    \caption{Convergence under different clipping methods and contamination strengths.}

  \end{subfigure}

  \caption{Post Clipping for SGD under Gaussian random feature models ($d=32$)}
  \label{fig:post_d32}
\end{figure}

\begin{figure}[htbp]
  \centering
  
  \begin{subfigure}{\linewidth}
    \centering
    \includegraphics[width=0.98\linewidth]{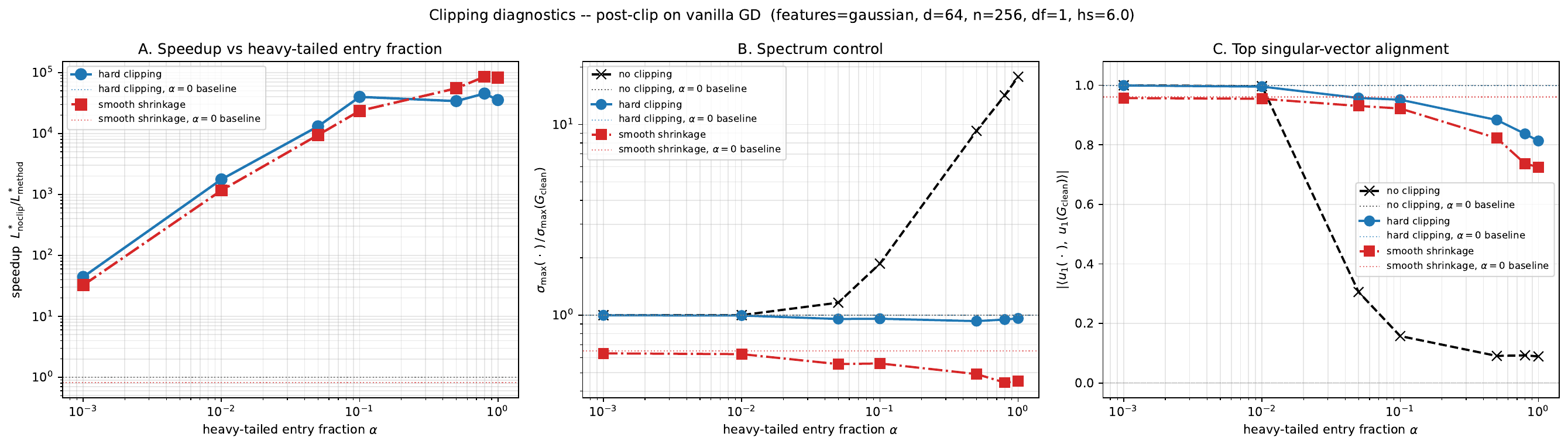}
    \caption{Summary of speedup, spectral control, and subspace recovery across clipping methods and contamination strengths.}
  \end{subfigure}

  \vspace{1cm} 

  \begin{subfigure}{\linewidth}
    \centering
    \includegraphics[width=0.98\linewidth]{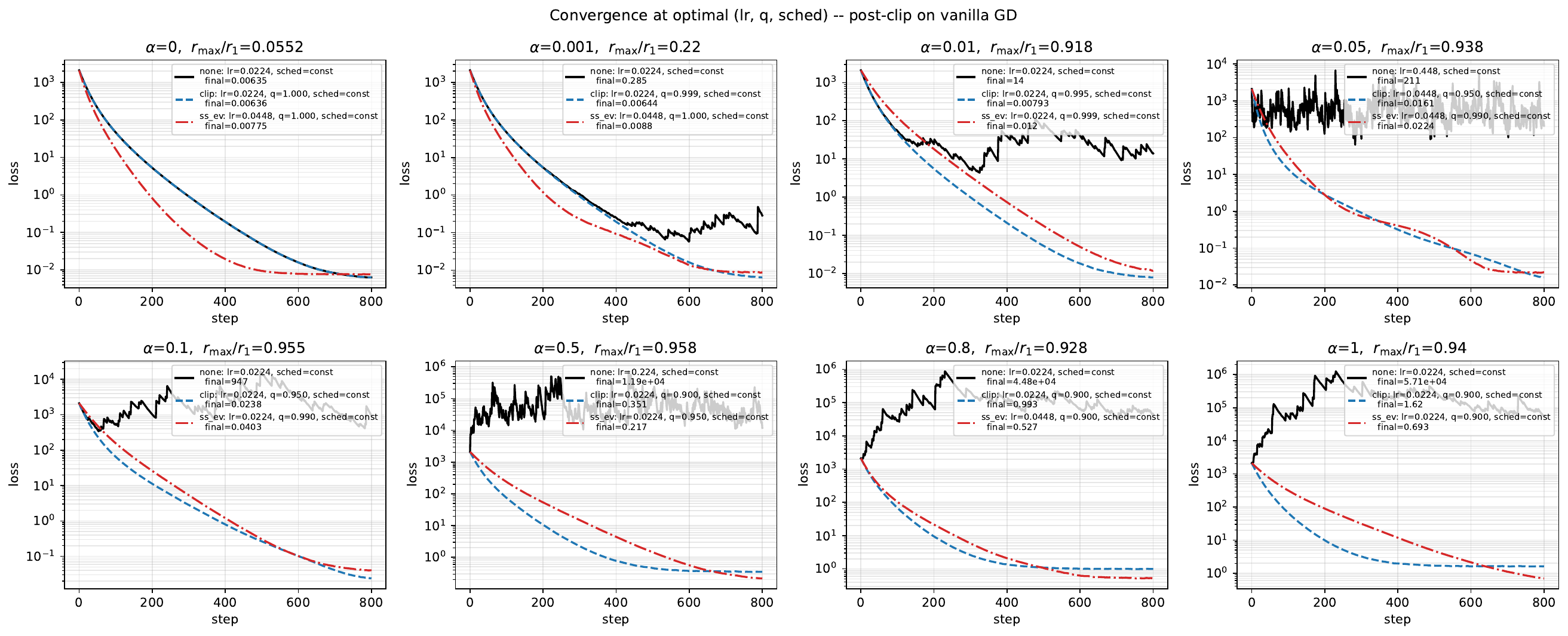}
    \caption{Convergence under different clipping methods and contamination strengths.}
  \end{subfigure}

  \caption{Post Clipping for SGD under Gaussian random feature models ($d=64$)}
  \label{fig:post_d64}
\end{figure}

\begin{figure}[htbp]
  \centering
  
  \begin{subfigure}{\linewidth}
    \centering
    \includegraphics[width=0.98\linewidth]{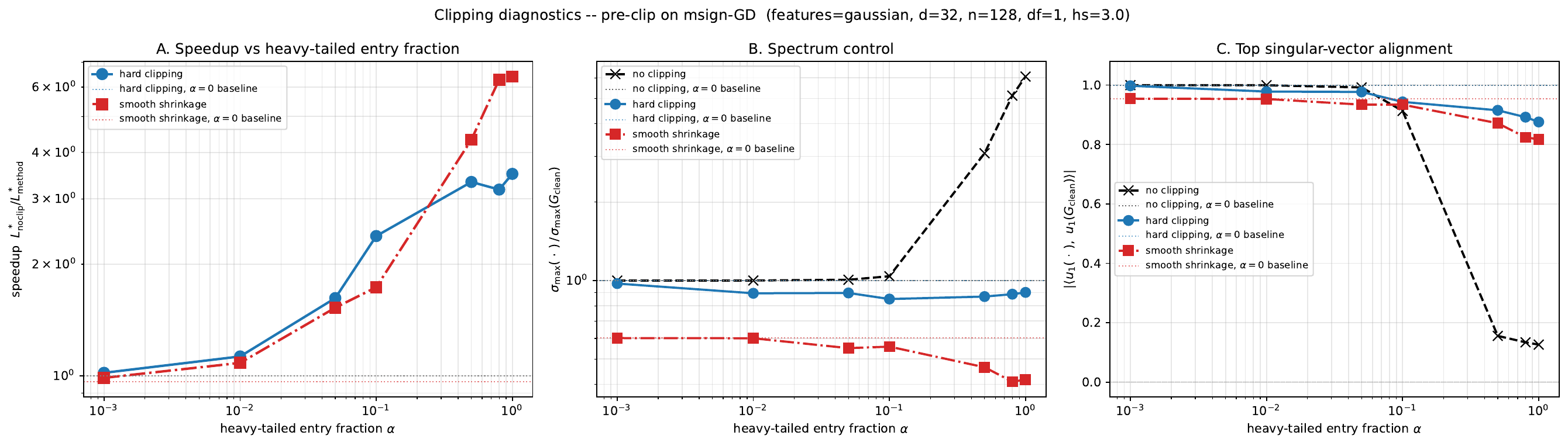}
    \caption{Summary of speedup, spectral control, and subspace recovery across clipping methods and contamination strengths.}
  \end{subfigure}

  \vspace{1cm} 

  \begin{subfigure}{\linewidth}
    \centering
    \includegraphics[width=0.98\linewidth]{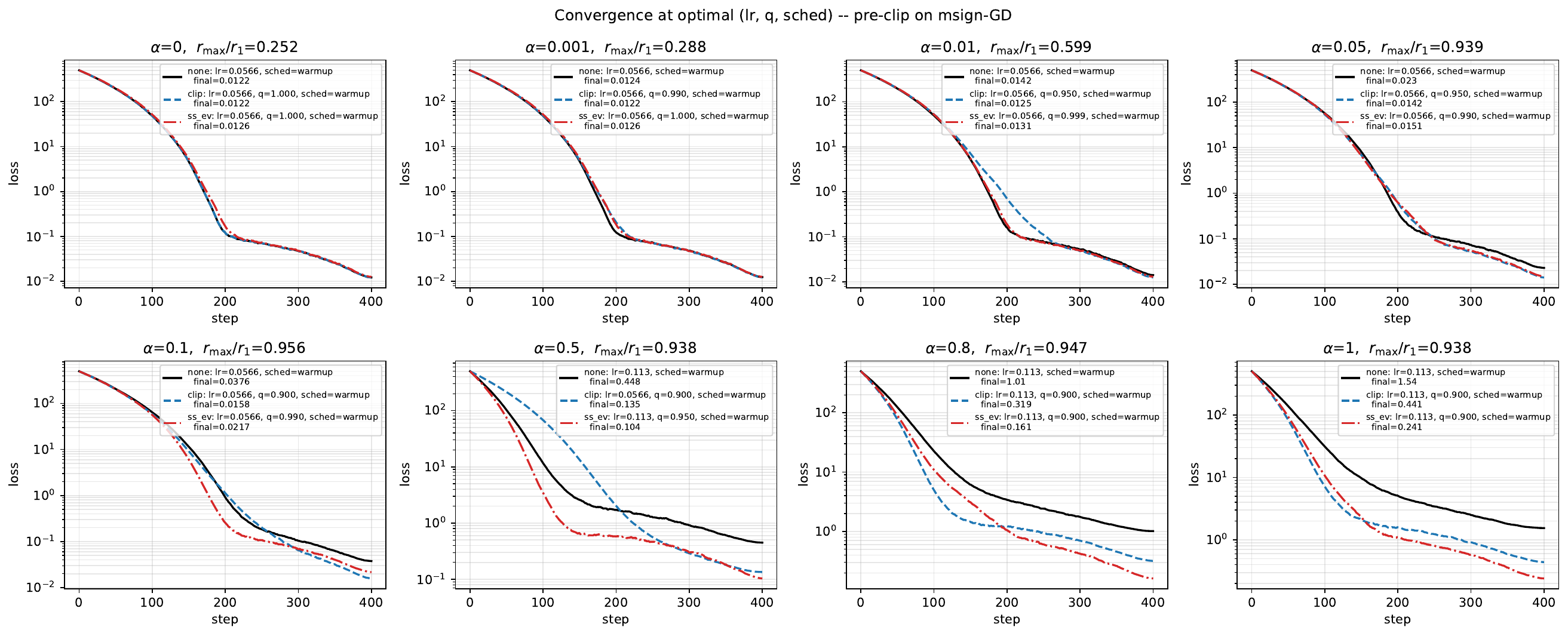}
    \caption{Convergence under different clipping methods and contamination strengths.}
  \end{subfigure}

  \caption{Pre Clipping for spectral GD under Gaussian random feature models ($d=32$)}
  \label{fig:pre_d32}
\end{figure}

\begin{figure}[htbp]
  \centering
  
  \begin{subfigure}{\linewidth}
    \centering
    \includegraphics[width=0.98\linewidth]{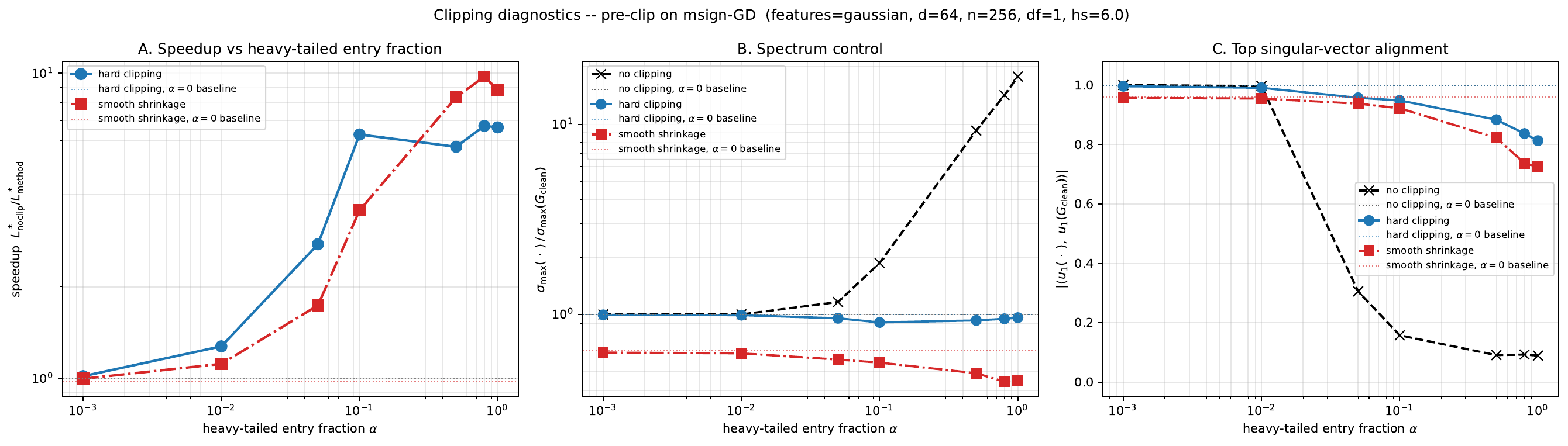}
    \caption{Summary of speedup, spectral control, and subspace recovery across clipping methods and contamination strengths.}
  \end{subfigure}

  \vspace{1cm} 

  \begin{subfigure}{\linewidth}
    \centering
    \includegraphics[width=0.98\linewidth]{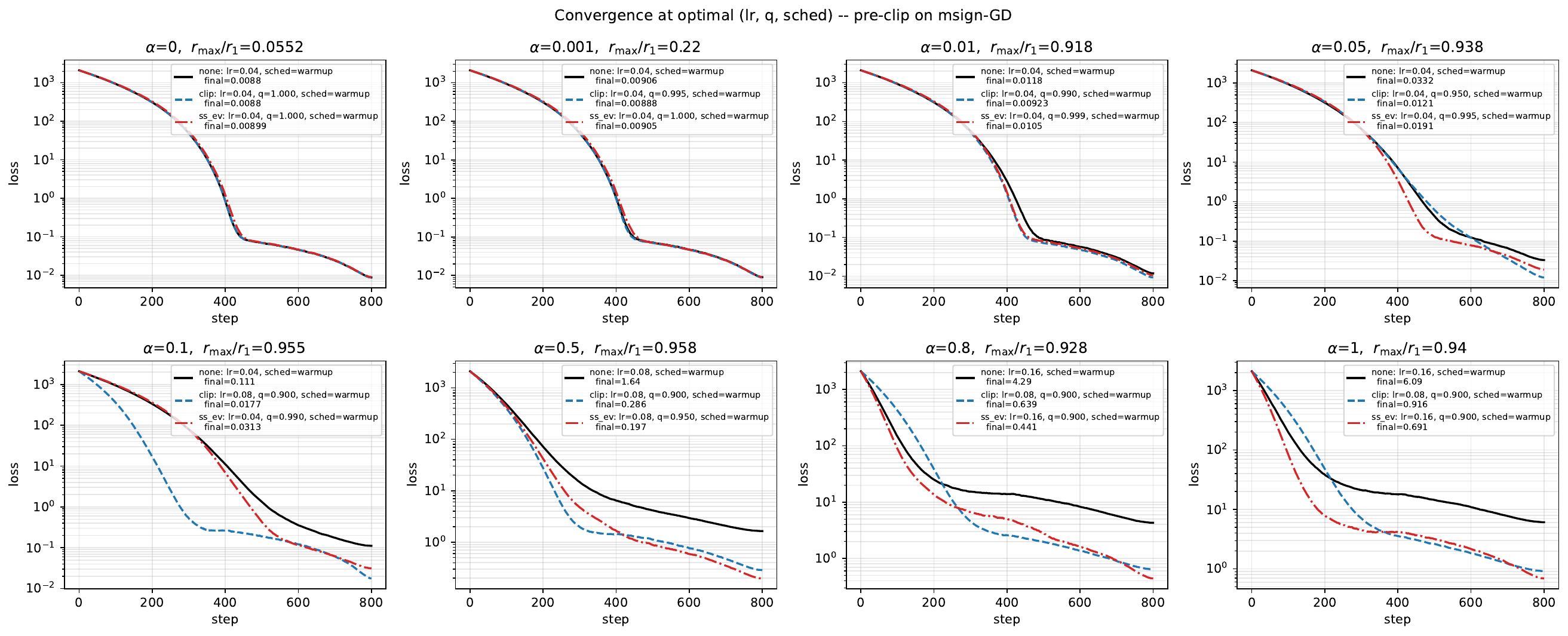}
    \caption{Convergence under different clipping methods and contamination strengths.}
  \end{subfigure}

  \caption{Pre Clipping for spectral GD under Gaussian random feature models ($d=64$)}
  \label{fig:pre_d64}
\end{figure}

\FloatBarrier
\subsection{Language Model Pretraining}
\label{app:lm}

\textbf{Hardware.} The experiment was conducted on a single NVIDIA A100 GPU machine.

\textbf{Model and data.}
We use the May 27, 2025 snapshot of \texttt{modded-nanogpt} \citep{modded_nanogpt_2024} as the basis for our 276M-parameter NanoGPT experiments. This snapshot incorporates several modern architectural enhancements: rotary position embeddings, RMSNorm, a linear-decay learning-rate schedule, and squared-ReLU (ReLU$^2$) activations. It also features an untied output head, three additional value-embedding layers whose lookups are added to the attention values in the early transformer blocks, and a residual-stream U-net that connects early and late blocks via learned skip connections. Each embedding-style block contains approximately 38M parameters ($768 \times 50{,}257$); these four blocks---the untied output head together with the three value-embedding layers---account for 152M parameters, which combined with the 124M-parameter 12-layer transformer backbone yields a total of 276M parameters. We train on the 10B-token subset of FineWeb \citep{penedo2024fineweb}.

\textbf{Methods.} To test the effect of \emph{post-clipping}, we apply two clipping maps $\varphi$, smooth shrinkage and hard clipping, to the AdamW \citep{loshchilov2017decoupled} update $U_k = M_k / (\sqrt{V_k} + \epsilon)$, i.e., $W_{k+1} = W_k - \eta\,\varphi(U_k)$. We compare them against unclipped AdamW. To test the effect of \emph{pre-clipping}, we apply $\varphi$ to the Nesterov momentum $M_k$ before the matrix-sign step, i.e., $W_{k+1} = W_k - \eta\,\msign(\varphi(M_k))$, where $\msign$ is computed via the Newton--Schulz iteration of \citet{jordan2024muon} and the update is rescaled by $0.2\sqrt{\max(n,m)}$ following \citet{team2025kimi}. We compare these pre-clipping methods against Muon \citep{jordan2024muon} and SPECTRA \citep{jiang2026enhancing}, with the latter applying spectral clipping to the Adam update via Newton--Schulz iterations.

\textbf{Hyperparameters.}
We perform a grid search over learning rate $\times$ clipping threshold for all clipping methods, where the clipping threshold is the $q$-th quantile of $\lvert U_k\rvert$ (post-clipping) or $\lvert M_k\rvert$ (pre-clipping). The unclipped baselines AdamW and Muon are swept over the same learning-rate grid; SPECTRA additionally decays its clipping threshold alongside the learning rate following a WSD \citep{hu2024minicpm} schedule. We sweep the learning rate over $\{1\!\times\!10^{-4},\, 2\!\times\!10^{-4},\, 6\!\times\!10^{-4},\, 1\!\times\!10^{-3},\, 1.6\!\times\!10^{-3},\, 4\!\times\!10^{-3}\}$. The result of the learning-rate sweep is reported in \cref{fig:lr_sweep}, and the best hyperparameter settings are listed in \cref{tab:hyper_nanogpt_post} and \cref{tab:hyper_nanogpt_pre} for post- and pre-clipping, respectively.

\begin{figure}[htp]
    \centering
    \includegraphics[width=0.97\linewidth]{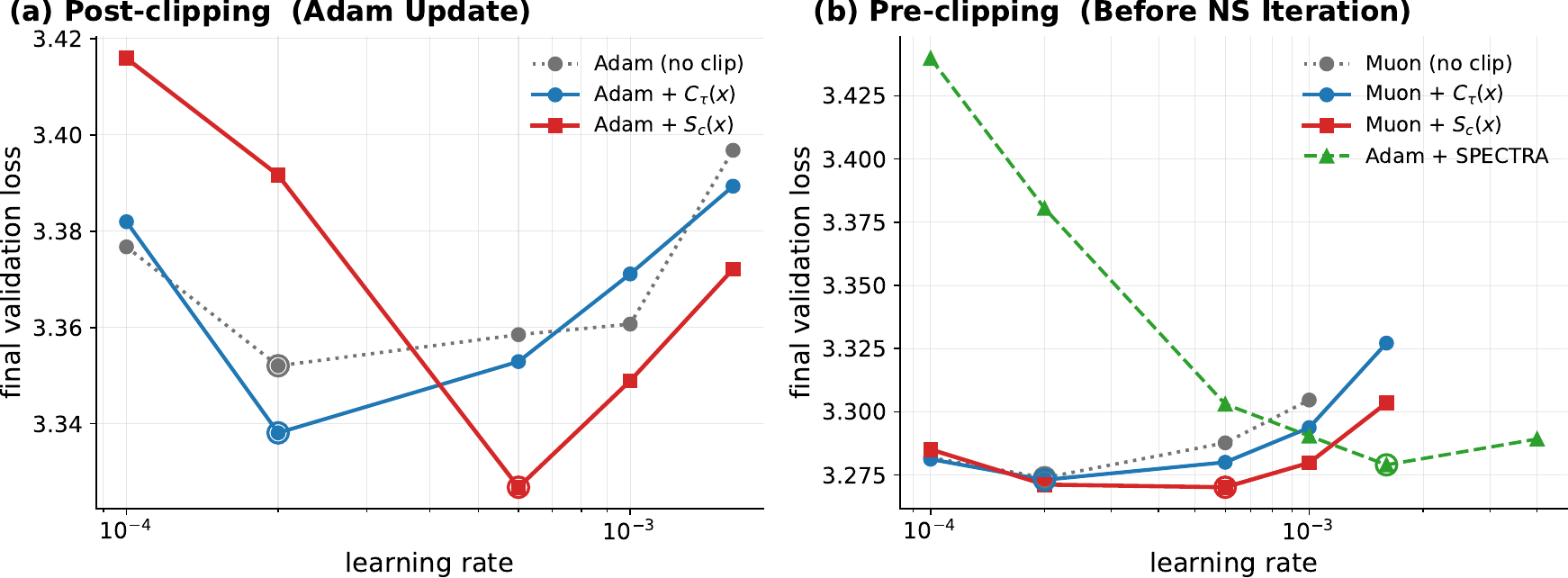}
    \caption{Learning-rate sweeps for post- and pre-clipping methods in terms of the final validation loss on NanoGPT.}
    \label{fig:lr_sweep}
\end{figure}

\begin{table}[htp]
\centering
\caption{Hyperparameter configurations for the optimizers evaluated for \emph{post clipping} in \cref{fig:nano_gpt_result}. $\beta_1$ and $\beta_2$ for AdamW are set to  $0.9$ and $0.99$ by convention.}
\label{tab:hyper_nanogpt_post}
\begin{tabularx}{\textwidth}{l XXX} 
\toprule
\textbf{Hyperparameter} & \textbf{AdamW} & \textbf{AdamW with $C_{\tau}(x)$}  & \textbf{AdamW with $S_{c}(x)$}  \\ 
\midrule
Learning Rate ($\eta$) & 0.0002 & 0.0002 & 0.0006  \\
Clipping Quantile $q$ & -- & 99.9\% & 99.5\% \\ 
Weight Decay           & 0.1 & 0.1 & 0.1\\
Batch Size (tokens)             & 49152   & 49152 & 49152     \\
$\beta_1$ (First Moment)  & 0.9 & 0.9 & 0.9  \\
$\beta_2$ (Second Moment) & 0.99 & 0.99 &0.99   \\
$\epsilon$ (Stability)    & $10^{-8}$ & $10^{-8}$ & $10^{-8}$  \\
\bottomrule
\end{tabularx}
\end{table}

\begin{table}[htp]
\centering
\caption{Hyperparameter configurations for the optimizers evaluated for \emph{pre clipping} in \cref{fig:nano_gpt_result}. $\beta_1$ for Muon is set to $0.95$ by convention, SPECTRA uses Adam's $\beta_1=0.9$.}
\label{tab:hyper_nanogpt_pre}
\begin{tabularx}{\textwidth}{l XXXX} 
\toprule
\textbf{Hyperparameter} & \textbf{Muon} & SPECTRA & \textbf{Muon  ($C_{\tau}(x)$)}  & \textbf{Muon ($S_{c}(x)$)}  \\ 
\midrule
Learning Rate ($\eta$) & 0.0002 & 0.0016 & 0.0002 & 0.0006  \\
Clipping Quantile $q$ & -- & -- &  99.5\% & 99\% \\ 
Weight Decay         & 0.1 & 0.1 & 0.1 & 0.1\\
Batch Size (tokens)     & 49152    & 49152   & 49152 & 49152     \\
$\beta_1$ (First Moment)  & 0.95 & 0.9 & 0.95 & 0.95  \\
$\beta_2$ (Second Moment) & -- & 0.99 & --  & --  \\
$\epsilon$ (Stability)    & -- & $10^{-8}$ & -- & --  \\
\bottomrule
\end{tabularx}
\end{table}

\textbf{Noise structure analysis.}
We investigate the structure of stochastic gradient noise. Let $G \in \mathbb{R}^{m \times n}$ denote the true gradient signal, approximated by averaging per-sample gradients over a large batch of $1024$ sequences with $4096$ tokens each. Let $U_G \in \mathbb{R}^{m \times r}$ and $V_G \in \mathbb{R}^{n \times r}$ collect the top-$r$ left and right singular vectors of $G$, defining its signal subspace. Let $g$ denote a single-sample stochastic gradient and the noise to be $E := g - G$.

Our goal is to measure how much the top entry of the noise matrix $E$ perturbs the spectrum of $G$ at different training stages (\cref{thm:taylor}). We summarize this through the localization ratio $\widehat{R}(E)$ in \cref{def:localization}. We report measurements of stochastic noise at three training stages in \cref{fig:r_hat}. Early in training (step $1000$), nearly all layers' stochastic noise has $\widehat{R}(E) \ll 1$, indicating that the noise is aligned with the signal subspace. As training progresses, $\widehat{R}(E)$ generally rises and exceeds $1$, with the effect strengthening as $k$ grows: the spike-like structure of $E$ becomes increasingly pronounced when $(u_k, v_k)$ are taken from subleading singular directions of $G$.


We further report the correlation between the top singular value and the largest entry of the stochastic gradient in \cref{fig:corr}, and the Hill tail-index estimator across layers in \cref{fig:hill}.

\begin{figure}[htp]
    \centering
    \includegraphics[width=0.98\linewidth]{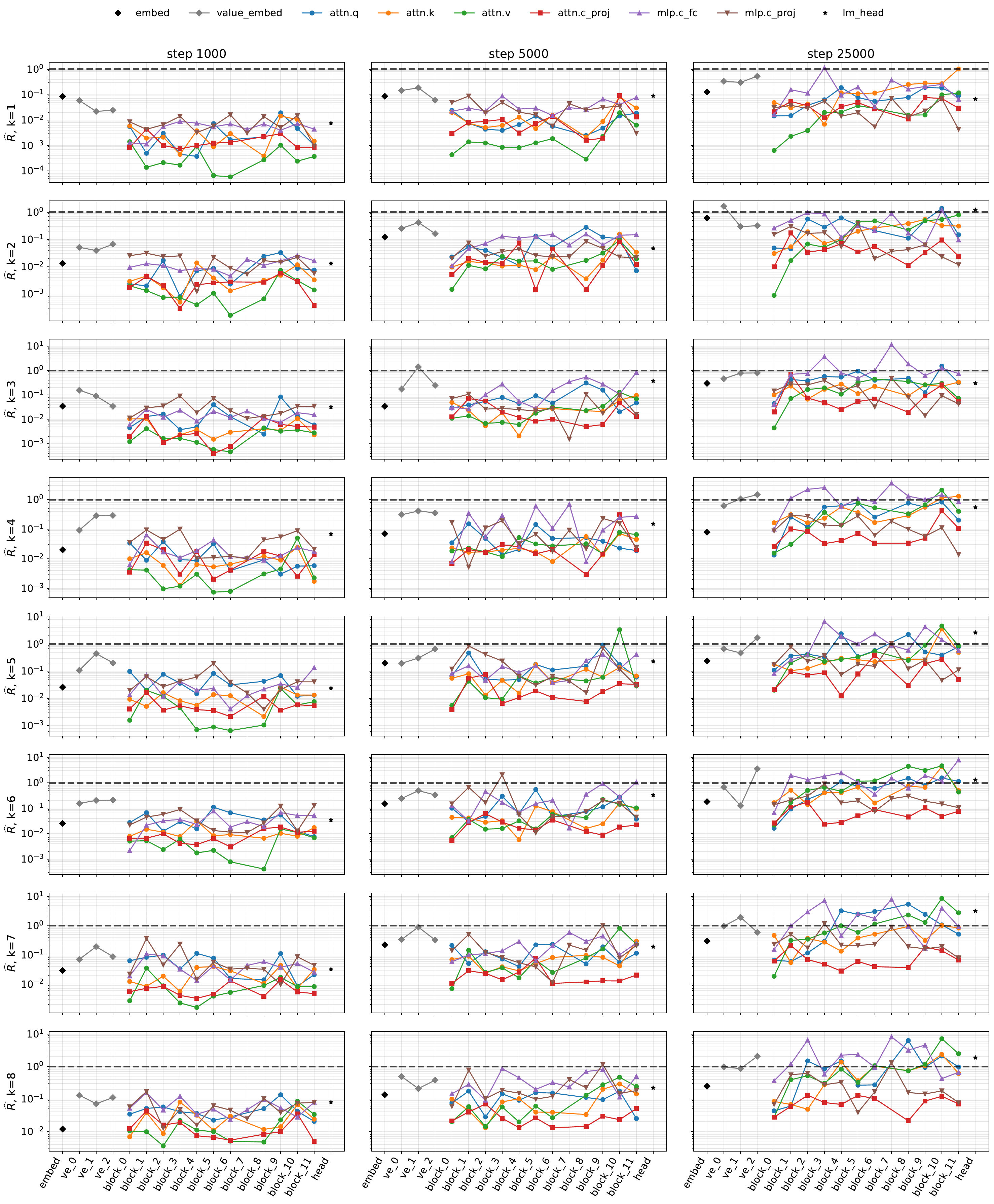}
    \caption{Localization ratio $\widehat{R}(E_{\mathrm{real}})$ of real stochastic noise plotted across layers at three training stages. Each row corresponds to a different singular direction of the signal $G$, from the leading direction (top row) to the $8$th (bottom). The thick dashed line at $y=1$ marks the Gaussian null baseline, at which the noise is delocalized and matches the random-matrix prediction. As training progresses, $\widehat{R}(E)$ generally rises and exceeds $1$, with the effect strengthening as $k$ grows.}

    \label{fig:r_hat}
\end{figure}

\newpage

\begin{figure}[htp]
    \centering
    \includegraphics[width=0.98\linewidth]{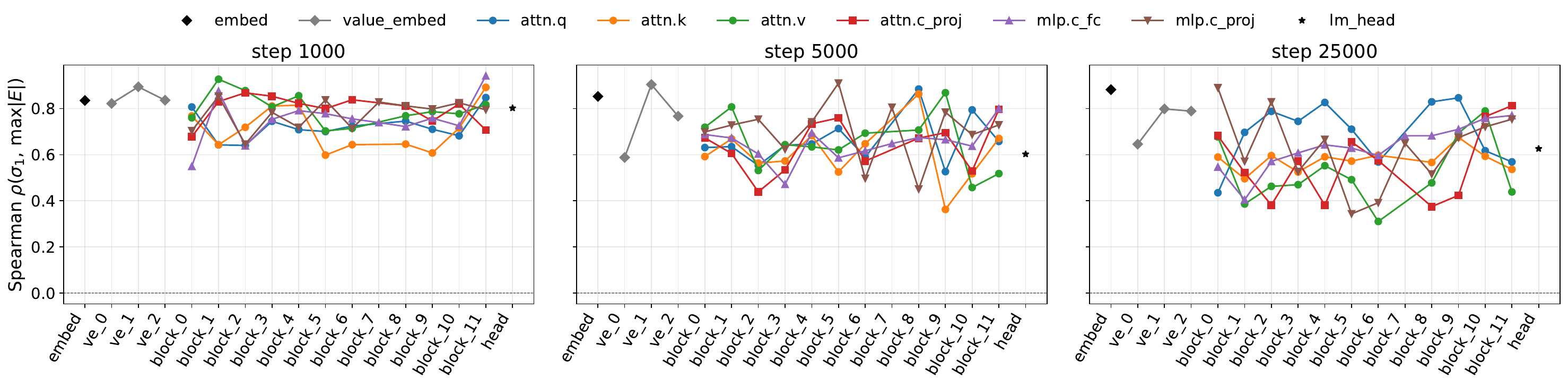}
    \caption{Correlation between the top singular value and the largest entry of the stochastic noise.}
    \label{fig:corr}
\end{figure}

\begin{figure}[htp]
    \centering
    \includegraphics[width=0.98\linewidth]{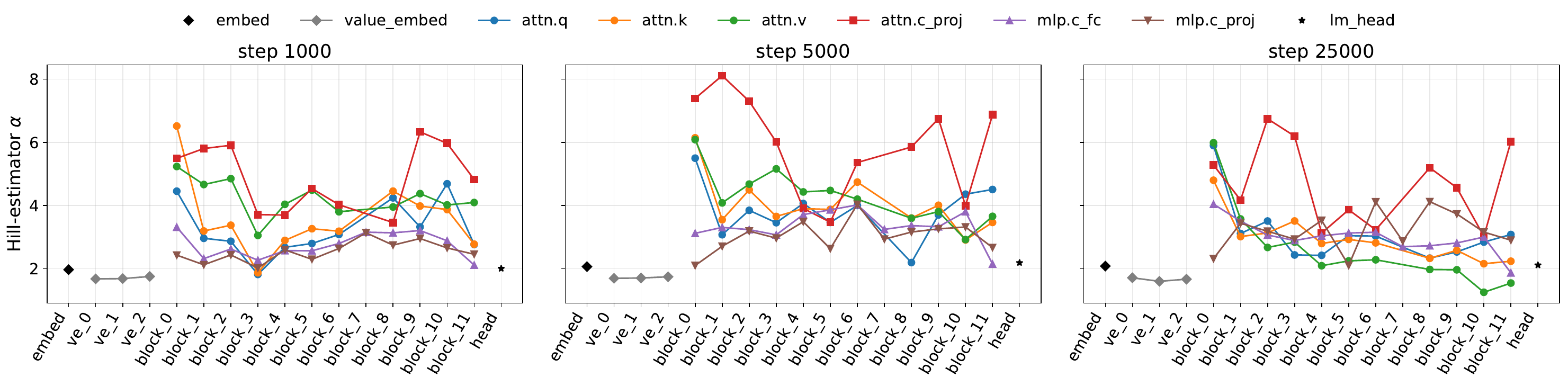}
    \caption{Hill tail-index estimator for stochastic noise across different layers at different stages. Layers will become more heavy-tailed as the training progresses.}
    \label{fig:hill}
\end{figure}

\newpage
\clearpage

\end{document}